\definecolor{cadmiumorange}{rgb}{0.93, 0.53, 0.18}
\definecolor{emerald}{rgb}{0.31, 0.78, 0.47}
\definecolor{amaranth}{rgb}{0.9, 0.17, 0.31}
\definecolor{candypink}{rgb}{0.89, 0.44, 0.48}
\definecolor{caribbeangreen}{rgb}{0.0, 0.8, 0.6}
\definecolor{cornflowerblue}{rgb}{0.39, 0.58, 0.93}
\definecolor{mayablue}{rgb}{0.21,0.49,0.74}
\definecolor{limegreen}{rgb}{0.2, 0.8, 0.2}
\newcommand{\wrong}{\textcolor{red}{\usym{2717}}}
\newcommand{\coloright}{\textcolor{green}{\usym{2713}}}
\newcommand*\samethanks[1][\value{footnote}]{\footnotemark[#1]}
\begin{document}

\addtocontents{toc}{\protect\setcounter{tocdepth}{0}}

\title{Watermarks in the Sand: Impossibility of Strong Watermarking for Generative Models}

\author[1]{Hanlin Zhang}
\author[1]{Benjamin L. Edelman\thanks{Equal contribution.}}
\author[2]{Danilo Francati\samethanks[1]}
\author[3]{\\ Daniele Venturi} 
\author[2]{{Giuseppe Ateniese}}
\author[1]{Boaz Barak}

\affil[1]{Harvard University, Cambridge, MA, USA}
\affil[2]{George Mason University, Fairfax, VA, USA}
\affil[3]{Sapienza University of Rome, Rome, Italy}

\date{}
\doparttoc
\faketableofcontents

\maketitle

\newcommand{\TODO}[1]{\textcolor{red}{\textbf{TODO:} #1}}
\newcommand{\DFnote}[1]{\textcolor{cyan}{\textbf{Danilo:} #1}}
\newcommand{\GAnote}[1]{\textcolor{red}{\textbf{Giuseppe:} #1}}
\newcommand{\BBnote}[1]{\textcolor{green}{\textbf{Boaz:} #1}}
\newcommand{\DVnote}[1]{\textcolor{blue}{\textbf{Daniele:} #1}}
\newcommand{\hanlin}[1]{\textcolor{orange}{\textbf{Hanlin:} #1}}
\newcommand{\BEnote}[1]{\textcolor{brown}{\textbf{Ben:} #1}}

\newtheorem{theorem}{Theorem}

\theoremstyle{definition}
\newtheorem{definition}{Definition}
\newtheorem{corollary}{Corollary}
\newtheorem{lemma}{Lemma}
\newtheorem{remark}{Remark}

\crefname{construction}{construction}{constructions}
\Crefname{construction}{Construction}{Constructions}
\Crefname{claim}{Claim}{Claims}
\Crefformat{claim}{#2Claim~(#1)#3}
\Crefrangeformat{claim}{Claims~(#3#1#4) to~(#5#2#6)}

\newcommand{\Prob}[1]{\prob{#1}}
\renewcommand{\bigO}{O}
\newcommand{\getsr}{\overset{r}{\gets}}
\renewcommand{\minentropy}{\mathbb{H}_{\infty}}

\renewcommand{\set}[1]{\mathcal{#1}}
\newcommand{\cX}{\set{X}}
\newcommand{\cY}{\set{Y}}
\newcommand{\cK}{\set{K}}
\newcommand{\cM}{\set{M}}
\newcommand{\cQ}{\set{Q}}
\newcommand{\cQuery}{\cQ}
\newcommand{\cR}{\set{R}}
\newcommand{\cT}{\set{T}}
\newcommand{\cP}{\set{P}}
\newcommand{\cV}{\mathcal{V}}
\newcommand{\cE}{\mathcal{E}}

\newcommand{\classModel}{\{\model_i:\cX\rightarrow\cY\}}

\newcommand{\rv}[1]{\mathbf{#1}}
\newcommand{\rvX}{\rv{X}}
\newcommand{\rvY}{\rv{Y}}
\newcommand{\rvG}{\rv{G}}
\newcommand{\game}{\rv{G}}
\newcommand{\rvM}{\rv{M}}

\newcommand{\alg}[1]{\mathsf{#1}}
\newcommand{\model}{\alg{M}}
\newcommand{\wmodel}{\hat{\model}}
\newcommand{\watermark}{\alg{Watermark}}
\newcommand{\detect}{\alg{Detect}}
\newcommand{\responseModel}{\overline{\alg{M}}}
\newcommand{\quality}{\mathsf{Q}}
\renewcommand{\kgen}{\mathsf{KGen}}

\newcommand{\adversary}{\alg{A}}
\newcommand{\adv}{\adversary}
\newcommand{\distinguisher}{\alg{D}}
\newcommand{\dist}{\distinguisher}
\newcommand{\simulator}{\alg{S}}
\renewcommand{\sim}{\simulator}
\newcommand{\extractor}{\alg{E}}
\newcommand{\ext}{\extractor}
\newcommand{\oracle}{\mathsf{O}}
\newcommand{\perturbationOracle}{\mathsf{P}}
\newcommand{\pO}{\perturbationOracle}

\renewcommand{\deg}{\mathsf{deg}}

\newcommand{\msg}{m}
\newcommand{\cipher}{c}
\renewcommand{\k}{\mathsf{k}}
\newcommand{\prompt}{p}
\newcommand{\done}{\mathsf{done}}
\newcommand{\graph}{{\mathsf{G}}}
\newcommand{\graphTuple}{\graph = (\cV,\cE)}
\renewcommand{\matrix}[1]{\vec{#1}}
\renewcommand{\vector}[1]{\vec{#1}}
\newcommand{\eigenvalue}{\alpha}
\newcommand{\ctr}{\mathsf{ctr}}
\newcommand{\weight}{\mathsf{weight}}

\newcommand{\R}{\mathbb{R}}

\newcommand{\epos}{\epsilon_{\text{\sf pos}}}

\newcommand{\eneg}{\epsilon_{\text{\sf neg}}}

\newcommand{\edist}{\epsilon_{\text{\sf dist}}}

\newcommand{\citeabstract}[1]{\citeauthor{#1}~(\citeyear{#1})}

\begin{abstract}
\emph{Watermarking} generative models consists of planting a statistical signal (watermark) in a model's output so that it can be later verified that the output was generated by the given model.
A \emph{strong} watermarking scheme satisfies the property that a computationally bounded attacker cannot erase the watermark without causing significant quality degradation.
In this paper, we study the (im)possibility of strong watermarking schemes. 
We prove that, under well-specified and natural assumptions, {\em strong} watermarking is \emph{impossible to achieve}.
This holds even in the \emph{private detection algorithm} setting, where the watermark insertion and detection algorithms share a secret key, unknown to the attacker.
To prove this result, we introduce a \emph{ generic} efficient watermark attack; the attacker is not required to know the private key of the scheme or even which scheme is used.

Our attack is based on two assumptions: (1) The attacker has access to a ``quality oracle'' that can evaluate whether a candidate output is a high-quality response to a prompt, and (2) The attacker has access to a ``perturbation oracle'' which can modify an output with a nontrivial probability of maintaining quality, and which induces an efficiently mixing random walk on high-quality outputs.
We argue that both assumptions can be satisfied in practice by an attacker with weaker computational capabilities than the watermarked model itself, to which the attacker has only black-box access.
Furthermore, our assumptions will likely only be easier to satisfy over time as models grow in capabilities and modalities. 

We demonstrate the feasibility of our attack by instantiating it to attack three existing watermarking schemes for large language models: \citeabstract{kirchenbauer2023watermark}, \citeabstract{kuditipudi2023robust}, and \citeabstract{zhao2023provable}, as well as those for vision-language models \citeabstract{fernandez2023stable} and \citeabstract{shield2023ivw}. 
The same attack successfully removes the watermarks planted by all schemes, with only minor quality degradation.\footnote{Project website and demo at {\href{https://hanlin-zhang.com/impossibility-watermarks/}{\textcolor{mayablue}{\texttt{https://hanlin-zhang.com/impossibility-watermarks}}}}.}

\end{abstract}

\section{Introduction}\label{sec:intro}

The advent of powerful generative models such as large language models (LLMs) \citep{brown2020language, schulman2022chatgpt, openai2023gpt4} and text-to-image models \citep{radford2021learning, openai2023dalle3} has ushered in a new era where machines can be prompted to answer questions, draft documents, generate images in various styles, write executable code, and more. 
As these models become increasingly widely deployed and capable, there is growing concern that malicious actors could misrepresent model outputs as human-generated content \citep{clark2021all}. To prevent misuse at scale---e.g., misinformation \citep{nytimes2023disinfo}, automated phishing \citep{hazell2023large}, and academic cheating \citep{kasneci2023chatgpt}---there has been a demand for algorithmic methods that can distinguish between content produced by models and by humans \citep{westerlund2019emergence}. 
Some solutions \citep{mitchell2023detectgpt, tian2023gptzero} aim to detect the output of a given model or family of models without modifying the model's generation process at all, but these tend to suffer from high error rates \citep{kirchenbauer2023reliability}. The idea of \emph{watermarking schemes} for generative models is to alter the inference procedure to plant identifiable statistical signals (watermarks) into the model outputs \citep{kirchenbauer2023watermark, kuditipudi2023robust,zhao2023provable, kirchenbauer2023reliability,christ2023undetectable, fernandez2023stable, aaronson2022ea}. Because these schemes can intervene in the generation process, they are potentially more powerful than classical digital watermarking schemes \citep{o1996watermarking, o1997rotation}, which add imperceptible watermarks to individual given pieces of content. 

A watermarking scheme consists of a \emph{generation} algorithm that is a modified version of the model in which the signal is planted and a \emph{detection} algorithm that can detect whether a piece of output came from the watermarked model.
Watermarking schemes can be partitioned onto at least two different axes:
\begin{itemize}
\item\textbf{Public versus private:} A \textbf{public} watermarking scheme is one where the detection algorithm is accessible to all parties.
A \textbf{private} (or \emph{secret-key}) watermarking scheme is one in which running the detection algorithm requires some private information.
\item\textbf{Strong versus weak:} A \textbf{strong} watermarking scheme is one where a (computationally bounded) attacker cannot modify the output (e.g., rephrase the text, apply a filter to the image, etc.) to remove the watermark without causing significant quality degradation. 
A \textbf{weak} watermarking scheme only resists removal by a well-specified set of transformations.
At a minimum, the detection algorithm of a weak scheme must flag outputs that are simply ``copied and pasted''. More generally, it can ensure the detection of modified outputs as long as the modification is close to the original according to some metric such as edit distance for text, $\ell_1$ norm for images. 
Weak watermarks can still be useful for applications like preventing AI-generated content from being used for training \citep{shumailov2023curse}, making it more expensive or inconvenient to generate misinformation or cheat on assignments, and tracking the provenance of the precise text/image/etc, which the watermark was applied to.
But they will not foil a determined attacker.
\end{itemize}

In this paper, we focus on \emph{strong} watermarking and hence drop the ``strong'' modifier from this point on.
(See \Cref{sec:watermarking} for the formal definition.)
Our main result is negative: Under mild assumptions (which we specify below), strong watermarking of generative models is \emph{impossible}.
This holds even in the more challenging (for the attacker) \emph{secret-key} watermarking setting, where the adversary cannot access the watermarking algorithm.

Our assumptions already hold today in several settings, and we argue that they will become only more likely as models grow in both capabilities and modalities. 
The impossibility result is constructive: we design a generic attack methodology that can remove any watermark, given the assumptions.
Our attacker algorithm does not need access to the non-watermarked model or to any model with similar capabilities; the attack can be instantiated with only black-box access to the watermarked model, and white-box use of much weaker open-source models.
To give a ``proof of concept'', we instantiate an implementation of the attack and use it to successfully remove LLM watermarks planted by the schemes of \citet{kirchenbauer2023watermark}, \citet{kuditipudi2023robust}, and \citet{zhao2023provable} while maintaining text quality as judged by GPT4~\citep{openai2023gpt4}.

\begin{figure}[ht]
    \centering
    \includegraphics[width=0.8\textwidth]{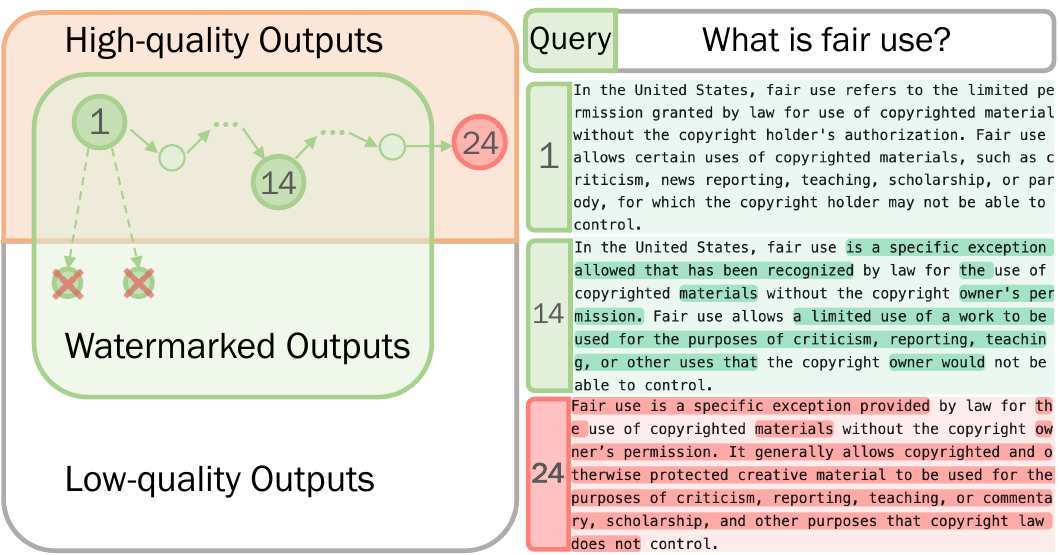}
    
    \caption{\small An outline of our quality-preserving random walk attack schema (The differences with original watermarked text are highlighted.). We consider the set of all possible outputs and within it the set of all high-quality outputs (with respect to the original prompt). For any quality-preserving watermarking scheme with a low false-positive rate, the set of watermarked outputs (\textcolor{caribbeangreen}{green}) will be a small subset of the high-quality output (\textcolor{cadmiumorange}{orange}). 
    We then take a random walk on the set of high-quality outputs to arrive at a non-watermarked output (\textcolor{candypink}{red}) by generating candidate neighbors through the perturbation oracle and using the quality oracle to reject all low-quality candidates. 
    We instantiate our attack for text as follows: given a watermarked text, at each iteration, the malicious user can generate span substitutions using a small masked LM, while making sure the response quality with respect to the user query does not decrease according to a quality oracle such as GPT-3.5 or a reward model.}
    \label{fig:teaser}
\end{figure}

\paragraph{Our assumptions in a nutshell.} Consider a generative model $\model$ that takes as input a prompt $x\in \cX$ to generate an output $y\in \cY$ according to some probability distribution. Suppose that $y$ was watermarked in some way, and we consider a watermark-removing adversary $\adversary$. Our starting point is the following simple but powerful observation: $\adversary$'s goal is \emph{not} to find a non-watermarked $y'$ that is semantically equivalent to $y$; rather, it is sufficient for $\adversary$ to find a non-watermarked $y'$ that has \emph{equivalent quality} to $y$ as a response to the prompt. For example, if $x$ was a prompt to write an essay on some topic, and $y$ is a watermarked essay, then $\adversary$ does not need to find a rephrasing of $y$: it is enough to find another essay that would get the same grade. Given the above, we believe that the watermarking task should be phrased with respect to a prompt-dependent  \emph{quality function} $\quality:\cX\times \cY \rightarrow [0,1]$ that on input a prompt $x$ and response $y$ returns a grade that captures the quality of $y$ \emph{as a response to the prompt} $x$.
Our assumptions are the following (see \Cref{sec:assumptions} for formal statements):

\begin{itemize}
    \item \textbf{Quality oracle:} the attacker has access to a ``quality oracle'' that enables it to efficiently compute $\quality$ on input pairs $(x,y)$ of its choice. The quality oracle only needs to be able to discern quality up to the quality level of the outputs produced by $\model$.

    \item \textbf{Perturbation oracle:} the attacker has access to a randomized  ``perturbation oracle`` $\pO$, such that on an input response $y \in \cY$ and its corresponding prompt $x \in \cX$, $\pO(x,y)$ is a random perturbed response $y'$ (of $x$) which, roughly speaking, approximately satisfies the following conditions: (1) the probability that $\quality(x,y') \geq \quality(x,y)$ is bounded away from zero, (2) the random walk that the oracle $\pO$ induces on the space of high-quality outputs has good mixing properties. (See \Cref{sec:impossibility} and~\Cref{app:extended-impossibility-result} for formal conditions.)  
\end{itemize}

We claim that the two assumptions typically hold in practice. A proof of concept that a sufficiently discerning quality oracle will exist is that the \emph{watermarked model itself} can be used as a quality oracle. We can simply prompt the model to rate the quality of $y$ as a response to $x$. By the heuristic that ``verification is easier than generation'', if a model is powerful enough to generate high-quality outputs, then it should also be powerful enough to check their quality. 
It is likely that \emph{multi-modal generative models}~\citep{alayrac2022flamingo, reed2022generalist, openai2023dalle3} will allow this argument to extend beyond text to other modalities such as images and audio. There are already existing quality metrics in some of these domains~\citep{salimans2016improved, heusel2017gans, hessel2021clipscore, wu2023human, xu2024imagereward}. In general, as models become more capable, the quality oracle assumption can be made stronger.

Regarding the perturbation oracle, note that the perturbation can be quite minor--- for instance, replacing a masked span of a few tokens. The new tokens can be sampled completely at random (in which case the efficiency of the attack suffers with large vocabulary size) or, more efficiently, by resampling the tokens using an open-source (non-watermarked) masked language model, which can be significantly weaker than $\model$. We use the latter approach in our experiments. More sophisticated algorithms are possible: for instance, a second `harmonization' phase could be added in which tokens outside the span are resampled using $\model$. Regardless of the implementation, the underlying principle behind any perturbation oracle is that there is a cloud of high-quality responses that are accessible starting from $y$ through the accumulation of incremental quality-preserving modifications. Watermarking schemes themselves typically rely on the connectivity of large portions of high-quality output space. At one extreme, if the prompt $x$ uniquely defines a single high-quality output (for example, if the prompt asks for the canonically formatted answer to a mathematical problem for which only one solution exists) then the response cannot be watermarked in the first place.

\subsection{Theoretical Result}

Our main theoretical result is the following:

\begin{theorem}[Main result, informal] \label{thm:maininformal} For every (public or secret-key) watermarking setting satisfying the above assumptions, there is an efficient attacker that given a prompt $x$ and (watermarked) output $y$ with probability close to one, uses the quality and perturbation oracles to obtain an output $y'$ such that (1) $y'$ is \emph{not} watermarked with high probability and (2) $\quality(x,y') \geq \quality(x,y)$.
\end{theorem}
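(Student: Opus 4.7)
The plan is to construct the attacker as a rejection-Metropolis random walk on the space of high-quality responses, then use the mixing assumption on the perturbation oracle together with the low false-positive rate of the watermarking scheme to argue that polynomially many steps suffice to land on a non-watermarked output.

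First, I would define the attacker $\adversary$ concretely. Starting from $y_0 = y$, at each step $t$, it samples a candidate $y' \getsr \pO(x, y_t)$, queries the quality oracle to compute $\quality(x,y')$, and sets $y_{t+1} := y'$ if $\quality(x,y') \geq \quality(x,y)$ and $y_{t+1} := y_t$ otherwise. After $T$ iterations for an appropriate polynomial $T$, $\adversary$ outputs $y_T$. By construction, condition (2) of the theorem --- that $\quality(x, y_T) \geq \quality(x, y)$ --- is satisfied deterministically. So the whole task reduces to showing condition (1): with probability close to one, $y_T$ is not watermarked.

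Next I would analyze the induced Markov chain. Let $H \subseteq \cY$ be the set of responses $y' \in \cY$ with $\quality(x, y') \geq \quality(x,y)$; the process lives in $H$. The first part of the perturbation oracle assumption (the probability of a quality-preserving move is bounded away from zero) guarantees that every transition accepts with non-negligible probability, so the process is non-lazy enough to make progress. The mixing part of the assumption then guarantees that for some polynomially-bounded $T$, the distribution of $y_T$ is statistically close to a stationary distribution $\pi$ on $H$. At this point, the crucial connection with the watermarking scheme enters: since the scheme must have a low false-positive rate with respect to natural (non-watermarked) high-quality outputs, the set $W \subseteq H$ of responses that the detector flags as watermarked must occupy only a small fraction of the $\pi$-mass of $H$ --- otherwise the detector would be flagging too many legitimate non-watermarked responses as watermarked. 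Hence $\pi(W)$ is small, and by the mixing bound $\Prob{y_T \in W}$ is small as well.

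The main obstacle I expect is the bridge between two distinct notions of ``smallness'' of the watermarked set $W$: the false-positive rate of the watermarking scheme is defined with respect to the natural output distribution of a non-watermarked generator for prompt $x$, while our attacker lands on $y_T$ drawn (approximately) from the stationary distribution $\pi$ of the $\pO$-induced walk on $H$. To close this gap, one would need either (i) to incorporate into the perturbation-oracle assumption that $\pi$ is comparable (e.g., within polynomial density ratio) to the natural high-quality distribution, or (ii) to restate the watermarking definition so that the false-positive guarantee holds pointwise over high-quality outputs, or with respect to $\pi$ itself. I would expect the formal version in \Cref{sec:impossibility} to encode one of these choices explicitly so that the chain of inequalities $\Prob{y_T \in W} \leq \pi(W) + \text{(mixing error)} \leq \text{(false-positive rate)} + \text{negl}$ goes through cleanly. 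The remaining steps --- setting $T$ to be the mixing time times a small overhead, and union-bounding over the two error sources --- are routine once this correspondence is fixed.
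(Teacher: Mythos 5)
Your proposal is essentially the paper's approach: a rejection-sampling random walk on the set of outputs of quality at least $\quality(x,y)$, mixing to a stationary distribution $\vector{\pi}$, and then arguing the watermarked set has small $\vector{\pi}$-mass. You also correctly isolated the pivotal issue---that the false-positive rate is defined against one distribution while the walk lands on another---and your ``option (ii)'' is exactly what the paper does. \Cref{def:false-positive-negative} defines the false-positive rate \emph{pointwise over every $y \in \cY$}, with the probability taken over the \emph{random key} $\k$ alone. Since $\vector{\pi}$ is determined solely by $\perturbationOracle$ and is independent of the key sampled by $\watermark$, one gets $\mathbb{E}_{y' \sim \vector{\pi}}\left[\Pr_\k\left[\detect_\k(x,y')=1\right]\right] \le \epos$ directly, with no need for comparability between $\vector{\pi}$ and any ``natural'' output distribution.

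Two details you gloss over that matter for the quantitative bound: (1) the mixing assumption is only imposed on the graph $\graph^{\geq q}_x$ for $q$ above a threshold $q_{\sf min}$ (the $v$-th quality percentile over random keys), so the argument requires the starting watermarked output to already have quality $\ge q_{\sf min}$, which introduces the $(1 - v/100)$ factor in \Cref{thm:impossibility-perturbation}; and (2) because $\perturbationOracle$ is only $\epsilon_{\sf pert}$-preserving, the number of \emph{accepted} walk steps after $t$ proposals is a binomial random variable, and the paper bounds the probability of failing to accumulate enough valid steps with an explicit binomial tail (\Cref{lmm:binomial}), whereas you wave at this as ``non-lazy enough to make progress.'' Neither omission affects the soundness of the core argument; they just account for the concrete multiplicative losses in the final success probability $\epsilon$.
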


Our formal definition of watermarking schemes is given in \Cref{sec:watermarking}, and the formal statement of~\Cref{thm:maininformal} is given in~\Cref{thm:impossibility-simple} (in a simplified form), and fully in~\Cref{thm:impossibility-perturbation} of~\Cref{app:extended-impossibility-result}.
The main idea behind~\Cref{thm:maininformal} is simple: see~\Cref{fig:teaser}.
The adversary uses \emph{rejection sampling} to run a random walk on the set of high-quality outputs of a prompt $x$. 
That is, given the initial response $y'=y$; for $t=0,1,\ldots$, the adversary repeatedly samples $y_t \getsr \perturbationOracle(x,y')$ from the perturbation oracle and accepts $y'$ 
(i.e., sets $y' :=y_t$) if $\quality(x,y') \geq \quality(x,y)$. 
See \Cref{alg:attack} for pseudocode.
The adversary does not need to know any details of the watermarking scheme or the space of outputs beyond an upper bound on the mixing time of the random walk.

We note that our definitions of watermarking schemes assume that there is a statistical signal inserted randomly into the generative model output.
This is different from so-called ``AI detectors'' which can have deterministic detection algorithms.
Injecting randomness is necessary to provide bounds on the false-positive probability of detecting non-model generated outputs.
Indeed, so-called AI detectors suffer from multiple issues.
\citet{wu2023survey} surveyed known systems and concluded that existing detection methodologies do not reflect realistic settings, and their deployment may well cause harm.  
\citet{liang2023gpt} showed that several existing detectors are biased against non-native English writers.
OpenAI's FAQ for educators states that AI detectors do not work and suffer from high rates of false positives \citep{OpenAIFAQ2023}.

Finally, we emphasize that, unlike in typical cryptographic settings, our adversary is \emph{weaker} computationally than the watermarked model it is attacking, and only has access to it as a black box.
This only makes the impossibility result stronger.

\begin{table*}[ht]
\caption{Average results on three watermark schemes before and after our attack, as applied to Llama2-7B model. The GPT-4 judge score is obtained as the average of pairwise comparisons between the perturbed text and the original watermarked output. The score is $+1$ if GPT-4 strongly prefers the perturbed text, $-1$ if it strongly prefers the original, and zero otherwise. The reported value is the average score over hundreds of successfully attacked examples and random ordering of the comparands in the prompt.}
\centering
\vspace{0.1in}
\resizebox{0.73\textwidth}{!}{
\begin{tabular}{cccc}
\toprule \toprule
\multirow{2}{*}{Framework} & \multicolumn{2}{c}{C4 Real News} & \multirow{2}{*}{GPT-4 Judge} \\ \cline{2-3}
  & z-score & p-value & \\ \midrule
KGW \citep{kirchenbauer2023watermark} &  6.236 $\rightarrow$ 1.628 & 0.002 $\rightarrow$ 0.187 & -0.0877 \\
 Unigram  \citep{zhao2023provable} & 8.210 $\rightarrow$ 1.456 &  4.563e-11 $\rightarrow$ 0.208 & -0.0812 \\
 EXP \citep{kuditipudi2023robust} & 3.540 $\rightarrow$ 0.745 &  < 1/5000 $\rightarrow$ 0.3119 & -0.0675 \\
\bottomrule \bottomrule  
\end{tabular}
}
\label{tab:main_tab}
\end{table*}

\begin{figure*}[h]
\begin{minipage}[h]{\textwidth}
\centering
\includegraphics[width=.48\textwidth]{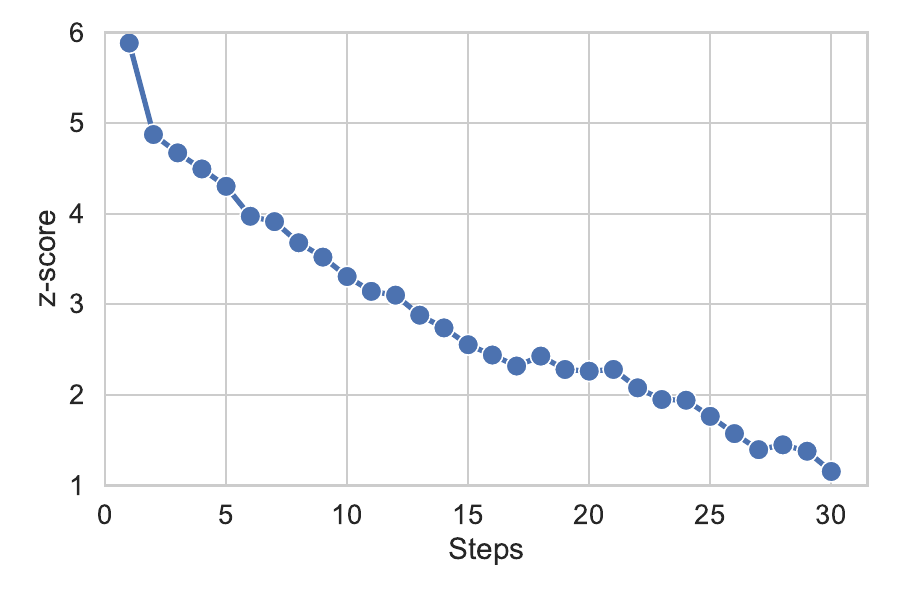} %
\label{fig:z_time}
\includegraphics[width=.48\textwidth]{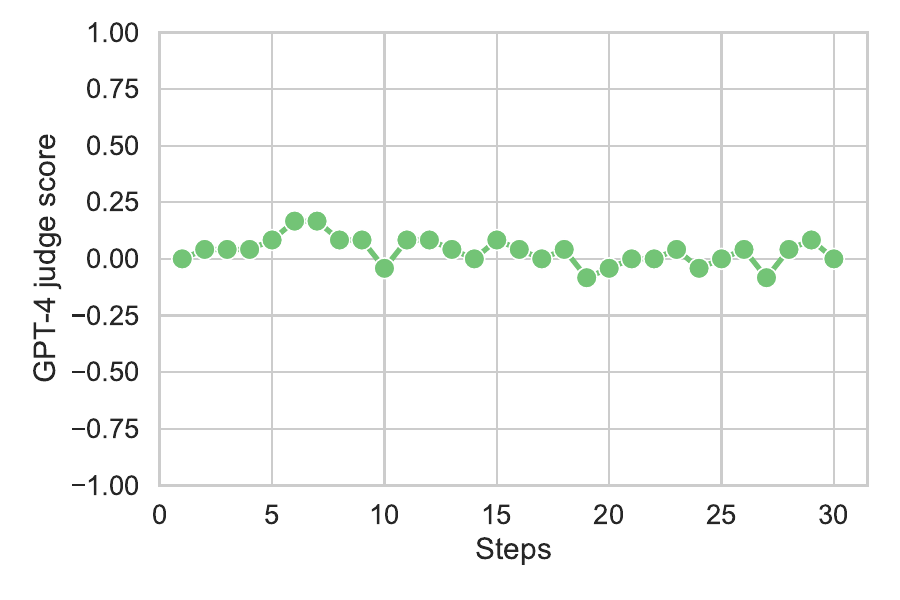}  %
\label{fig:judge_time}
\end{minipage}
\caption{
Detection and quality w.r.t. the number of perturbation steps using Llama2-7B with the KGW scheme \citep{kirchenbauer2023watermark}. Left: z-score (standard deviation deviation from the null hypothesis of non-watermarked content). Right: GPT-4 Judge score. Results are aggregated across 12 examples and the order of comparands.}
\label{fig:time}
\end{figure*}

\subsection{Experimental Results and Implementation of Our Attack}

As proof of concept, we implement instantiations of quality and perturbation oracles for the text modality, resulting in a practical attack against language model watermarking schemes. 
The attack is successful in removing watermarks from the three watermark schemes on which we tested it
\citep{kirchenbauer2023watermark, kuditipudi2023robust,zhao2023provable},\footnote{We selected these schemes because they have publicly available implementations. } see \Cref{tab:main_tab} and \Cref{fig:time}.

Specifically, show that with enough quality-preserving perturbations, we can degrade watermark average detection performance on a C4 \citep{raffel2020exploring} news completion task to a z-score below $1.645$ and a p-value greater than $0.05$, a standard threshold that entails no more than $5\%$ false positive rates.
In addition, although we use reward models and GPT-3.5 as our quality oracles during our attack, to ensure that the attack is not overfitting to these imperfect proxies, we also measure the quality of the perturbed output using GPT-4.
(We stress that GPT-4 is only used to measure the quality of the attack, and not in the attack itself.)
We see that while the detection probability steadily reduces, the quality score is generally stable (Figure~\ref{fig:time}). 
While the attack might not be the most efficient approach for these particular schemes, it has the advantage of being \emph{generic} and not varying 
based on the implementation details of each scheme. 
See \Cref{sec:experiment} for more details on the implementation and experimental results.

\subsection{A Thought Experiment}

We expect that with time, generative AI models would both be able to handle more complex prompts, as well as be able to generate outputs with more modalities.
As intuition for both our attack and our contention that increasing capabilities favor the watermark attacker, consider the following ``mental experiment'', inspired by \citet{valiant1985np}. 
We stress that this mental experiment is \emph{not} how our actual attack is implemented.
Consider a powerful chat-like model that can generate content based on highly complex prompts.
Suppose that the prompt \texttt{"Generate output $y$ satisfying the condition $x$"} has $N$ possible high-quality responses $y_1,\ldots,y_N$ and that some $\epsilon$ fraction of these prompts are watermarked.
Now consider the prompt \texttt{"Generate output $y$ satisfying the condition $x$ and such that $h(y) < 2^{256}/N$"} where $h$ is some hash function (which can be simple and non-cryptographic) mapping the output space into $256$ bits (which we identify with the numbers $\{1,\ldots,2^{256}\}$).
Since the probability that $h(y)<2^{256}/N$ is $1/N$, in expectation there should be a unique $i$ such that $y_i$ satisfies this condition.
Thus, \emph{if} the model is to provide a high-quality response to the prompt, then it would have no freedom to watermark the output.
An attacker could use binary search to find the value of $N$, as well as use random ``salt'' values for the hash function, to ensure a high probability of success.
We note that it is just a mental experiment. 
As of November 2023, current models are unable to evaluate even simple hash functions, and their performance deteriorates with additional conditions.
But we believe it does provide intuition as to how the balance in watermarking shifts from defender to attacker as model capabilities increase.

\begin{figure}[h]
\begin{algorithm}[H]
\caption{Pseudocode for our attack}\label{alg:attack}
\SetKwInput{KwInput}{Input}
\SetKwInput{KwOutput}{Output}
\KwInput{prompt $x$, watermarked response $y$, quality oracle $\quality$, perturbation oracle $\perturbationOracle$, random walk length $T$.}
\KwOutput{response $y'$ without watermark.}

$y' \leftarrow y$ \tcp*{initialize with the watermarked response}

\For{$t \leftarrow 1$ \KwTo $T$}{
    $y_t \leftarrow \perturbationOracle(x, y')$ \tcp*{apply perturbation}
    \If{$\quality(x, y_t) \ge \quality(x, y)$}{
         $y' \leftarrow y_{t}$ \tcp*{update if quality does not decrease} %
    }
}

\Return{$y'$ without watermark} \tcp*{return the de-watermarked response}
\end{algorithm}
\end{figure}

\section{Related Work}\label{sec:related}

\paragraph{Watermarks for generative models.}
Digital watermarking---embedding imperceptible but algorithmically detectable signals in data for the purpose of attributing provenance--has a long history (see \citet{cox2007digital} for a survey) across modalities such as images \citep{o1996watermarking, o1997rotation, cox2007digital, hayes2017generating, zhu2018hidden}, text \citep{atallah2001natural, atallah2002natural}, and audio \citep{boney1996digital,arnold2000audio}. Watermarking of generative model outputs, involving changing the generation process itself instead of simply adding watermarks to generated images post-hoc, is a more recent development which was first introduced in the context of image generators \citep{yu2021artificial} and then in the LLM setting \citep{aaronson2023watermarking, kirchenbauer2023watermark}. Several proposed schemes in the vision setting modify the models themselves through pre-training \citep{yu2021artificial} or fine-tuning \citep{fei2022supervised, zeng2023securing, zhao2023recipe, fernandez2023stable}; the scheme of \citet{wen2023tree} intervenes on a diffusion model's sampling process. Intervening on the sampling process (treating the neural network itself as a black box) is the standard approach for LLM schemes \citep{kirchenbauer2023watermark,kirchenbauer2023reliability,kuditipudi2023robust,christ2023undetectable,liu2023semantic,zhao2023provable}. The scheme in \citet{kirchenbauer2023reliability} boosts the probabilities of tokens from an adaptively chosen ``green'' subset of the vocabulary at each inference step. Follow-up works \citep{kirchenbauer2023reliability,liu2023semantic,zhao2023provable} improve the robustness of this scheme to bounded-edit distance attacks. 
The schemes of \citet{christ2023undetectable} and \citet{kuditipudi2023robust} guarantee the desirable property that the marginal distribution of outputs over the random choice of the secret key is the same as (or computationally indistinguishable from) the un-watermarked distribution, ensuring no quality degradation on average.
\citet{jaiden2023pub} embed a publicly-verifiable cryptographic signature into texts using rejection sampling.

\paragraph{Limitations of generative model watermarks.}
There have also been various recent works that attack watermarking schemes. In the vision domain, there are attacks \citep{zhao2023invisible, lukas2023leveraging,saberi2023robustness} that can erase watermarks for various watermarking schemes; relevant attacks typically involve adding noise to either the images themselves or latent representations, and/or performing some optimization procedure to remove the watermark. Some of these works prove that their attacks will succeed under strict assumptions about watermarks. For instance, the impossibility result of \citet{zhao2023invisible} is focused on classical schemes that apply a bounded-perturbation watermark post-hoc to individual images. Meanwhile, \citet{saberi2023robustness} propose a general attack--- diffusion purification--- for image watermark schemes with a low ``perturbation budget'', which is the stringent requirement that even for a single key, the distribution of watermarked outputs is close to the original distribution of outputs. They also demonstrate a ``model substitution'' attack against more general image watermarking schemes. This attack finds adversarial perturbations with respect to a proxy watermark detector. However, obtaining the proxy detector requires either white-box access to the detection algorithm or a large number of watermarked and non-watermarked samples. 
\citet{sadasivan2023can} and \citet{saberi2023robustness} prove limitations on post-hoc detectors when the distributions of human and AI-generated text are close.
In the LLM domain, \citet{kirchenbauer2023watermark, kirchenbauer2023reliability, kuditipudi2023robust} empirically study the viability of various attacks against the schemes they propose. 
Notably, in \emph{paraphrasing attacks} (which can be implemented using language models, translation systems, or manually by humans) the attacker rewords the watermarked output while preserving its meaning. The above works find that paraphrases often leave some substrings of the text intact, or degrade text quality, and thus have mixed success on schemes that are robust to bounded edit distance attacks. We emphasize that our attack does not preserve the semantics of the output, and is therefore less limited than paraphrasing attacks. A watermarking impossibility result has also been claimed in the LLM domain by \citet{sato2023embarrassingly}; however, their definition assumes security against an unbounded-time adversary that can sample from the distribution of the original unwatermarked generative model (see~\cite[Appendix B]{sato2023embarrassingly}).

\paragraph{Related settings.}
Parallel to the line of work on watermarking generative models, there have been various proposed schemes for post-hoc detection of generative model outputs \citep{zellers2019defending,mitchell2023detectgpt,tian2023gptzero,wang2023bot,verma2023ghostbuster,tian2023gptzero,chakraborty2023possibilities,gehrmann2019gltr,wu2023survey}. When there is no watermark in the data, detecting whether it is generated by a given model is a harder problem, and there are fundamental limitations to these schemes. Post-hoc detection needs to rely on empirical differences between the distributions of generative model outputs and natural/human outputs. As generative models become better at modeling their training distributions, this detection problem can become increasingly difficult; and since adversaries can adapt their outputs to thwart these detectors, there cannot be error rate guarantees in this setting. Empirically, paraphrasing attacks work well against post-hoc detection algorithms for LLMs \citep{krishna2023paraphrasing,kirchenbauer2023reliability}.
Distinct from watermarking schemes that aim to make every output of a generative model detectable, there are also schemes that aim to watermark the weights of the network for the purpose of intellectual property protection, or enable a party to detect a signature in a given network given adaptive black-box access to the network \citep{uchida2017embedding,zhang2018protecting,rouhani2018deepsigns,bansal2022certified}.
Finally, we note that watermarking is a special case of steganography, which is the more general practice of planting hidden information in data. This has also been studied in the context of generative models \citep{fang2017generating, yang2018rnn, ziegler2019neural, tancik2020stegastamp}.

\vspace{-1mm}
\section{Secret-key Watermarking Schemes for Generative Models}\label{sec:watermarking}

In this section, we formalize the notion of a (strong) watermarking scheme for generative models.
First, we begin with the formalization of generative models. Then, we introduce the concept of secret-key watermarking schemes.
We refer the reader to~\Cref{sec:notation} for the notation used.

\subsection{Generative Models}\label{sec:generative-models}
Generative models are randomized algorithms that produce a response $y\in\cY$ (e.g., an image or text) in response to a given input prompt $x\in\cX$ (e.g., a question or a description of an image).
\begin{definition}[Generative models]\label{def:gen-model}
    A \emph{conditional generative model} $\model:\cX \rightarrow \cY$ is a randomized efficient algorithm that, given a prompt $x\in\cX$, produces an output $y \in \cY$.\footnote{A conditional generative model $\model$ is considered efficient if its running time is polynomial in the length of the prompt $x\in \cX$.}
    We call $\cX$ the prompt space, $\cY$ the output space, and $y \getsr \model(x)$ the $\model$'s response to a prompt $x \in \cX$.
\end{definition}

To study watermarking formally, we associate with a generative model $\model$ a \emph{quality function} $\quality$ that assigns a score $0\leq \quality(x,y) \leq 1$ to a response $y$ (generated by $\model$) for a given prompt $x$.
The quality function serves as a measure of the ``quality'' of the generative model's response. Moreover, the quality $\quality$ must be ``universal'' in the sense that it can assign a score even to a response $y$ not generated by the corresponding $\model$.
For instance, $y$ might be a response crafted by a human. In essence, $\quality$ should be objective and not contingent upon the specific generative model under consideration.
\begin{definition}[Quality function]\label{def:quality}
    A {\em quality function} $\quality$ is a deterministic function $\quality:\cX \times\cY \rightarrow [0,1]$ that assigns a score $\quality(x,y) \in [0,1]$ to the response $y\in\cY$ for a given prompt $x \in\cX$.
    We say a quality function $\quality$ is associated with a conditional generative model $\model:\cX \rightarrow \cY$ if the response quality of $\model$ is evaluated using $\quality$. That is, $\quality(x,\model(x))$ is the score assigned to $\model$'s response $\model(x)$ for a prompt $x\in \cX$.
\end{definition}

\subsection{Secret-Key Watermarking Schemes}\label{sec:definitions-watermarking}

In this section we formally define watermarking schemes.
Because our focus is a negative result, we focus on defining \emph{secret-key} watermarking schemes.
These are easier to construct, and hence ruling them out makes our impossibility result stronger.

\begin{definition}[Secret-key watermarking scheme] \label{def:watermarking}
A {\em secret-key watermarking scheme} for a class of generative models $\cM = \classModel$ with a key space $\cK$ consists of the following efficient algorithms:\footnote{The watermarking scheme is considered efficient if both $\watermark$ and $\detect$ run in polynomial-time in the input size.}
\begin{description}
    \item[$\watermark(\model)$:] Given a generative model $\model \in \cM$, this randomized watermarking algorithm outputs a secret key $\k\in\cK$ and a watermarked generative model $\model_\k:\cX \rightarrow \cY$, dependent on $\k$.
    
    \item[$\detect_\k(x,y)$:] Accepting a secret key $\k \in \cK$, a prompt $x \in \cX$, and an output $y \in \cY$, this deterministic detection algorithm returns a decision bit $b \in \bin$, where $1$ indicates the presence of the watermark, and $0$ indicates its absence.
\end{description}
\end{definition}

\noindent \Cref{def:watermarking} has several aspects that make it easier to realize such schemes: $(i)$ we focus on \emph{secret-key} watermarking schemes, where the detection algorithm $\detect$ uses the same secret key $\k$ that embeds the watermark; $(ii)$ we adopt \emph{prompt-conditional detection}, meaning the $\detect$ algorithm also receives the prompt $x$; and $(iii)$ we grant the watermarking algorithm $\watermark$ \emph{non-black-box} access to the model. In other words, the watermarked version $\model_\k$ of $\model$ could be derived by utilizing the internal operations of $\model$.\footnote{For instance, the watermarking scheme might exploit the internal parameters of $\model$ during the watermarking phase.}
These choices simplify the development of the watermarking scheme. Thus, our chosen relaxations only reinforce our impossibility results presented in \Cref{sec:impossibility}.

\paragraph{Properties of watermarking schemes.}
\Cref{def:watermarking} captures only the syntactic conditions for watermarking.
Next, we define the properties of a watermarking scheme, which will be used later to prove our impossibility result.
Primarily, we aim to measure the ``false negative'' and ``false positive'' rates.
The former refers to the proportion of $\model_\k$'s responses that are {\em incorrectly} identified as ``un-watermarked'', i.e., $\detect_{\k}(x,\model_{\k}(x))=0$.
The latter represents the probability of outputs $y \in \cY$ (not produced by $\model_\k$) that are still labeled as ``watermarked''.
The formal definitions are provided below.\footnote{As mentioned in \Cref{sec:intro}, we consider the case of watermarking schemes with controlled false positive probability taken over the choice of the injected randomness (i.e., key) and do not consider deterministic ``AI detectors''.}

\begin{definition}[False negative and false positive $\epsilon$-rates]\label{def:false-positive-negative}
Let $\epos, \eneg > 0$ and $\Pi=(\watermark, \detect)$ be a secret-key watermarking scheme for a class of generative models $\cM = \classModel$.
\begin{description}
     \item[False negative $\eneg$-rate:] $\Pi$ has a \emph{false negative $\eneg$-rate} if, for every model $\model\in \classModel$ and every prompt $x \in \cX$, $\Pr[ \detect_{\k}(x,\model_{\k}(x))=0 ] \leq \eneg$. The probability is over the random coins of $\model_\k$ and the pair $(\k,\model_\k)$ output by $\watermark(\model)$.
    \item[False positive $\epos$-rate:] $\Pi$ has a \emph{false positive $\epos$-rate} if, for every model $\model\in \classModel$, for every prompt $x \in \cX$, and for every output $y \in \cY$, $\Pr[ \detect_{\k}(x,y)=1 ] \leq \epos$. The probability is over the pair $(\k,\model_\k)$ output by $\watermark(\model)$.
\end{description}
The rates above might be influenced by $x\in\cX$ and $y \in \cY$, for instance, they may diminish with the length of the output.    
\end{definition}

\noindent Broadly, a low false negative $\eneg$-rate indicates that the watermarking scheme is functioning as anticipated. This means there is a significant probability that the detection algorithm outputs $1$ when $y$ is derived from the watermarked model.
Conversely, small false positive $\epos$-rate implies there is a small likelihood for $\detect_\k(x,y)=1$ when $\k$ is sampled independently of $(x,y) \in \cX$.
Introducing a random key $\k$ ensures that we can bound the false positive rate without needing to make assumptions on the unknown human-generated data distribution.
Practical watermarking schemes typically have a tunable hyperparameter (such as a $z$-score threshold or $p$-value) that enables trading in the false positive and false negative rate.
We remark that for many applications, watermarking schemes require a very low false positive rate. For example, if professors routinely use watermark detectors for testing for cheating in hundreds or thousands of problem-set or essay submissions, then unless $\epos$ will need to be very small, there would be a high chance of students being falsely blamed.

\paragraph{Security of watermarking schemes.}
\Cref{def:false-positive-negative} is insufficient for ensuring security as it only characterizes the basic properties that a watermarking scheme should possess.
For instance, these definitions might be met by watermarking an output by appending the sentence ``This text was generated by an AI model.'' to the end of the LLM's response.\footnote{Here, we are implicitly assuming that the appended sentence does not significantly degrade the quality of the response.}
Regrettably, such an approach lacks robustness since anyone can effortlessly remove the watermark by deleting the appended sentence.
This consideration leads us to the need to define security. Specifically, the watermark should be \emph{robust} against adversarial attempts at removal.
The adversary's objective is to strip the watermark from $y$ (i.e., evade detection) while maintaining the quality (as assessed by the quality function $\quality$) near the original response $y$ level. 
This requirement is crucial; otherwise, any arbitrary response could constitute a valid erasure attack.
Given our aim is to establish an impossibility result, we will set forth a basic condition that is necessary but not sufficient. 
The formal definition follows.

\begin{definition}[Erasure attack against watermarking schemes]\label{def:erasure-attack}
    Let $\Pi = (\watermark,\detect)$ be a watermarking scheme for a class of generative models $\cM = \classModel$ with associated quality function $\quality:\cX\times\cY \rightarrow [0,1]$.
    We say that an adversary $\adversary$ \emph{$\epsilon$-breaks} $\Pi$ if for every $\model \in \cM$, for every prompt $x\in\cX$, we have
    \[
        \prob{
            \begin{array}{c}
                \detect_\k(x,y') = 0 \text{ and } \quality(x,y') \geq \quality(x,y)
            \end{array}
            :
            \begin{array}{c}
                y \getsr \model_\k(x),\ y' \getsr \adversary(x,y)
            \end{array}
        } \geq \epsilon
    \]
    where the probability is taken over $(\k,\model_\k)$ output by $\watermark(\model)$ and the random coins of $\adversary$.
\end{definition}

\noindent Note that in a secret-key watermarking scheme (where an attacker does not know $\k$), the highest probability we can anticipate for success in an erasure attack is $1-\epos$, where $\epos$ is the false positive rate (see~\Cref{def:false-positive-negative}).

Intuitively, a watermarking scheme $\Pi = (\watermark,\detect)$ is considered robust and secure (for a meaningful choice of $\epsilon$) if there does not exist a computationally efficient adversary $\adversary$ that satisfies the aforementioned~\Cref{def:erasure-attack}.
The notion of ``computationally efficient'' depends on the context, but at the very least the adversary should be able to: (1) make black-box queries to the watermarked model itself, (2) perform efficient computations that do not require white-box access to the model or the data that it was trained on.
As we demonstrate in~\Cref{sec:impossibility}, there is a practical and universal adversary that meets~\Cref{def:erasure-attack}'s criteria in several settings.
However, if the adversary is restricted in other ways (for example only allowed to modify the output up to $\epsilon$ distance in some metric) then it may be possible to resist erasure attacks.
This is the setting of ``weak watermarking'' that we mention in \Cref{sec:intro}.
We do not investigate weak watermarking schemes in this paper.

\paragraph{On quality approximation/degradation.}
According to~\Cref{def:erasure-attack}, an attack is considered successful when the quality of the non-watermarked output $y'$ (computed by the adversary) is at least that of the target watermarked output $y \getsr \model_\k(x)$, i.e., $\quality(x,y') \geq \quality(x,y)$.
We highlight that we can relax~\Cref{def:erasure-attack} to consider adversaries that can remove the watermark at the price of some ``small'' degradation in the quality.
This can be accomplished by modifying~\Cref{def:erasure-attack} and requiring that $\quality(x,y')\geq \quality(x,y) - \gamma$ (instead of $\quality(x,y')\geq \quality(x,y)$) where $\gamma$ is the parameter defining the quality degradation tolerated by the definition.
For simplicity, in this work we focus on the simpler setting in which there is no degradation, and the attacker has access to an exact quality oracle.
In practice, the quality oracle may not be perfect, and hence some approximation and degradation might be necessary.

\vspace{-1mm}
\section{Impossibility Result}\label{sec:impossibility}

We now show that a secret-key watermarking scheme can be defeated (i.e., remove the watermark) when the outputs of the underlying generative model can be perturbed.
With the term ``perturbed'' we mean that the output can be slightly modified without affecting the quality.
Examples of perturbable outputs are texts in which a word/token/span can be modified without changing the meaning of the text, or images in which a single pixel/patch can be modified (e.g., slightly change the RGB value of the target region) without affecting the picture. Thus, the impossibility result of this theorem applies to LLMs or image models.

\subsection{Assumptions and Discussion} \label{sec:assumptions}

Our impossibility result relies on the following two  assumptions:
\begin{enumerate}
\item \textbf{Adversaries can check their own work:} The adversary has access to the quality oracle $\quality:\cX\times\cY \rightarrow [0,1]$ (associated to the target generative model $\model: \cX\rightarrow\cY$).

\item \textbf{Random walk on the output space $\cY$:} At a high level, we assume that there is some graph $\graph = (\cV,\cE)$ in the output space $\cV = \cY$ of the target watermarked model $\model_\k:\cX\rightarrow\cY$. In addition, the adversary has access to an oracle $\pO$ (dubbed {\em perturbation oracle}) that, on input $y\in \cY$ (e.g., the watermarked output), returns a random neighbor $y'$ of $y$ according to the graph $\graph$. For example, in the case of language, this can be obtained by taking some subset of the tokens and replacing them with random choices (either uniform or informed by large or smaller language models) or semantically equivalent spans.
The main assumption we make is that the random walk has non-trivial \emph{mixing} properties, and in particular that the graph is \emph{irreducible} and \emph{aperiodic} (these two conditions guarantee that a long-enough random walk will converge to its stationary distribution).

\end{enumerate}

\noindent The definition of a perturbation oracle is given below.

\paragraph{Pertubation oracle.}
We abstract the ability of the adversary to perform a random walk over the output space of a (possibly watermarked) generative model $\model:\cX \rightarrow \cY$ by introducing the notion of {\em perturbation oracles}.
In a nutshell, a perturbation oracle $\perturbationOracle:\cX \times \cY \rightarrow \cY$ is a randomized oracle that, on input $x \in \cX$ and $y \in \cY$, returns a high-quality $y'\in\cY$ (according to some distribution) with probability at least $\epsilon_{\sf pert}$.
The formal definition follows.
\begin{definition}[Perturbation oracle]\label{def:perturbation-oracle}
    A perturbation oracle $\perturbationOracle$ is a randomized oracle that, on input $x \in \cX$ and $y\in\cY$, returns an $y' \in \cY$.
    We say a perturbation oracle $\perturbationOracle:\cX \times \cY \rightarrow \cY$ with quality function $\quality:\cX\times\cY \rightarrow [0,1]$ is {\em $\epsilon_{\sf pert}$-preserving} if for every prompt $x \in \cX$, for every $y \in \cY$, we have 
        \[
            \prob{\quality(x,\perturbationOracle(x,y)) \geq \quality(x,y)} \geq \epsilon_{\sf pert}.
        \]
\end{definition} %

\subsubsection{Mixing Condition} 

To define the mixing condition, we need to define an associated random walk for the perturbation oracle with respect to some particular prompt.
We then consider the \emph{mixing time} of this random walk   (see~\Cref{sec:preliminaries} for a formal definition of mixing time).
Fix a prompt $x\in\cX$.
The perturbation oracle $\perturbationOracle(x,\cdot)$ can be represented using a weighted directed graph $\graph_x=(\cV_x,\cE_x)$ where an edge $(y_0,y_1)\in\cE_x$ if weighted with the probability of going from output $y_0$ to output $y_1$ using $\perturbationOracle(x,\cdot)$.
Below, we formally define two different graph (hierarchically ordered) representations corresponding to $\perturbationOracle$.
The first, named  the{\em $x$-prompt graph representation}, is essentially the graph $\graph_x=(\cV_x,\cE_x)$ described above, i.e., the graph representing the perturbation oracle $\perturbationOracle(x,\cdot)$ for a fixed prompt $x\in\cX$.
The second, named {\em $q$-quality $x$-prompt graph representation}, is the subgraph $\graph^{\geq q}_x = (\cV^{\geq q}_x,\cE^{\geq q}_x)$ of $\graph_x=(\cV_x,\cE_x)$ where we only consider vertices of quality at least $q$, i.e., $\quality(x,y) \geq q$.

\begin{definition}[Graph representation of perturbation oracles]\label{def:graph-perturbation-oracle}
    Let $\perturbationOracle:\cX \times \cY \rightarrow \cY$ be a perturbation oracle with associated quality function $\quality: \cX \times \cY \rightarrow [0,1]$.
    \begin{description}

        \item[$x$-prompt graph representation:] Fix a prompt $x \in \cX$, the {\em $x$-prompt} graph representation of $\perturbationOracle$ is a weighted directed graph $\graph_x = (\cV_x,\cE_x)$ such that $\cV_x = \cY$ and $\cE_x \subseteq \cY \times \cY$ defined as follows:
        \begin{align*}
            \cE_x &= \left\{ (y_0,y_1) \in \cY \times \cY \ : \ \prob{ y_1 = \perturbationOracle(x,y_0)} > 0 \right\}. 
        \end{align*}
        The weight function $\weight(y_0,y_1)$ of $\graph_x = (\cV_x,\cE_x)$ is defined as follows: 
        \[
            \weight(y_0,y_1) = \left\{
                \begin{array}{ll}
                    \prob{ y_1 = \perturbationOracle(x,y_0)} & \text{ if } (y_0,y_1) \in  \cE_x, \\
                    0 & \text{otherwise.}
                \end{array}
            \right.
        \]

        \item[$q$-quality $x$-prompt graph representation:] Fix $q \in [0,1]$ and a prompt $x \in \cX$, the {\em $q$-quality $x$-prompt} graph representation of $\perturbationOracle$ is a graph $\graph^{\geq q}_x = (\cV^{\geq q}_x,\cE^{\geq q}_x)$ such that $\cV^{\geq q}_x \subseteq \cY$ and $\cE^{\geq q}_x \subseteq \cY \times \cY$ defined as follows:
        \begin{align*}
            \cE^{\geq q}_x &= \left\{ (y_0,y_1) \in \cY \times \cY \ : \ \quality(x,y_0) \geq q,\ \quality(x,y_1) \geq q, \text{and } \prob{ y_1 = \perturbationOracle(x,y_0)} > 0 \right\}, \text{ and} \\ 
            \cV^{\geq q}_x &= \left\{ y \in \cY \ : \quality(x,y) \geq q\right\}.
        \end{align*}   
        The weight function $\weight(y_0,y_1)$ of $\graph^{\geq q}_x = (\cV^{\geq q}_x,\cE^{\geq q}_x)$ is defined as follows: 
        \[
            \weight(y_0,y_1) = \left\{
                \begin{array}{ll}
                    \prob{ y_1 = \perturbationOracle(x,y_0)} & \text{ if } (y_0,y_1) \in \cE^{\geq q }_x, \\
                    0 & \text{otherwise.}
                \end{array}
            \right.
        \]
    \end{description}
\end{definition}

\subsubsection{Discussion}
The quality-oracle assumption is based on the intuition that it is easier, or at the very least not harder, to verify outputs than to generate them.
In practice, we have used both zero-shot prompting (see \Cref{sec:experiment}) and reward models to compute the quality function.
In the context of images, it is also possible to use measures such as Inception score \citep{salimans2016improved}, FID score \citep{heusel2017gans}, CLIP score \citep{hessel2021clipscore} or text-to-image reward models \citep{wu2023human, xu2024imagereward} etc.
While at the moment zero-shot prompting works only for language models and not for audio/visual data, as models grow in power, both in capabilities and modalities, computing quality functions will only become easier.

The perturbation-oracle assumption is more subtle. First, note that the notion of ``perturbation'' can be abstract and does not need to correspond to changing a discrete subset of the output (e.g., a sequence of tokens or a patch of an image).
The assumption can potentially fail for some prompts. For example, if a prompt has the form ``Solve for the following set of $n$ linear equations in $n$ variables, and format the solution as ...'' then there is zero entropy in the space of possible high-quality solutions.
In such a case there is no ``perturbation oracle'' that can be defined. 
However, in such a case no watermarking is possible either.
Indeed, watermarking schemes require some sort of ``perturbation oracle'' as well. 
To ensure successful watermarking, there needs to be a large set of potential high-quality solutions, and the scheme needs to be able to select a small subset of it (with a measure bounded by $\epos$--- the false positive rate) based on the key.
Hence we believe this assumption will be justified in cases where watermarking is feasible as well.
Note that our perturbation oracle is in one crucial case much weaker than what is needed for watermarking: we do \emph{not} require it to always preserve quality or even do so with high probability. It is enough that the probability of preserving quality is bounded away from zero.

Another potential issue with the perturbation oracle is that there could be outputs that are high-quality but would be very unlikely to be ever output from the model, whether watermarked or not. Hence the graph might be disconnected simply because of these ``unreachable outputs''. However, these can be addressed by introducing weights on the vertices as well that correspond to the probability of the output under the model. 
Such weighing makes no difference to our argument, but we omit this for clarity of notation.

\subsection{Proof Overview of the Impossibility Result}

The idea behind the impossibility result is to allow the adversary to perform a random walk on the graph while walking only over vertices with sufficiently high quality. Such a high-quality random walk can be executed by leveraging the quality oracle $\quality$ and checking the quality of the next perturbed vertex (at each step of the random walk).
Somewhat more formally, let $x$ be a prompt and $\cV^{\geq q}_x$ be a subset of vertices (of the graph) of quality $\quality(x,y)\geq q$.
Assuming the vertices $\cV^{\geq q}_x$ are \emph{connected} (i.e., there is a non-zero probability of reaching any vertex in $\cV^{\geq q}_x$), then an adversary, starting from a watermarked output $y \in \cY$ with quality $\quality(x,y) \geq q$, can perform a random walk (using its corresponding oracle) to reach a $y' \in \cV^{\geq q}_x$ that is not watermarked.
As we will see next, the existence of such a non-watermarked $y' \in \cV^{\geq q}_x$ is guaranteed by the false positive rate of the watermarking scheme (see~\Cref{def:false-positive-negative}).

\paragraph{Impossibility result.}
For simplicity, let us assume there exists a perturbation oracle $\perturbationOracle$ that is $1$-preserving, i.e., for a fixed prompt $x \in \cX$, the oracle \emph{always} outputs a perturbed output $y' \in \cY$ whose quality is at least that of the initial value $y \in \cY$.
Now, consider an adversary $\adversary$ with oracle access to $\perturbationOracle$.
The adversary's objective is to perturb the watermarked output $y_0$ $t$ times (for some large enough $t$) to produce intermediate perturbations $y_1, \ldots, y_{t}$, such that the final perturbed output $y_t$ will be independent of the watermarking secret key $\k$. In this way, $y_t$ will likely be verified as non-watermarked (i.e., $\detect_\k(x,y_t) = 0$) with probability at least $1-\epos$ where $\epos$ is the false positive rate of the watermarking scheme.
This can be achieved by requiring the adversary to compute $y_i \getsr \perturbationOracle(x, y_{i-1})$ for $i \in [t]$, where $y_0$ corresponds to the initial watermarked output.
This adversarial strategy is effective in removing the watermark for the following two reasons:
\begin{itemize}
    \item The interaction with the perturbation oracle can be represented as a $t$-step random walk over the $q_0$-quality $x$-prompt graph representation $\graph^{\geq q_0}_x = (\cV^{\geq q_0}_x, \cE^{\geq q_0}_x)$ of $\perturbationOracle$ where $q_0$ is the quality of the starting (watermarked) output $y_0$. Thus, $y_0, \ldots, y_t$ exactly represents the path taken by the adversary during such a random walk (note that each $y_1, \ldots, y_t$ will have the same quality $q_0$ by definition of a $1$-preserving perturbation oracle. See~\Cref{def:perturbation-oracle}).
    
    \item If $\graph^{\geq q_0}_x$ is irreducible and aperiodic, a random walk over $\graph^{\geq q_0}_x$ converges to its unique stationary distribution $\vector{\pi}$ (see~\Cref{sec:graphs}).\footnote{Irreducibility and aperiodicity are the fundamental mixing properties that $\perturbationOracle$ (and its corresponding graph) must satisfy.} In addition, if $t$ corresponds to (or is slightly larger than) the $\edist$-mixing time of $\graph^{\geq q_0}_x$, then the distribution of the final perturbed output $y_t$ is $\edist$-close to the stationary distribution $\vector{\pi}$ of $\graph^{\geq q_0}_x$. Furthermore, this distribution $\vector{\pi}$ is completely independent of the secret key $\k$ of the watermarking scheme since the secret key $\k$ is sampled (by the watermarking algorithm $\watermark(\model)$) independently from $\perturbationOracle$'s definition.
\end{itemize}
The above two facts, combined with the false positive $\epos$-rate of the watermarking scheme, directly imply that $y_t$ will be classified as non-watermarked with probability at least $(1-\epos)(1-\edist)$.
Moreover, we can set $\edist$ to be arbitrarily small to achieve an adversarial advantage of $(1-\epos)(1-\edist) \approx (1-\epos)$.\footnote{Indeed, $\edist$ approaches $0$ exponentially as $t$ increases (see~\Cref{thm:mixing-time}).}

The aforementioned approach relies on the fact that the perturbation oracle is $1$-preserving; thus, each oracle invocation will certainly correspond to a new step of the random walk over the graph $\graph^{\geq q_0}_x = (\cV^{\geq q_0}_x,\cE^{\geq q_0}_x)$. This is because, starting from the watermarked output $y_0$ with quality $q_0$, each invocation will output (with probability $1$) a perturbed output $y_i$ with at least the same quality $q_0$.
To make the impossibility more generic, we extend the attack to deal with perturbation oracles that may return low-quality perturbed values with {\em non-zero} probability, i.e., the perturbation oracle is $\epsilon_{\sf pert}$-preserving for some $\epsilon_{\sf pert} < 1$.
In such a setting, we are not guaranteed that each invocation of $\perturbationOracle$ will correspond to a step of the random walk over $\graph^{\geq q_0}_x$. Indeed, with probability $1-\epsilon_{\sf pert}$, the $\epsilon_{\sf pert}$-preserving perturbation oracle may return $y' \in \cY$ such that $\quality(x,y') < q_0$, i.e., $y' \not\in \cV^{\geq q_0}_x$.
To overcome this difficulty, we give the adversary oracle access to the quality function $\quality$.
In such a way, at each iteration $i \in [t]$, the adversary can check the quality of the perturbed value $y_i \getsr \perturbationOracle(x,y_{i-1})$ and proceed as follows:
\begin{itemize}
    \item If $\quality(x,y_i) \geq q_0$, the adversary knows that the obtained perturbed value $y_i$ corresponds to a new random walk step over $\graph^{\geq q_0}_x$. Hence, in the next iteration, it will perturb the recently obtained output $y_i$ (as described previously for the case of a $1$-preserving perturbation oracle).
    \item On the other hand, if $\quality(x,y_i) < q_0$, the value $y_i$ {\em does not correspond} to a new random walk step. In this case, the adversary needs to re-send $y_{i-1}$ to $\perturbationOracle$ until it obtains a perturbed value $y_i$ such that $\quality(x,y_{i}) \geq q_0$.
\end{itemize}
Hence, by accessing the quality oracle $\quality$, the adversary can continue walking over the high-quality graph $\graph^{\geq q_0}_x$ even if the perturbation oracle is not errorless (i.e., $\epsilon_{\sf pert} < 1$).
Naturally, when $\epsilon_{\sf pert} < 1$, the adversary needs to submit more queries to the perturbation oracle to overcome the low-quality outputs of $\perturbationOracle$.
This is taken into account by our theorem which relates the number of queries needed to the parameter $\epsilon_{\sf pert}$, i.e., the higher (resp. lower) $\epsilon_{\sf pert}$, the lower (resp. higher) the number of queries.

\Cref{thm:impossibility-simple} reports our impossibility result that bounds $\adversary$'s advantage $\epsilon$ in erasing the watermark (for every $\model \in \cM$ in a particular class $\cM = \classModel$) when $\adversary$ has oracle access to a perturbation oracle $\perturbationOracle$.
As discussed in this section, such a result relies on the fact that $\perturbationOracle$'s graph representation is irreducible and aperiodic, which are necessary conditions to converge to its unique stationary distribution during the random walk.
To make explicit the independence between the adversary (including its perturbation oracle $\perturbationOracle$) and the secret key $\k$ of the target watermarked model $\model_\k$, we define $\perturbationOracle$'s irreducibility and aperiodicity properties w.r.t. the median quality among those of all possible responses (to a prompt $x$) that can be obtained by watermarking $\model \in \cM$.
Observe that the median quality does not depend on the secret key $\k$ used during the watermarking process but instead depends on \emph{all possible choices} of secret keys (i.e., the key space of $\watermark$).
Formally, let $\cQ_{\model,x} = \{q_1,q_2,\ldots\}$ and $q_{\sf min}$ be defined as follows:
\begin{align}
    \cQ_{\model,x} &= \left\{q : \prob{\quality(x,\model_\k(x)) = q: (\k,\model_\k)\getsr \watermark(\model)}>0\right\},\label{eq:quality-set} \\
    q_{\sf min} &= \underset{\model\in\cM,x\in\cX}{\min}\{q_{\model,x}\} \text{ where } q_{\model,x} \text{ is the median quality of } \cQ_{\model,x},\label{eq:minimum-median}
\end{align}
where the probability in~\Cref{eq:quality-set} is taken over the random coins of $\watermark$ and $\model_\k$.\footnote{This is equivalent to saying that the quality values in $\cQ_{\model,x} = \{q_1,q_2,\ldots\}$ are defined over all possible choices of secret keys $\k$ (output by $\watermark$) and all possible random coins of the resulting watermarked model $\model_\k$.} 
Then, the following~\Cref{thm:impossibility-simple} defines $\perturbationOracle$'s properties w.r.t. $q_{\sf min}$, i.e., the minimum among all median qualities (see~\Cref{eq:minimum-median}).
\begin{theorem}\label{thm:impossibility-simple}
    Let $\Pi=(\watermark,\detect)$ be a watermarking scheme for a class of generative models $\cM = \classModel$ with an associated quality function $\quality:\cX\times\cY \rightarrow [0,1]$.
    Let $\perturbationOracle:\cX\times\cY \rightarrow \cY$ be a perturbation oracle (defined over the same prompt space $\cX$ and output space $\cY$ as the class $\cM$) with the same associated quality function $\quality:\cX\times\cY \rightarrow [0,1]$ as $\Pi$.

Under the following conditions
\begin{enumerate}
    \item The watermarking scheme $\Pi$ has a false positive $\epsilon_{\sf pos}$-rate;
    
    \item The perturbation oracle $\perturbationOracle$ is $\epsilon_{\sf pert}$-preserving;

    \item For every non-watermarked model $\model \in \cM$, for every prompt $x\in\cX$, for every quality $q\in[q_{\sf min},1]$, the $q$-quality $x$-prompt graph representation $\graph^{\geq q}_x$ of $\perturbationOracle$ is irreducible and aperiodic where $q_{\sf min}$ is the minimum median defined in~\Cref{eq:minimum-median}.
    Also, let $t_{\model,x}$ be the $\edist$-mixing time of $\graph^{\geq q}_x$ where $\edist \approx 0$, and let $t_{\sf max}$ be the largest mixing time among $\{t_{\model,x}\}_{\model \in \cM, x \in \cX}$;
    \label{itm:simple-graph}
\end{enumerate}
there exists an \emph{oracle-aided} universal adversary $\adversary^{\perturbationOracle(\cdot,\cdot),\quality(\cdot,\cdot)}$ that $\epsilon$-breaks $\Pi$ (\Cref{def:erasure-attack}) by submitting at most $O(t_{\sf max}/\epsilon_{\sf pert})$ queries to $\perturbationOracle$ where $\epsilon \approx (1-\epos)/2$.
\end{theorem}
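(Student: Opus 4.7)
The plan is to instantiate the adversary exactly as in \Cref{alg:attack} and analyze its execution as a Markov chain on the vertex set $\cV^{\geq q_0}_x$, where $q_0 := \quality(x,y_0)$ is the quality of the initial watermarked response $y_0 \getsr \model_\k(x)$. The analysis decomposes into (i) bounding the number of queries required to realize a sufficiently long high-quality walk, (ii) showing that the terminal distribution of the walk is $\edist$-close to a stationary distribution $\vector{\pi}$ that is independent of the watermarking key $\k$, and (iii) combining this with the false-positive bound and the median-quality condition to extract the $(1-\epos)/2$ success probability.

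First I would set up the walk explicitly. Starting from $y' := y_0$, the adversary repeatedly queries $\perturbationOracle(x,y')$ and updates $y' := y_t$ only when $\quality(x,y_t) \geq q_0$. A simple induction shows that every accepted state lies in $\cV^{\geq q_0}_x$, so by the $\epsilon_{\sf pert}$-preservation property, each query accepts with probability at least $\epsilon_{\sf pert}$. A Chernoff/Markov bound then yields that $O(t_{\sf max}/\epsilon_{\sf pert})$ queries suffice, with overwhelming probability, to collect at least $t_{\sf max}$ accepted steps. The sequence of accepted states forms a walk whose transition probabilities are the edge weights $\weight(\cdot,\cdot)$ of $\graph^{\geq q_0}_x$ normalized by the probability of landing in $\cV^{\geq q_0}_x$; this normalization preserves irreducibility and aperiodicity and yields a Markov chain with the same unique stationary distribution $\vector{\pi}$ as assumed in condition~3.

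Next I would argue that the output approximately matches $\vector{\pi}$ and is effectively decoupled from $\k$. Conditioning on the event $E := \{q_0 \geq q_{\sf min}\}$, condition~3 activates and the $\edist$-mixing time bound $t_{\sf max}$ guarantees that after the $t_{\sf max}$-th accepted step the state has total variation distance at most $\edist$ from $\vector{\pi}$. Since $\vector{\pi}$ depends only on $x$, $q_0$, $\perturbationOracle$, and $\quality$---none of which touch $\k$---the false-positive guarantee of \Cref{def:false-positive-negative} can be averaged over $y' \sim \vector{\pi}$ to give $\Pr[\detect_\k(x,y') = 1 \mid E] \leq \epos + \edist$. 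The quality side of the success event holds deterministically on $E$ because every accepted state has quality at least $q_0$ by construction.

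Finally, the median definition of $q_{\sf min}$ yields $\Pr[E] \geq 1/2$, so combining the pieces gives
\[
    \Pr[\adversary \text{ wins}] \;\geq\; \Pr[E]\cdot(1 - \epos - \edist) \;\geq\; \tfrac{1}{2}(1 - \epos - \edist),
\]
which is the claimed $\epsilon \approx (1-\epos)/2$ once $\edist$ is driven toward zero by running a few extra mixing times (using that mixing-time distance decays exponentially). The step I expect to be the most delicate is the bookkeeping that links the \emph{algorithmic} chain---which spends some iterations rejected and stationary at the current state---to the graph-theoretic object $\graph^{\geq q_0}_x$ of \Cref{def:graph-perturbation-oracle} and its stated mixing time: I would need to verify carefully that the normalized rejection chain inherits the same stationary distribution and an essentially unchanged $\edist$-mixing time, or else re-prove the mixing bound directly for the lazy chain induced by rejection sampling.
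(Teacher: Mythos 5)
Your proposal is correct and mirrors the paper's argument: the paper's proof of the extended \Cref{thm:impossibility-perturbation} runs exactly this rejection-sampling walk, decomposes into the same three pieces (a binomial tail bound on the number of accepted steps, an $\edist$-mixing bound on the accepted-step chain conditioned on $q_0\geq q_{\sf min}$, and the observation that $\vector{\pi}_{x,q_0}$ is independent of $\k$ so the false-positive rate applies), and extracts the $\tfrac{1}{2}$ factor from the median exactly as you do. The bookkeeping you flag as delicate resolves immediately: by \Cref{sec:graphs} the transition matrix of $\graph^{\geq q_0}_x$ is $\matrix{P}(i,j)=\weight(i,j)/\weight(i,\star)$ with $\weight(i,\star)$ equal to the acceptance probability at $i$, so the chain of accepted steps \emph{is} the chain whose mixing time condition~\ref{itm:simple-graph} stipulates, and no separate argument for a lazy chain is needed.
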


\noindent The following corollary is obtained from~\Cref{thm:impossibility-simple} assuming $(i)$ the perturbation oracle is $\frac{1}{2}$-preserving, and $(ii)$ the watermarking scheme has a false positive rate of $\frac{1}{10}$.

\begin{corollary}
Let $\Pi = (\watermark,\detect)$ be a watermarking scheme and $\perturbationOracle$ be a perturbation oracle as defined in~\Cref{thm:impossibility-simple}.
If $\Pi$ has a false positive $\frac{1}{10}$-rate and $\perturbationOracle$ is $\frac{1}{2}$-preserving, then there exists an oracle-aided universal adversary $\adversary^{\perturbationOracle(\cdot,\cdot),\quality(\cdot,\cdot)}$ that breaks $\Pi$ with a success probability of approximately $\frac{9}{20}$, by submitting at most $O(t_{\sf max})$ queries to $\perturbationOracle$, where $t_{\sf max}$ is defined in~\Cref{thm:impossibility-simple}.
\end{corollary}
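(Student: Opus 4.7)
The plan is to analyze the rejection-sampling random walk of \Cref{alg:attack} with quality threshold $q_0 := \quality(x,y_0)$, where $y_0$ is the input watermarked response. The argument splits naturally into three parts: first, showing that the accepted transitions induce a well-defined Markov chain on $\graph^{\geq q_0}_x$; second, bounding the mixing time and the perturbation-query complexity; and third, combining mixing with the false positive rate to obtain the success probability.

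For the first part, conditioning on the quality-acceptance event simply renormalizes the outgoing probabilities of $\perturbationOracle(x,\cdot)$ to the vertex set $\cV^{\geq q_0}_x$, yielding exactly a random walk on the $q_0$-quality $x$-prompt graph. Since $\perturbationOracle$ is $\epsilon_{\sf pert}$-preserving, each accepted step requires at most $1/\epsilon_{\sf pert}$ oracle calls in expectation, so a Chernoff-style concentration bound gives that $t_{\sf max}$ accepted steps can be achieved within $O(t_{\sf max}/\epsilon_{\sf pert})$ total queries with high probability. For the second part, whenever $q_0 \geq q_{\sf min}$, hypothesis~(3) ensures $\graph^{\geq q_0}_x$ is irreducible and aperiodic with $\edist$-mixing time at most $t_{\sf max}$, so the adversary's output $y_T$ lies within total variation $\edist$ of the stationary distribution $\vector{\pi}_{q_0}$.

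For the third part, I use two probabilistic events. First, by the definition of the median, $\prob{q_0 \geq q_{\model,x}} \geq 1/2$, and since $q_{\model,x} \geq q_{\sf min}$, the walk's graph has the assumed mixing properties with probability at least $1/2$. Second, for any fixed value $q_0 = q$, the stationary distribution $\vector{\pi}_q$ is a deterministic function of $x$, $q$, and $\perturbationOracle$, and in particular is \emph{independent} of the watermarking key $\k$. Thus after mixing, $y_T$ is (approximately) independent of $\k$ conditional on $q_0$, placing us in the regime where the false positive $\epos$-rate applies: for any $y$ whose distribution is independent of $\k$, $\prob{\detect_\k(x,y)=1} \leq \epos$. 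Combined with the quality preservation $\quality(x,y_T) \geq q_0 = \quality(x,y_0)$ that is built into the rejection rule, the adversary satisfies both conditions of \Cref{def:erasure-attack} with probability at least $\frac{1}{2}(1-\epos-\edist) \approx (1-\epos)/2$.

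The hardest step, in my view, is the probabilistic bookkeeping in the third part. Although $\vector{\pi}_q$ ``does not see $\k$,'' the realized $y_T$ is coupled to $\k$ through the chain $y_0 \sim \model_\k(x) \Rightarrow q_0 = \quality(x,y_0) \Rightarrow$ walk output, and a clean formalization requires showing $y_T \perp \k \mid q_0$ up to total variation $\edist$ before averaging over $q_0$ so that the unconditional false positive bound can be invoked on the resulting $\k$-independent distribution. The query-complexity claim has an analogous subtlety: the expected $1/\epsilon_{\sf pert}$ cost per accepted step must be converted into a high-probability upper bound on the total number of queries, so that the adversary is guaranteed to terminate within $O(t_{\sf max}/\epsilon_{\sf pert})$ calls on the success event underlying the $(1-\epos)/2$ bound.
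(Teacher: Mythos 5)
The paper's own proof of this corollary is nothing more than a direct substitution: plugging $\epos = \tfrac{1}{10}$ and $\epsilon_{\sf pert} = \tfrac{1}{2}$ into \Cref{thm:impossibility-simple} gives $\epsilon \approx (1-\epos)/2 = \tfrac{9}{20}$ and query count $O(t_{\sf max}/\epsilon_{\sf pert}) = O(t_{\sf max})$. You instead re-derive the underlying theorem (essentially \Cref{thm:impossibility-perturbation} with $v=50$) before specializing. Your re-derivation does capture the paper's actual ingredients: the rejection-conditioned walk is a walk on $\graph^{\geq q_0}_x$ (matching \Cref{lmm:stationary-distribution}), the binomial/Chernoff-style bound on the number of oracle calls needed to achieve $t_{\sf max}$ accepted steps (matching \Cref{lmm:binomial}), the median giving the factor $\tfrac{1}{2}$ (matching \Cref{lmm:high-quality-watermark-output}), and the independence of $\vector{\pi}_{x,q_0}$ from $\k$ combined with the false positive rate. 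So the content is equivalent, but the route differs: a direct citation versus a from-scratch re-proof of the cited theorem.

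One caveat about the ``hardest step'' you flag: your proposed repair --- showing $y_T \perp \k \mid q_0$ and then averaging over $q_0$ --- does not by itself close the gap you describe, because the false positive $\epos$-rate (\Cref{def:false-positive-negative}) is a bound over the \emph{unconditional} key distribution, whereas conditioning on $q_0$ (which is a function of $y_0 \sim \model_\k(x)$) can bias $\k$. The FP bound need not hold for the conditional law $\k \mid q_0$. This is the same step the paper takes without elaboration in the proof of \Cref{lmm:stationary-distribution}, so your attempt is faithful to the paper's argument, but the decoupling is less automatic than you suggest; a complete formalization would need an additional assumption (or argument) that the quality of $\model_\k(x)$ is not informative about $\k$.
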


\paragraph{On improving $\adversary$'s advantage.}
In~\Cref{thm:impossibility-simple}, $\adversary$'s advantage is  approximately $\frac{1-\epos}{2}$ whereas $1-\epos$ is the maximum advantage that an adversary can achieve. The multiplicative $\frac{1}{2}$ loss is due to the definition of $\perturbationOracle$ w.r.t. to the minimum median $q_{\sf min}$. 
Without loss of generality, it is possible to get an adversarial advantage $\epsilon$ arbitrarily close to $1-\epos$ by enforcing $\perturbationOracle$'s properties over a larger range of $q$ values (\Cref{itm:simple-graph} of~\Cref{thm:impossibility-simple}).
This can be accomplished by decreasing $q_{\sf min}$ (instead of using the median quality) such that convergence will be guaranteed for {\em almost} all watermarked outputs $y$ (the ones with quality at least $q_{\sf min}$).\footnote{For example, by setting $q_{\sf min} = 0$, we obtain that the attack succeeds independently from the quality of the received initial watermarked output $y \getsr \model_\k(x)$. This is because the quality of $y$ is always non-negative and, for every quality $q \in [0,1]$, the $q$-quality $x$-prompt graph representation of $\perturbationOracle$ will be irreducible and aperiodic.}
For this reason, in~\Cref{app:extended-impossibility-result}, we include~\Cref{thm:impossibility-perturbation} (and its corresponding proof) that is the extended version of the above theorem.
In particular,~\Cref{thm:impossibility-perturbation} defines $\perturbationOracle$'s properties w.r.t. $v$-th quality percentile (instead of the median) so that, by decreasing $v$, we can get arbitrarily close to the best possible adversarial advantage $1-\epos$.
Moreover,~\Cref{thm:impossibility-perturbation} explicates the concrete relation between the adversarial advantage $\epsilon$, the number of queries, and the $\epsilon_{\sf pert}$-preservation of $\perturbationOracle$. We refer the reader to~\Cref{app:extended-impossibility-result} for more details.

\vspace{-1mm}
\section{Experiments}
\label{sec:experiment}
\vspace{-1mm}
\subsection{Attack Implementation}

As a proof of concept empirical test of our attack schema, we construct an implementation of it and use it to attack several published watermarking schemes.
While our general attack framework applies across all modalities, our focus is language, where quality oracles are currently more straightforward to implement.
To implement our attack, we need to choose (1) the generative model and (2) the watermarking scheme, as well as implement (3) the perturbation oracle, and (4) the quality oracle.

We choose Llama2-7B \citep{touvron2023llama} as our generative model and attack three different watermarking schemes \citep{kirchenbauer2023watermark,kuditipudi2023robust,zhao2023provable}.
We implement our \emph{perturbation oracle} using T5-XL v1.1 \citep{raffel2020exploring}.\footnote{This model is only pre-trained on C4 excluding any supervised training, to mask and infill one random span at each iteration. This design choice might be important as we want to generate a non-watermarked example without substantially shrinking the text length while the original T5 tends to infill texts with shorter lengths than the masked contents. } This model occasionally misses simple errors in the text such as repetitiveness, capitalization and punctuation mistakes, and incoherent or irrelevant phrases, so we use a small Gramformer to do post-hoc grammatical error correction or ask GPT-3.5 to perform a final check of the response quality with a focus on these sorts of errors, whenever the output is approved by the reward model (see~\Cref{fig:error_prompt} for the prompt).\footnote{\url{https://github.com/PrithivirajDamodaran/Gramformer}}
After trying out various potential reward model implementations, we settled on using an open-source reward model (RoBERTa-v3 large \citep{liu2019roberta} fine-tuned on OpenAssistant \citep{kopf2023openassistant} preference data) as the primary quality oracle.\footnote{\url{https://huggingface.co/OpenAssistant/reward-model-deberta-v3-large-v2}}. For removing image watermarks, we use the latest stable-diffusion-xl-base and its distilled version sdxl-turbo. We implement perturbation and quality oracles as stable-diffusion-2-base \citep{Rombach_2022_CVPR} and the reward model trained on Human Preference Score v2 \citep{wu2023human},\footnote{\url{https://huggingface.co/adams-story/HPSv2-hf}} respectively. Experiments were all performed on a combination of 40 GiB and 80 GiB A100s and our code is available at \url{https://github.com/hlzhang109/impossibility-watermark}.

This model occasionally misses simple errors in the text such as capitalization and punctuation mistakes and incoherent or irrelevant phrases, so we ask GPT-3.5 to perform a final check of the response quality with a focus on these sorts of errors, whenever the output is approved by the reward model (see~\Cref{fig:error_prompt} for the prompt).\footnote{For each judgment, we calculate the scores of the original watermarked response and candidate response by passing the output of the reward model through a softmax layer, since this is how the reward model was supervised. We count the comparison as a tie if the difference in scores is less than $\Delta=0.02$.

}
See~\Cref{app:attack_details} for more details on the experimental setup.

All three of the watermarking schemes we attack were originally tested on the Real News subset of the C4 dataset \citep{raffel2020exploring}, so we use this as our primary task as well. Specifically, we give Llama-2-7B \citep{touvron2023llama} the task of generating a completion given the first 20 tokens of a news article. Except where otherwise noted, we default to a generation length of 200 tokens.

We test the following three watermark frameworks and refer readers to~\Cref{app:baselines} for more details: 
\begin{itemize}
    \item KGW~\citep{kirchenbauer2023watermark} selects a randomized set of ``green'' tokens before a word is generated, and then softly promotes the use of green tokens during sampling, which can be detected efficiently. 
    \item  EXP~\citep{kuditipudi2023robust} is a distortion-free watermark framework that preserves the original LM’s text distribution (over the randomness of the secret key), at least up to some maximum number of generated tokens.
    \item Unigram \citep{zhao2023provable} is a watermarking scheme based on KGW designed to be provably robust to perturbations with bounded edit distance from the original watermarked text.
    \item Stable Signature \citep{fernandez2023stable} refers to a method of embedding invisible watermarks into images generated by Latent Diffusion Models (LDMs). This approach involves fine-tuning the latent decoder part of the image generator, conditioning it on a binary signature. 
    \item Invisible watermark \citep{shield2023ivw} is a default (classic) watermark to the Stable Diffusion model series, which utilizes frequency space transformations to embed watermarks invisibly into images, using Discrete Wavelet Transform and Discrete Cosine Transform.
\end{itemize}

\subsection{Experimental Results}
We include our results on C4 news completion tasks with three watermark frameworks in~\Cref{tab:main_tab}. 
We show that with enough iterations, we can degrade watermark average detection performance to a z-score below $1.645$ and a p-value greater than $0.05$, a standard threshold that entails no more than $5\%$ false positive rates. We also query GPT-4 to evaluate the comparative quality of the original watermarked output and the output after our attack. To do this, we prompt GPT-4 to compare the two outputs on a scale from 1 to 5 (corresponding to whether GPT4 ``strongly'' prefers one output, ``slightly`` prefers it, or judges them to ``have similar quality'', see  \Cref{fig:oracle_5choice_prompt}). Because GPT-4 has a bias towards preferring the first of the two outputs \citep{zheng2023judging}, we query it twice, with both orderings of the two outputs.

We take a closer look at the KGW results by plotting histograms of the detection and GPT-4 judgment statistics for around 200 examples (\Cref{fig:hist}). These demonstrate that our attack can substantially degrade detection performance while typically inducing only minor quality degradation. Note that we choose the number of random walk steps for our attack to be high enough to push the average p-value above $0.05$; we expect that running for longer would result in even higher p-values.

\begin{figure}[htbp]
  \centering
  \begin{subfigure}[b]{0.66\textwidth} %
    \centering
    \includegraphics[width=0.49\linewidth]{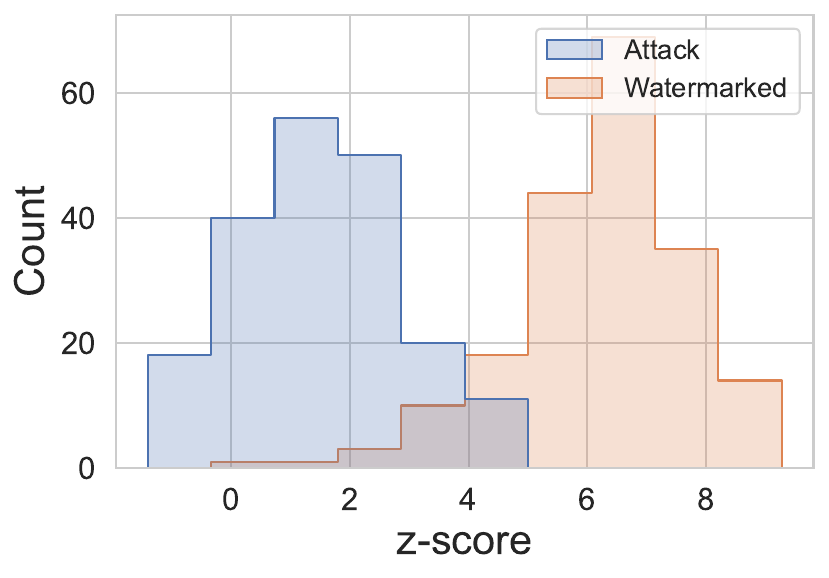} %
    \hfill
    \includegraphics[width=0.49\linewidth]{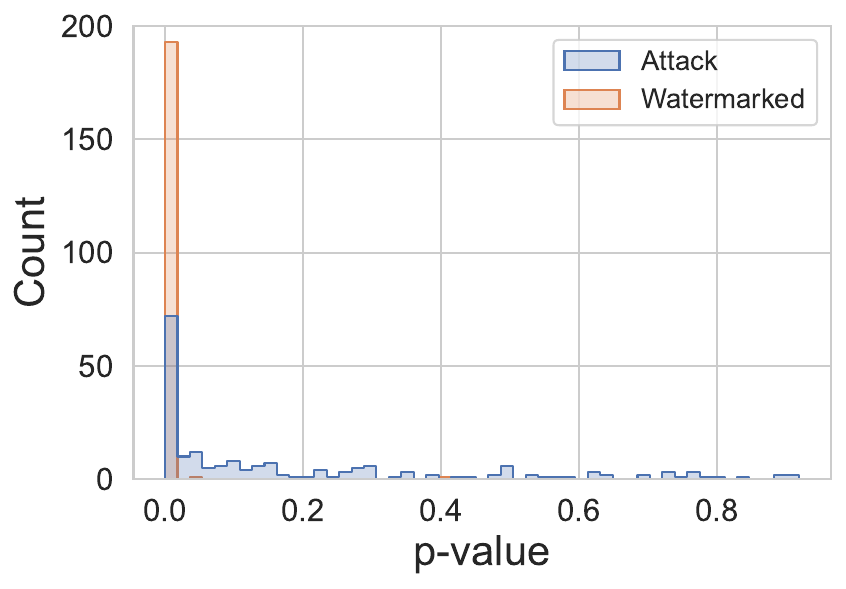} %
    \caption{Detection performance}
    \label{fig:z_hist200}\label{fig:p_hist200}
  \end{subfigure}%
  \hfill
  \begin{subfigure}[b]{0.315\textwidth} %
    \centering
    \includegraphics[width=\linewidth]{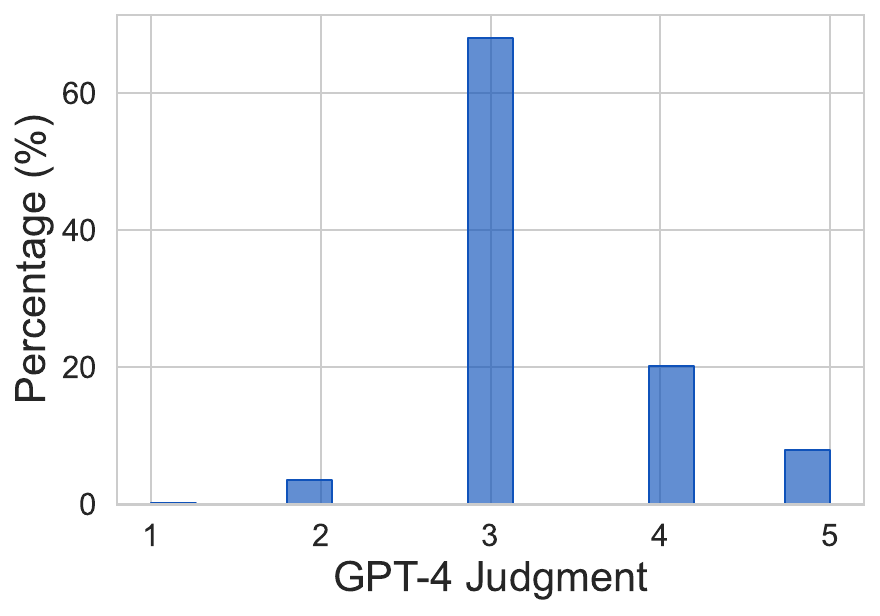} %
    \caption{GPT-4 judgment}
    \label{fig:j_hist200}
  \end{subfigure}
  \caption{
  (a) Detection performance before (Watermarked) and after (Attack) our attack using Llama2-7B with KGW \citep{kirchenbauer2023watermark}. 
  (b) Comparative evaluation on watermarked texts against texts post-attack using GPT-4 as a judge. Scoring criteria: 1=post-attack response is much better than watermarked one, 2=slightly better, 3=of similar quality, 4=slightly worse, 5=much worse. Each example is included as two data points, one for each ordering of the two outputs in the GPT-4 query.}
  \label{fig:hist}
\end{figure}

\paragraph{Detection performance and text quality over the course of the random walk.}
We also study how the watermark z-score and GPT-4 judge quality score change over time during the attack.
For the sake of efficiency, we ran these tests for only $12$ randomly selected prompts from the dataset. The results are averaged over these 12 examples, and the GPT-4 judge results are averaged over the two output orderings as well.
In~\Cref{fig:time}, as the number of traversal steps increases, the z-score steadily decreases while the average GPT-4 judgment score fluctuates around its initial value of 0. 
In~\Cref{fig:intermediate} of the appendix, we showcase how the text, detection statistics, and GPT-4 judge score change over the course of the random walk attack for a single example.

\paragraph{Controlling for perturbation oracle quality.}

If the perturbation oracle (T5-XL v1.1) were strong enough to produce texts on its own that are comparable in quality to the watermarked model $\model$ (Llama-2-7B), then our attack would be trivial.
As a baseline and to calibrate our GPT-4 judge, we generate 40 samples by asking T5 for text completion iteratively: 
we feed the C4 news prefix for text completion and concatenate the generated contents back as input until the length can match the watermarked response. 
Then we ask GPT-4 to compare the watermarked responses against the T5-generated counterparts using the quality judge prompt mentioned above. The results are stark: the judge strongly prefers the watermarked Llama-2-7b output over the T5 output for every single example. 
In other words, when the watermarked response is presented first, the judge always says it is ``much better'' than the T5 response; and when the T5 response is presented first, the judge always declares it ``much worse''. 
This indicates that the quality oracle component of the attack was crucial to its success.

\paragraph{The impact of response length on attack performance.}
We plot the average z-score as a function of T denoted as the token length of the generated text (\Cref{fig:len}). Note that a length greater than 400 is much larger than the default settings KGW was tested on, and we observe that the z-score keeps increasing for longer sequences, indicating detection performance degradation. On the other hand, our attack plateaus, showing that we can remove the watermark for long texts which can be essential for many practical tasks such as long-form reasoning \citep{nye2021show}, essay writing, etc.

\begin{figure*}[h]
\begin{minipage}[h]{\textwidth}
\centering
\includegraphics[width=.48\textwidth]{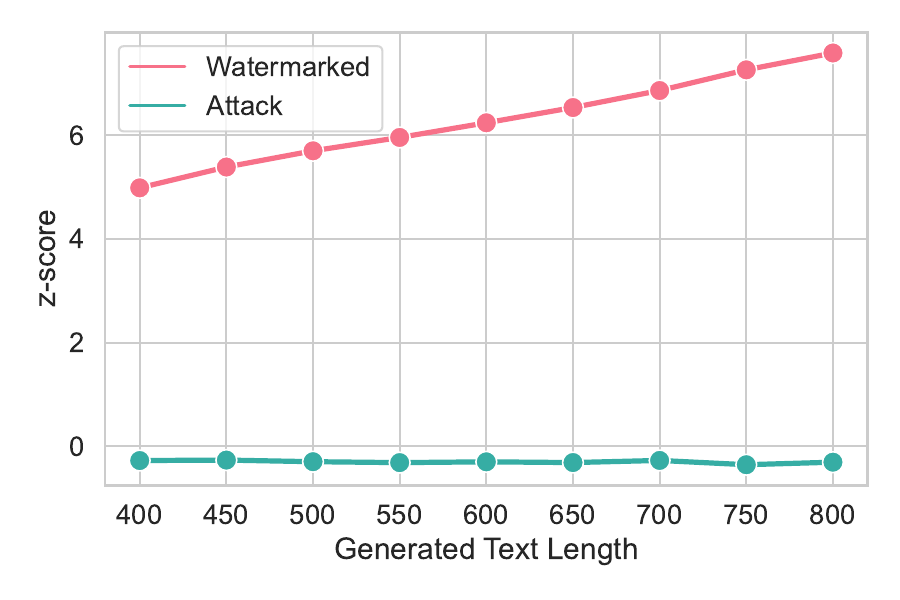}
\label{fig:z}
\includegraphics[width=.48\textwidth]{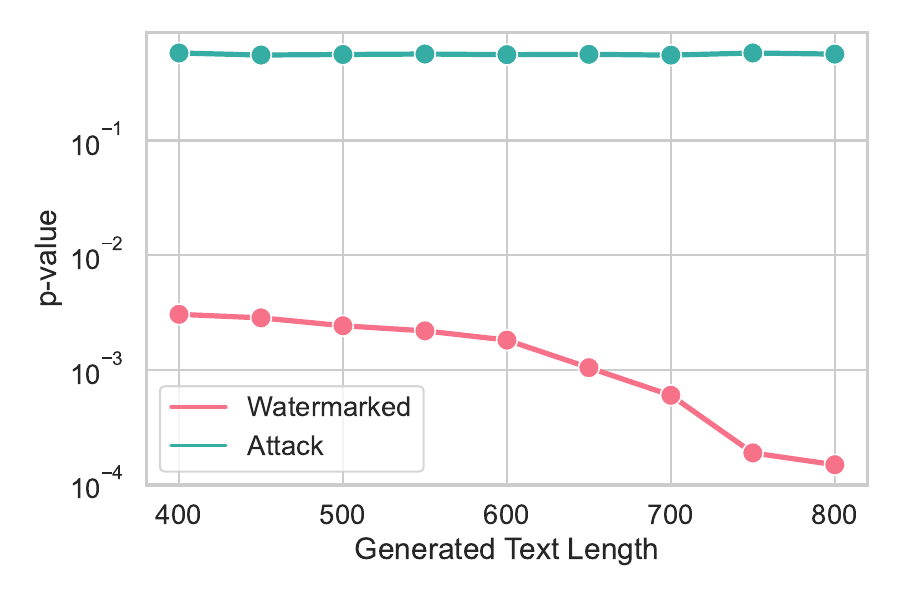} 
\label{fig:p} 

\end{minipage}
\caption{
Detection performance and w.r.t. the watermarked text length using Llama2-7B-Chat with KGW \citep{kirchenbauer2023watermark}. Results are aggregated across hundreds of examples.  %
}
\label{fig:len}
\end{figure*}

\subsection{Experimental Results on Vision-Language Models}
\vspace{-1mm}

\begin{figure*}[h]
   \centering
   \begin{subfigure}[b]{0.49\textwidth}
       \centering
       \includegraphics[width=0.49\textwidth]{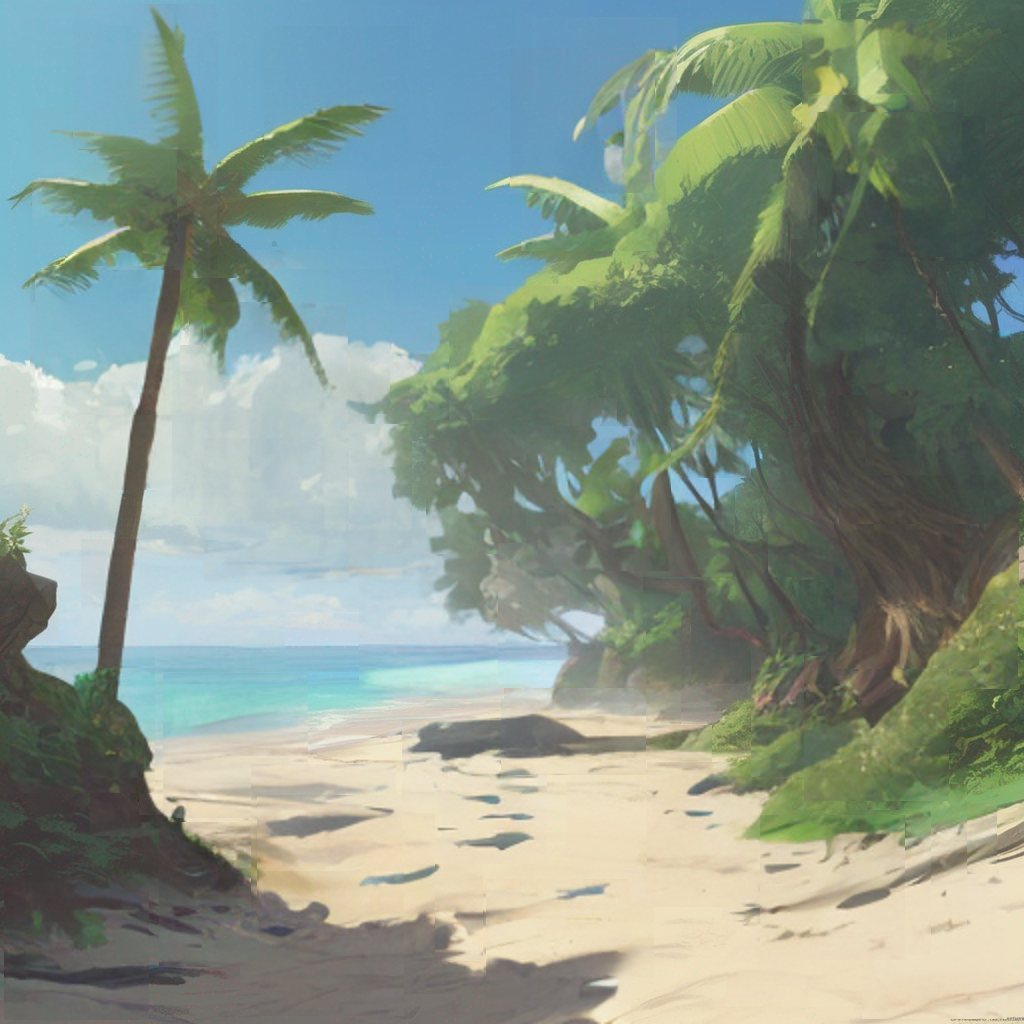}
       \includegraphics[width=0.49\textwidth]{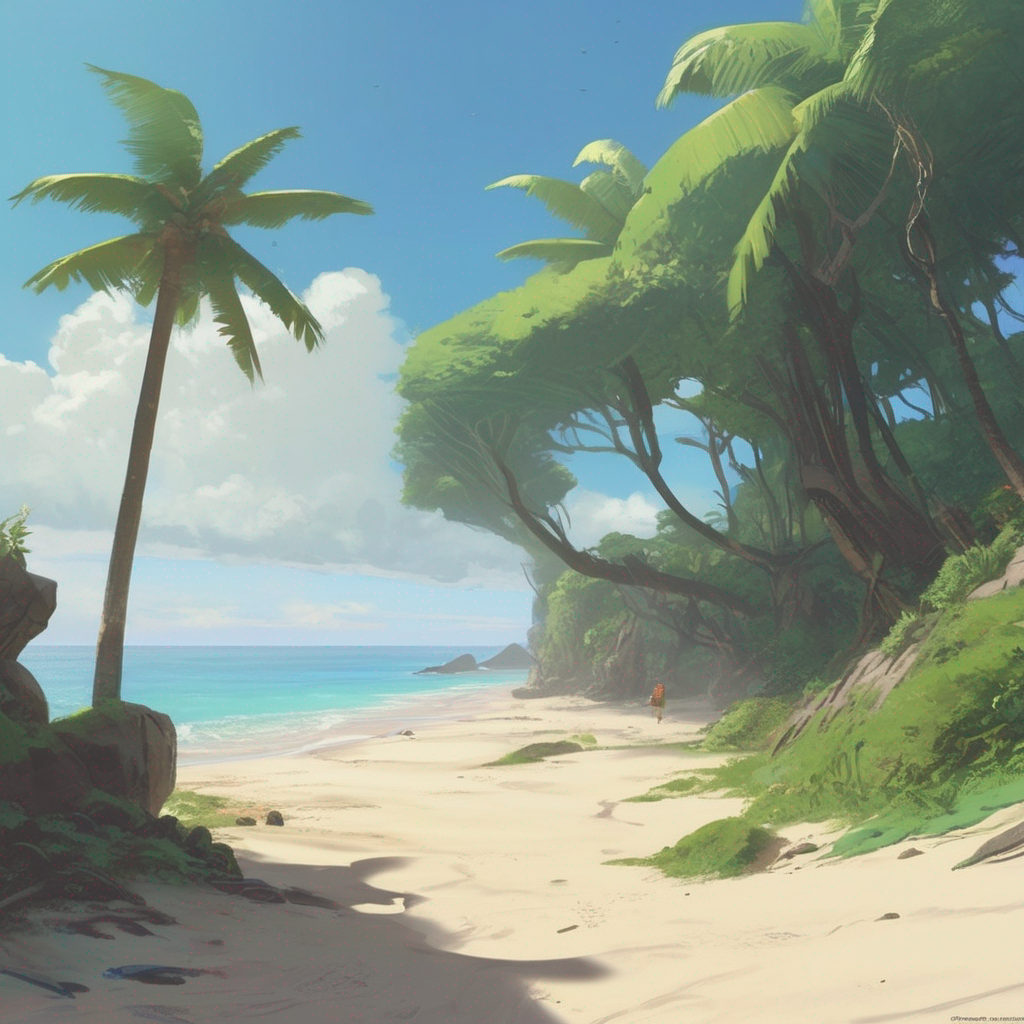}
       \caption{Invisible Watermark \citep{shield2023ivw}}
   \end{subfigure}
   \begin{subfigure}[b]{0.49\textwidth}
       \centering
       \includegraphics[width=0.49\textwidth]{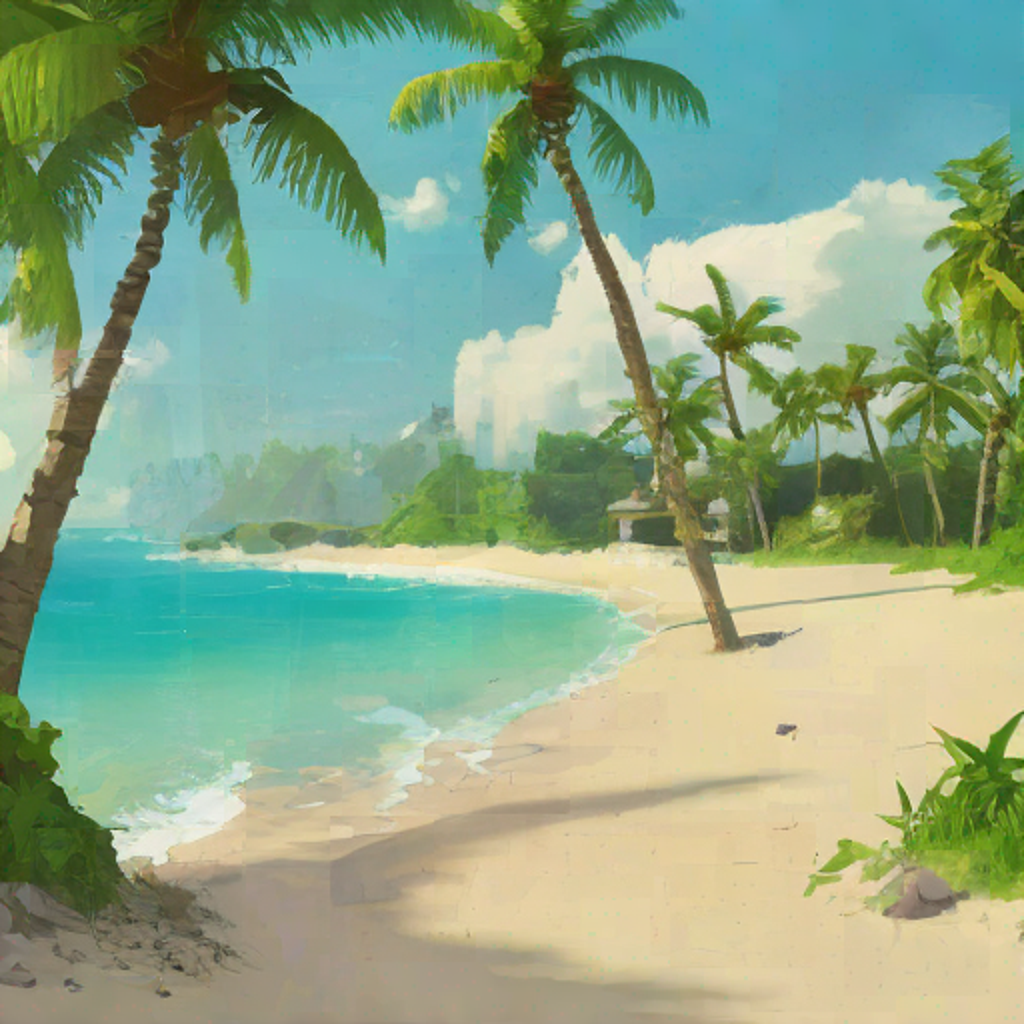}
       \includegraphics[width=0.49\textwidth]{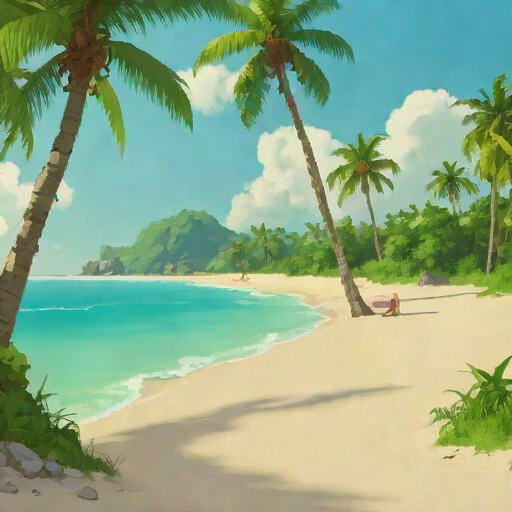}
       \caption{Stable Signature \citep{fernandez2023stable}}
   \end{subfigure}
   \caption{Qualitative examples of the watermarked images after (\textcolor{candypink}{left}) and before (\textcolor{caribbeangreen}{right}) our attack for two watermarking schemes. Images are generated by prompting stable-diffusion-2-base with the prompt \textit{``A long and winding beach, tropical, bright, simple, by Studio Ghibli and Greg Rutkowski, artstation\textbackslash n''}.
   Detection and quality evaluation results: Invisible Watermark (p-value $3.5e\text{-}15 \rightarrow 0.2354$, CLIP score $34.82 \rightarrow 33.60$, GPT-4 Judge $0$), Stable Signature (p-value $1.3e\text{-}5 \rightarrow 0.468$, CLIP score $32.27 \rightarrow 31.58$ , GPT-4 Judge $0$).
   }
   \label{fig:cv_qual}
\end{figure*}

\begin{table*}[h]
\caption{Average results on two image watermark schemes before and after attack, as applied to the stable-diffusion-xl. We adopt the same evaluation protocol as on LMs (\Cref{tab:main_tab}).} %
\centering
\resizebox{0.8\textwidth}{!}{
\begin{tabular}{cccc}
\toprule \toprule
Framework  & p-value & CLIP score & GPT-4 Judge \\ \midrule
 Stable Signature  \citep{fernandez2023stable} &   5.940e-6 $\rightarrow$ 0.059 &  33.91 $\rightarrow$ 33.40 &  -0.088 \\
 Invisible Watermark \citep{kirchenbauer2023watermark} &  1.793e-5 $\rightarrow$  0.206 & 35.64 $\rightarrow$ 35.51 &  - 0.138 \\

\bottomrule \bottomrule  
\end{tabular}
}
\label{tab:img_tab}
\end{table*}

As a proof of concept of the generality of our attack, we show that a version of the attack can remove watermarks that have been used for image diffusion models.
Specifically, we attack Invisible Watermark \citep{shield2023ivw} applied to the diffusion model stable-diffusion-xl-base-1.0 \citep{podell2023sdxl} and the Stable Signature \citep{fernandez2023stable} to sdxl-turbo \citep{sauer2023adversarial}.

We first generate 200 images from randomly chosen prompts that are verified to be effective in generating high-quality images and then filter those examples with p-values greater than $0.001$. 
For the remaining $141$ valid examples (for Stable Signature, all are valid), we implement our perturbation oracle using stable-diffusion-2-base \citep{Rombach_2022_CVPR}, which inpaints the image masked by a random square mask of size $0.02$ times the image size. Note that this is a weaker model than the watermarked models, but it is not much weaker, so we see these results as preliminary.
Our quality oracle is implemented as a reward model trained on Human Preference Score v2 \citep{wu2023human}.
We perturb the image for 100 valid random walk steps.
Finally, the attacked images are evaluated by calculating the CLIPScore \citep{hessel2021clipscore} and querying the multimodal API of GPT-4 \citep{openai2023gpt4}, gpt-4-turbo, to report the final quality comparison scores using the prompt ``\textit{[Prompt], Response A: [Image A], Response B: [Image B], Compare which of the two above figures is a better response of higher-quality to the given prompt. Explain your reasoning step by step.}''

\vspace{-1mm}
We show in the \Cref{tab:img_tab} that our attack can successfully remove the watermarks with only slight degradation in image quality using the same evaluation protocol as on LMs.

\begin{figure}[h]
    \centering
    \includegraphics[width=.99\textwidth]{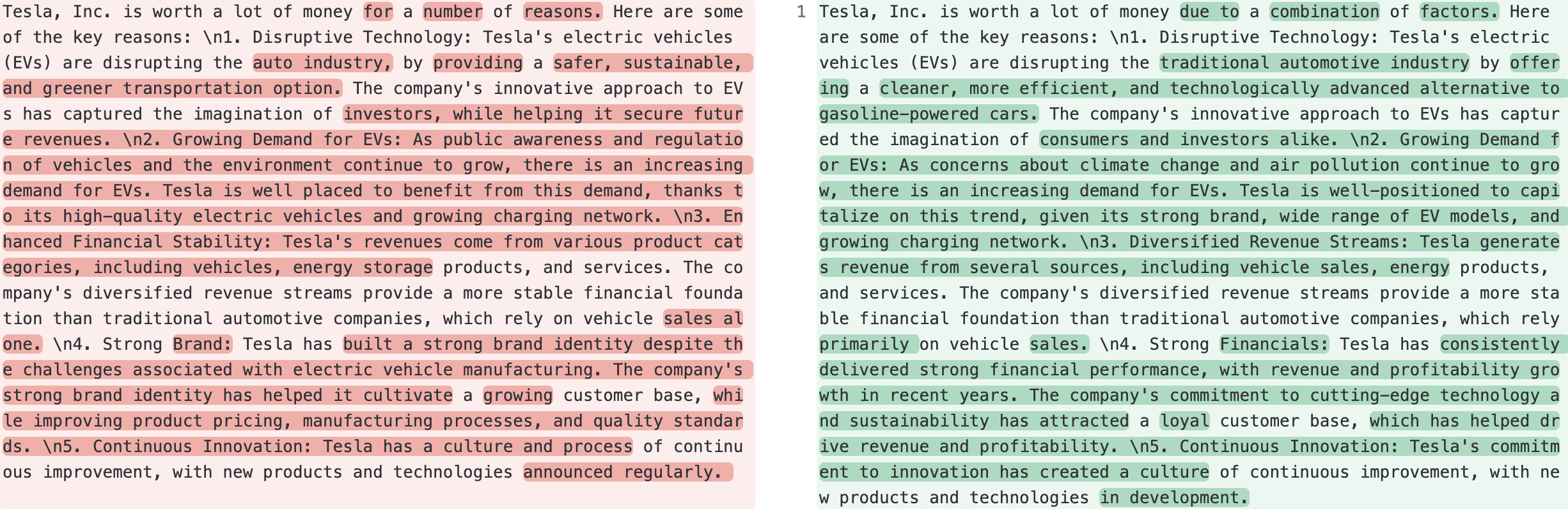} \\
    \vspace{2.5pt} %
    \includegraphics[width=.99\textwidth]{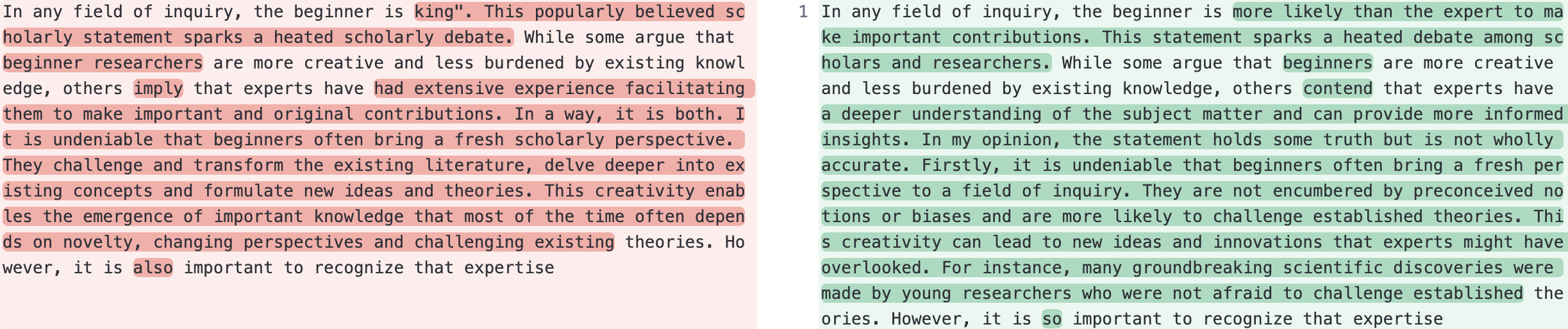}
    \caption{Un-watermarked example after our attack (\textcolor{candypink}{red}) and the corresponding original watermarked text (\textcolor{caribbeangreen}{green}) for the query ``\textit{Q: How come Tesla is worth so much money?}'' and an SAT essay writing prompt.
    We do a word-by-word comparison and the differences between the two texts are highlighted.
    }
    \label{fig:qual}
\end{figure}
\subsection{Qualitative Examples}
In \Cref{fig:qual} we display two concrete non-cherrypicked before-and-after examples of the effects of our random walk attack on model outputs, so that the reader can get a sense of how quality is affected by the process. We use one prompt from LFQA \cite{fan2019eli5} and one SAT essay prompt.
(For the SAT query, we use ``\textit{In any field of inquiry, the beginner is more likely than the expert to make important contributions. Write a response in which you discuss the extent to which you agree or disagree with the statement and explain your reasoning for the position you take. In developing and supporting your position, you should consider ways in which the statement might or might not hold true and explain how these considerations shape your position.}''). 
The comparison shows that the text after our attack can still be coherent, fluent, and on-topic. 
Such high-quality non-watermarked examples are abundant in our results. See \Cref{app:intermediate} for an example of the text at many intermediate rounds during the attack, with corresponding detection results and quality judge evaluations.
Moreover, \Cref{fig:cv_qual} showcases two set of images before and after our attack. 
Some image details such as the background, and shapes of objects get perturbed but the overall image can still fit the prompt provided.
We again see the p-values dramatically increase after our attack though with slight quality degradation.

\vspace{-2mm}
\section{Concluding Remarks}
\vspace{-1mm}
This work provides general impossibility results for strong watermarking schemes. Although our specific attack is not very efficient, its primary advantage lies in its generality. Furthermore, while our implementation targets language generation models, the concepts of quality and perturbation oracles apply to all generative contexts, including audio-visual data. We posit, without formal proof, that verification is simpler than generation; thus, a general enhancement in capabilities and flexibility is likely to benefit the attacker (i.e., through improved quality and perturbation oracles) more than the defender (i.e., through better planting and detection algorithms).
Hence we are not optimistic about the feasibility of strong watermarking schemes that would prevent a determined attacker from removing the watermark while preserving quality.
Nonetheless, \emph{weak} watermarking schemes do exist and can be useful, especially for safeguarding against unintentional leaks of AI-generated data (e.g., into training sets) or in scenarios with minimal risk that assume a low-effort attacker.
Moreover, watermarking is just one tool in the arsenal of safety efforts for generative AI modeling.
Cryptography can also be used to establish provenance of data.
Ultimately, in many instances of disinformation, it is crucial to verify the data's source rather than to confirm whether AI generated it.
Thus our goal is to set practical expectations regarding what watermarking schemes can achieve, thereby contributing to the safer use of AI models.

\vspace{-2mm}
\section*{Broader Impact}
\vspace{-2mm}
We believe that investigating the possibilities of watermarking schemes at this stage can help to provide a better understanding of the inherent tradeoffs and give policymakers realistic expectations of what watermarking can and cannot provide. While our techniques can be used to remove watermarks from existing schemes, they are not the most efficient way to do so, with the benefit being generality rather than efficiency. Moreover, our implementation is for text generation models, while currently widely deployed watermarks are for image generation models. While it is possible to adapt our ideas to attack deployed image generative models, we do not provide a recipe for doing so in this paper. Thus, our work isn’t likely to be used by malicious actors. Rather, we see exposing fundamental weaknesses in the watermarking paradigm as a contribution to the ongoing discussion on how to mitigate the misuse of generative models. We hope our findings will be taken into account by organizations building generative models and policymakers regulating them. 

\vspace{-2mm}
\section*{Acknowledgements}
\vspace{-2mm}
We thank John Thickstun and Tom Goldstein for helpful discussions. Kempner Institute computing resources enabled this work. Hanlin Zhang is supported by an Eric and Susan Dunn Graduate Fellowship. Boaz Barak acknowledges funding supported by the Kempner Institute, Simons Investigator Fellowship, NSF grant DMS-2134157, DARPA grant W911NF2010021, and DOE grant DE-SC0022199. Danilo Francati was supported by the Carlsberg Foundation under the Semper Ardens Research Project CF18-112 (BCM). Ben Edelman acknowledges funding from the National Science Foundation Graduate Research Fellowship Program under award DGE-214074. Daniele Venturi was supported by project SERICS (PE00000014) under the MUR National Recovery and Resilience Plan funded by the European Union - NextGenerationEU, and by Sapienza University under the project SPECTRA.
Giuseppe Ateniese was supported in part by grants from the Commonwealth Cyber Initiative (CCI) and the Commonwealth Commercialization Fund (CCF), as well as corporate gifts from Accenture, Lockheed Martin Integrated Systems, and Protocol Labs.

\bibliography{ref.bib} 
\bibliographystyle{plainnat}

\newpage
\appendix
\addcontentsline{toc}{section}{Appendix} %
\renewcommand \thepart{} %
\renewcommand \partname{}
\part{\Large{\centerline{Appendices}}}
\parttoc
\newpage

\section{Questions \& Answering}
\textbf{Perturbation oracle only checks the quality of the output but does not guarantee content preservation.}

We believe that our theoretical definition of attack as quality preserving is the correct one for both images and language models. For example, in the context of disinformation, if an attacker generates an output with respect to a prompt X (e.g., ``An image of a person doing X'' or ``A New York Times story that X happened'') then they only care about the output's fitness to the prompt. 
So, if they are able to modify the output to erase the watermark while preserving quality (i.e. fitness to prompt) this would be a successful attack, even if some semantic details are changed. Therefore, we think that our metric of looking at quality (by a GPT4 judge, which crucially is not used in the attack itself) is the correct way to evaluate success.

A common question is to explain the difference between ours and prior attacks, which attempted to modify the output to be semantically equivalent while erasing the watermark. 

One of the contributions of this paper is the formulation of the security of watermarking with respect to a measure of quality which is \textbf{fitness to the prompt}, as opposed to a measure of ``similarity''. 
This formulation captures more closely what the attacker cares about: if they request an output Y to a prompt X from a model, and want to perturb it to Y’ that is not watermarked, then they don’t care whether Y’ is semantically identical to X but whether it has as good a fitness with the prompt. This implies that we are supposed to evaluate text quality conditioned on the prompt as opposed to metrics like perplexity that only evaluate the response.

Second, this formulation is necessary for the impossibility result to hold in generality. For example, if the prompt is to write a story, then the model would have the freedom to choose semantic details (e.g., names of characters, physical descriptions) that can be used to embed the watermarking signal. It may be impossible to remove the watermark without modifying these semantic details. Note that if the prompt did specify these details such as names of characters, then neither the model would have the freedom to modify them to inject the watermark nor the attacker could change them to erase it. Hence we believe fitness to the prompt (aka quality) is the right measure for a watermark attack.

Since our attack has a different objective (quality preservation as opposed to semantic similarity), it is also qualitatively different from prior works \citep{sadasivan2023can, saberi2023robustness}. We may consider a similar space of perturbations, but our algorithm of whether to accept or reject a given perturbation is fundamentally different.

\textbf{What are the tradeoffs between type-I and type-II errors as the output space entropy changes?}

We formalized only the minimal properties required to demonstrate our impossibility result which does not include entropy, type-I and type-II errors. Setting aside the graph's properties, our impossibility result applies to watermarking schemes with a small false positive rate $\epsilon_{pos}$ (please refer to Def. 4), which is fundamental for having a functional watermarking scheme. Indeed, small $\epsilon_{pos}$ guarantees that, with (high) probability $1-\epsilon_{pos}$, the scheme classifies as un-watermarked responses computed independently from the secret key. This type of response corresponds to human-generated content. This is a necessary property for having a functional watermarking scheme that permits distinguishing between machine-generated and human-generated content (for more details see also page 5, right after Def. 4).

To summarize, our impossibility result applies to any good enough watermarking scheme independently from the output space entropy amount, type-I, and type-II errors in detecting the watermark. This only makes our impossibility stronger and justifies the absence of entropy/type-I/type-II formalizations.

\section{Mathematical Background and Random Walks}\label{sec:preliminaries}

\subsection{Notation}\label{sec:notation}
We use the notation $[n] = \{1, 2, \ldots, n\}$.
Capital bold-face letters (such as $\rv{X}$) are used to denote random variables, small letters (such as $x$) to denote concrete values, calligraphic letters (such as $\cX$) to denote sets, and serif letters (such as $\mathsf{A}$) to denote algorithms.
For a string $x \in \bin$, we let $|x|$ be its length; if $\cX$ is a set, $|\cX|$ represents the cardinality of $\cX$.

We denote with $\vector{x} \in \cX^n$ the column vector of length $n$ with elements from $\cX$. Similarly, we denote with $\matrix{X} \in \cX^{n\times m}$ the matrix with $n$ rows, $m$ columns, and elements from $\cX$.
We write $\vector{x}^{\top}$ (resp. $\matrix{X}^{\top}$) to denote the transpose vector of $\vector{x}$ (resp. the transpose matrix of $\matrix{X}$).
Given a vector $\vector{x} \in \cX^n$ (resp. a matrix $\matrix{X} \in \cX^{n\times m}$), we write $\vector{x}(i) \in \cX$ (resp. $\matrix{X}(i,j)$) to denote the $i$-th element of $\vector{x}$ (resp. the element located in the $i$-th row and $j$-th column of $\matrix{X}$).
We say a vector $\vector{x} \in \RR^n$ is a (probability) distribution if $\sum_{i}\vector{x}(i) = 1$; if $\vector{x} \in \RR^n$ is a distribution, we use the notation $\vector{x}$ and $\rv{x}$ interchangeably (where $\vector{x}(i) = \rv{x}(i)$ for every $i \in [n]$).
When $x$ is chosen uniformly from a set $\cX$, we write $x \getsr \cX$; if $\rv{x}$ is a distribution, we write $x \getsr \rv{x}$ to denote the act of sampling $x$ according to the distribution $\rv{x}$. That is, $\Prob{x = i : i\getsr \rv{x}} = \rv{x}(i)$.

\subsection{Graphs, Random Walks, and Mixing Time}\label{sec:graphs}
Next, we focus on \emph{weighted directed} graphs $\graph = (\cV,\cE)$ composed of $n$ vertices $\cV = [n]$.
We denote by $\weight(i,j)$ the weight of the edge $(i,j)\in\cE$ from $i \in \cV$ to $j \in \cV$.
We assume that $\weight(i,j) = 0$ if and only if $(i,j) \not\in \cE$. Moreover, we define $\weight(i,\star) = \sum_j \weight(i,j)$, i.e., $\weight(i,\star)$ represents the sum of the edges' weights with source vertex $i$.

\paragraph{Random Walks.}
A $t$-step random walk over $\graph$ is a probabilistic process such that, at each step, a neighbor is selected from the neighbors of the current vertex, according to the weight distribution.
The transition matrix $\matrix{P} \in \RR^{n\times n}$ of a random walk over a weighted directed graph $\graph = (\cV,\cE)$ is defined as $\matrix{P}(i,j) = \frac{\weight(i,j)}{\weight(i,\star)}$, i.e., $\matrix{P}(i,j)$ represents the probability (based on the graph's weights) of reaching vertex $j$ from vertex $i$.
Let $\rv{p}_0\in\RR^{n}$ be the starting distribution (of a random walk), and let $\rv{p}_t\in\RR^{n}$ represent the distribution after $t$ steps of the random walk.
By definition, we have $\rv{p}_{t+1}(i) = \sum_{j:(j,i) \in \cE} \rv{p}_t(j)\cdot \matrix{P}(j,i)$, where $\rv{p}(i)$ is the probability associated with $i\in \cV$.
Similarly, the (global) distribution after a $t$-step random walk can be expressed as $\rv{p}_{t+1} = \matrix{P}^{\top}\cdot \rv{p}_t$.

Below, we recall the definition of the stationary distribution of a random walk.
\begin{definition}[Stationary distribution of random walks]\label{def:stationary-distr}
    Let $\graph= (\cV,\cE)$ be a weighted directed graph and $\matrix{P}$ be the transition matrix of $\graph$.
    We say that $\vector{\pi} \in \RR^{n}$ is a {\em stationary distribution} for $\matrix{P}$ if $\matrix{P}^{\top}\cdot \vector{\pi} = \vector{\pi}$.
\end{definition}
\noindent In other words, the above definition states that the probability distribution after a $1$-step random walk is the stationary distribution $\vector{\pi}$ if the starting distribution $\rv{p}_0$ is the stationary distribution itself.
By induction, this also implies that the distribution $\rv{p}_{t}$ after a $t$-step random walk is equal to the stationary distribution $\vector{\pi}$ if $\rv{p}_0 = \vector{\pi}$, for every $t \geq 1$.

The following theorem provides the stationary distribution for any weighted directed graph.
\begin{theorem}\label{def:stationary-distribution-graphs}
    Let $\graph=(\cV,\cE)$ be a weighted directed graph and $\matrix{P}$ be its transition matrix.
    The distribution $\vector{\pi} \in \RR^{n}$ such that 
    \[
        \forall i \in \cV,\ \vector{\pi}(i) = \frac{\weight(i,\star)}{\sum_{i \in \cV} \weight(i,\star)} = \frac{\sum_{j\in \cV} \weight(i,j)}{\sum_{i \in \cV} \sum_{j \in \cV} \weight(i,j)} = \frac{\sum_{j\in \cV} \weight(i,j)}{\sum_{(i,j) \in \cE} \weight(i,j)}
    \]
    is a stationary distribution for $\matrix{P}$.
\end{theorem}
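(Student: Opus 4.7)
My plan is to verify the defining identity $\matrix{P}^{\top}\cdot \vector{\pi} = \vector{\pi}$ coordinate-wise by directly unpacking the formulas for $\vector{\pi}(i)$ and $\matrix{P}(i,j)$. Fix a vertex $i \in \cV$ and let $W = \sum_{i\in\cV} \weight(i,\star)$ denote the total edge weight, so that $\vector{\pi}(i) = \weight(i,\star)/W$. By the definition of the transition matrix, $\matrix{P}(j,i) = \weight(j,i)/\weight(j,\star)$, and therefore
\[
    (\matrix{P}^{\top}\vector{\pi})(i) \;=\; \sum_{j\in\cV} \matrix{P}(j,i)\,\vector{\pi}(j) \;=\; \sum_{j\in\cV} \frac{\weight(j,i)}{\weight(j,\star)} \cdot \frac{\weight(j,\star)}{W} \;=\; \frac{1}{W}\sum_{j\in\cV}\weight(j,i),
\]
where the key simplification is the cancellation of the $\weight(j,\star)$ factor between the transition probability and the stationary weight.

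Next, I would compare this expression with $\vector{\pi}(i) = \weight(i,\star)/W = \tfrac{1}{W}\sum_{j\in\cV}\weight(i,j)$. Stationarity at $i$ then reduces to the balance identity
\[
    \sum_{j\in\cV}\weight(j,i) \;=\; \sum_{j\in\cV}\weight(i,j),
\]
i.e., the total in-weight at $i$ equals the total out-weight at $i$. This is the main (and I believe only nontrivial) obstacle in the argument: for an \emph{arbitrary} weighted directed graph the identity is false, so the theorem as stated needs either a symmetry assumption on $\weight$ or a balance (Eulerian) hypothesis. I would address this by invoking the natural structural assumption that fits the paper's setting: the perturbation oracle $\perturbationOracle(x,\cdot)$ induces a reversible (or at least balanced) random walk over the output space---for example, when perturbations act by sampling uniform local edits, the weight $\weight(y_0,y_1)$ is symmetric in $y_0,y_1$, and the balance condition is automatic. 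Under this hypothesis, the computation above yields $(\matrix{P}^{\top}\vector{\pi})(i) = \vector{\pi}(i)$ for every $i$.

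Finally, to close out the proof I would verify that $\vector{\pi}$ is a bona fide probability distribution. This is immediate: by construction $\vector{\pi}(i) \geq 0$, and $\sum_{i\in\cV}\vector{\pi}(i) = (1/W)\sum_{i\in\cV}\weight(i,\star) = W/W = 1$. Combined with the stationarity identity verified above, this establishes that $\vector{\pi}$ is a stationary distribution for $\matrix{P}$, as claimed. The hardest part of the argument is not the algebraic manipulation but the realization that the identity requires the balance condition, and my proposal is to make that condition (or the underlying symmetry of $\weight$) explicit in the statement or in the standing assumptions on $\perturbationOracle$.
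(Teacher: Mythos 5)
Your calculation is correct, and more importantly, you have caught a genuine error in the paper's statement. The paper states this theorem (in the background section on graphs and random walks) \emph{without proof}, and the claim is in fact false for general weighted \emph{directed} graphs, exactly as your computation shows. Plugging the formula $\vector{\pi}(j) = \weight(j,\star)/W$ into $(\matrix{P}^\top\vector{\pi})(i) = \sum_j \matrix{P}(j,i)\vector{\pi}(j)$ produces $\tfrac{1}{W}\sum_j \weight(j,i)$, the normalized \emph{in}-weight of $i$, whereas $\vector{\pi}(i)$ is the normalized \emph{out}-weight $\tfrac{1}{W}\sum_j \weight(i,j)$. These agree iff every vertex is balanced (total in-weight $=$ total out-weight, the weighted Eulerian condition), which holds automatically when $\weight$ is symmetric (undirected case, reversible walk) but fails for generic directed graphs. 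A two-vertex example with self-loops and asymmetric weights already breaks the claim. So your observation is not a quibble about presentation: the theorem requires an added hypothesis (symmetry of $\weight$, or at least vertex-wise balance), and the paper's use of the phrase ``weighted directed graph'' overstates the result.

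One small caveat on your proposed fix: the assumption you invoke---that the perturbation oracle $\perturbationOracle$ induces a reversible walk---is plausible for oracles implemented by symmetric local edits, but the paper nowhere imposes it, and in particular the restricted graph $\graph^{\geq q}_x$ obtained by deleting low-quality vertices is not automatically balanced even if the ambient graph is. The cleaner structural fix is to state the formula only for the undirected (symmetric-weight) case, or to drop this explicit formula entirely and appeal only to the existence of a unique stationary distribution for irreducible aperiodic chains, which is all the impossibility argument in the paper actually uses. Your verification that $\vector{\pi}$ is a probability vector is fine and unaffected by any of this.
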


Interestingly, the stationary distribution is unique when the underlying directed graph $\graph$ is irreducible (i.e., the graph does not have leaf nodes).
Moreover, if $\graph$ is also aperiodic then, independently of the initial starting distribution $\rv{p}_0$, a random walk converges to its stationary distribution $\vector{\pi}$ as $t \rightarrow \infty$.
\begin{theorem}[Convergence to the stationary distribution]\label{def:uniqueness-stationary-distribution}
    If a weighted directed graph $\graph = (\cV,\cE)$ is irreducible and aperiodic, there exists a unique stationary distribution $\vector{\pi}$.
    Moreover, for every $\rv{p}_0 \in \RR^{n}$, $\rv{p}_t = (\matrix{P}^\top)^t\cdot \rv{p}_0$ (i.e., the probability distribution after a $t$-step random walk with starting distribution $\rv{p}_0$) converges to $\vector\pi$ as $t \rightarrow \infty$.
\end{theorem}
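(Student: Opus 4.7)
The plan is to prove the theorem in three standard steps for finite-state Markov chains: existence of a stationary distribution, uniqueness under irreducibility, and geometric convergence under aperiodicity plus irreducibility. Existence is essentially handed to us by the preceding \Cref{def:stationary-distribution-graphs}, which exhibits the explicit candidate $\vector{\pi}(i) = \weight(i,\star)/\sum_{j} \weight(j,\star)$; I would invoke that directly rather than reprove it.

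For uniqueness, I would first show that irreducibility forces any stationary distribution $\vector{\pi}$ to be strictly positive on every $i \in \cV$: if $\vector{\pi}(j) = 0$, then the stationarity identity $\vector{\pi}(j) = \sum_i \matrix{P}(i,j) \vector{\pi}(i)$, iterated backwards through any $k$-step path supplied by irreducibility, would propagate zeros to every state and contradict the normalization $\sum_i \vector{\pi}(i) = 1$. Given strict positivity, I would take two hypothetical stationary distributions $\vector{\pi}_1, \vector{\pi}_2$ and apply a maximum-principle argument on the ratio $\vector{\pi}_1(i)/\vector{\pi}_2(i)$: letting $i^\ast$ be the maximizer, the stationarity equation combined with irreducibility forces equality everywhere on any closed walk through $i^\ast$, hence on all of $\cV$, so $\vector{\pi}_1$ is a scalar multiple of $\vector{\pi}_2$; normalization then forces $\vector{\pi}_1 = \vector{\pi}_2$. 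Equivalently, one can just cite the left-eigenspace structure from the Perron--Frobenius theorem for irreducible non-negative matrices.

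For convergence, the key lemma I would establish is that irreducibility plus aperiodicity imply the existence of a finite $t_0$ such that every entry of $\matrix{P}^{t_0}$ is strictly positive. This rests on the classical number-theoretic fact (Sylvester--Frobenius / Chicken McNugget) that a set of positive integers closed under addition with $\gcd = 1$ contains all sufficiently large integers: aperiodicity supplies the $\gcd\!=\!1$ condition for the set of return times at each vertex, and irreducibility lets me splice in detours between any two vertices. Setting $\delta := \min_{i,j \in \cV} \matrix{P}^{t_0}(i,j) > 0$, I would then invoke Doeblin's contraction via a standard coupling argument, yielding
\[
\| (\matrix{P}^\top)^{t_0}(\vector{p} - \vector{q}) \|_{\mathrm{TV}} \leq (1 - n\delta)\, \| \vector{p} - \vector{q} \|_{\mathrm{TV}}
\]
for any two probability distributions $\vector{p}, \vector{q}$. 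Taking $\vector{q} = \vector{\pi}$ and iterating in blocks of $t_0$ steps gives $\|(\matrix{P}^\top)^t \rv{p}_0 - \vector{\pi}\|_{\mathrm{TV}} \to 0$ geometrically as $t \to \infty$; extending from multiples of $t_0$ to arbitrary $t$ uses only the non-expansiveness of $\matrix{P}^\top$ in TV norm.

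The main obstacle will be the combinatorial step that upgrades aperiodicity plus irreducibility to a strictly positive power of $\matrix{P}$. This is folklore but requires careful handling of the interaction between path concatenation in $\graph$ and the Sylvester--Frobenius theorem; it is exactly the place where the aperiodicity hypothesis does its real work, ruling out residual cyclic behavior (e.g., in a bipartite chain one would otherwise only get convergence along even time steps). Everything downstream --- Doeblin contraction, uniqueness as a byproduct of applying the same contraction to any pair of stationary distributions, and the geometric rate --- is routine linear algebra.
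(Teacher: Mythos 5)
The paper states this theorem without proof, treating it as classical background on finite Markov chains, so there is no in-paper argument to compare against. Your outline is correct and is essentially the textbook proof of the ergodic theorem: existence, uniqueness via a maximum-principle argument on the ratio of two stationary vectors (equivalently, Perron--Frobenius), and convergence by first upgrading irreducibility plus aperiodicity to primitivity --- some power $\matrix{P}^{t_0}$ has all entries strictly positive, via the numerical-semigroup (Sylvester--Frobenius) fact applied to the return-time sets --- and then running Doeblin's contraction in total variation. You correctly locate the real combinatorial content in the primitivity step; the rest is routine linear algebra, and the extension from multiples of $t_0$ to general $t$ by non-expansiveness of $\matrix{P}^\top$ is handled exactly as it should be.

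One caveat worth flagging about your existence step. You propose to inherit existence directly from the preceding \Cref{def:stationary-distribution-graphs}, which asserts that $\vector{\pi}(i) \propto \weight(i,\star)$ is always stationary. That formula is in fact only stationary when the weighted graph is balanced, i.e.\ total in-weight equals total out-weight at every vertex (as happens for symmetric weights); unwinding the stationarity equation $\matrix{P}^\top \vector{\pi} = \vector{\pi}$ with that candidate forces $\sum_i \weight(i,j) = \sum_i \weight(j,i)$ for every $j$, which fails for general irreducible weighted directed graphs. This does not damage your overall strategy --- existence of \emph{some} stationary distribution for a finite irreducible chain follows immediately from Perron--Frobenius or Brouwer, and everything downstream is unchanged --- but you should not lean on the explicit out-weight formula as the source of existence in the directed, non-balanced case.
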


In this work, we are interested in setting a bound for $t$ in the above theorem.
In more detail, we want to bound the minimum number of steps required by the random walk to get close enough to its stationary distribution.
This is known as the $\epsilon_{\sf dist}$-mixing time, and it can be bounded using the {\em second largest eigenvalue} (in absolute value) of the transition matrix of the graph (note that the second largest eigenvalue also has connections with the conductance of the graph).
Below, we report the formal definition of mixing time and its corresponding bound.
\begin{definition}[$\epsilon_{\sf dist}$-mixing time]\label{def:mixing-time}
    Let $\graph = (\cV,\cE)$ be a weighted directed graph that is irreducible and aperiodic (as in~\Cref{def:uniqueness-stationary-distribution}) and let $\matrix{P}$ be its corresponding transition matrix.
    For any $0 < \epsilon_{\sf dist} \leq 1$, the {\em $\epsilon_{\sf dist}$-mixing time} $t_{\sf min}(\epsilon_{\sf dist})$ of $\matrix{P}$ is the smallest number of steps $t$ such that for every starting distribution $\rv{p}_0 \in \RR^{n}$, we have
    \[
        \Big\vert \rv{p}_t - \vector\pi \Big\vert = \Big\vert (\matrix{P}^{\top})^{t}\cdot \rv{p}_0 - \vector\pi \Big\vert \leq \epsilon_{\sf dist},
    \]
    where $\vector{\pi}$ is the unique stationary distribution of $\matrix{P}$.
\end{definition}

\begin{theorem}[Bound on $\epsilon_{\sf dist}$-mixing time]\label{thm:mixing-time}
    Let $\graph = (\cV,\cE)$ be a weighted directed graph that is irreducible and aperiodic, and let $\matrix{P}$ be its corresponding transition matrix.
    Also, let $\eigenvalue_1\geq \eigenvalue_2 \geq \ldots \geq \eigenvalue_n$ be the eigenvalues of $\matrix{P}$, and let $\vector{\pi}$ be the unique stationary distribution of $\matrix{P}$ (\Cref{def:uniqueness-stationary-distribution}).
    For $g = \max \{|\eigenvalue_2|,\ldots, |\eigenvalue_{n}|\}$ and $\pi_{\sf \min} = \min\{\vector{\pi}(1),\ldots, \vector{\pi}(n)\}$, the $\epsilon_{\sf dist}$-mixing time of $\matrix{P}$ is 
    \[
        t_{\sf \min}(\epsilon_{\sf dist}) \leq O\left(\frac{1}{1-g}\cdot \log\left(\frac{1}{\pi_{\sf \min}\cdot \epsilon_{\sf dist}}\right)\right).
    \]
\end{theorem}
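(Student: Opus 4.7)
The plan is the standard spectral mixing argument. By Perron--Frobenius applied to the nonnegative stochastic matrix $\matrix{P}$, together with the hypotheses that $\graph$ is irreducible and aperiodic, $\eigenvalue_1 = 1$ is a simple eigenvalue of $\matrix{P}^\top$ with right eigenvector $\vector{\pi}$ (consistent with~\Cref{def:stationary-distr}), and every other eigenvalue has modulus at most $g < 1$. Using $\matrix{P}^\top \vector{\pi} = \vector{\pi}$ I rewrite $\rv{p}_t - \vector{\pi} = (\matrix{P}^\top)^t(\rv{p}_0 - \vector{\pi})$, so the quantity to bound is the $t$-th iterate applied to a vector lying in the invariant complement of the one-dimensional stationary eigenspace.

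The core calculation is to show that $(\matrix{P}^\top)^t$ contracts this complement at rate $g^t$. In the reversible setting this is immediate: $\matrix{P}$ is self-adjoint in the $\pi$-weighted inner product $\langle f, h \rangle_\pi := \sum_i f_i h_i \pi_i$, and an elementary change of variable $f := \rv{p}/\vector{\pi}$ (elementwise) converts the iteration $\rv{p} \mapsto \matrix{P}^\top \rv{p}$ into $f \mapsto \matrix{P} f$, whose restriction to the $\pi$-orthogonal complement of the constant vector $\mathbf{1}$ has operator norm $g$. This yields $\| \rv{p}_t - \vector{\pi} \|_{2,1/\pi} \le g^t \| \rv{p}_0 - \vector{\pi} \|_{2,1/\pi}$, and the initial norm is bounded crudely by $\| \rv{p}_0 - \vector{\pi} \|_{2,1/\pi}^2 \le \sum_i \rv{p}_0(i)^2/\pi_i \le 1/\pi_{\sf \min}$.

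To match the $\ell_1$-style norm in~\Cref{def:mixing-time} I pass from the weighted $\ell_2$ norm via Cauchy--Schwarz: for any $v$, $|v| = \sum_i (|v_i|/\sqrt{\pi_i})\sqrt{\pi_i} \le \| v \|_{2,1/\pi} \cdot \sqrt{\sum_i \pi_i} = \| v \|_{2,1/\pi}$. Combining gives $|\rv{p}_t - \vector{\pi}| \le g^t/\sqrt{\pi_{\sf \min}}$. Demanding this be at most $\epsilon_{\sf dist}$ and using the elementary inequality $\log(1/g) \ge 1 - g$ for $g \in (0,1)$ yields exactly the claimed shape $t_{\sf \min}(\epsilon_{\sf dist}) = O\bigl(\tfrac{1}{1-g}\log\tfrac{1}{\pi_{\sf \min}\, \epsilon_{\sf dist}}\bigr)$.

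The hard part is that $\graph$ is \emph{directed}, so $\matrix{P}$ need not be reversible and the self-adjointness step above does not literally apply. I would handle this by replacing the eigendecomposition with the Jordan form of $\matrix{P}^\top$: the restriction of $(\matrix{P}^\top)^t$ to the non-stationary invariant subspace has operator norm at most $C \cdot t^{k-1} g^t$, where $k \le n$ is the size of the largest Jordan block among eigenvalues of modulus $g$ and $C$ absorbs the conditioning of the Jordan basis. Once $t$ is a constant multiple of $n/(1-g)$ the polynomial factor $t^{k-1}$ is dominated by $g^t$ (contributing only an additive $O(\log t)$ term inside the logarithm that is absorbed by the $O(\cdot)$), so the same final bound holds. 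Verifying this carefully --- both the Jordan-block decay and that the conditioning constant $C$ does not itself reintroduce a bad dependence on $\pi_{\sf \min}$ or $g$ --- is the main technical subtlety that the reversible sketch above suppresses.
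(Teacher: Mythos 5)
The paper does not actually prove this statement: it is presented as background on Markov chain mixing (alongside the stationary-distribution facts in the same subsection) with no proof supplied. So there is no proof of the paper's to compare against; what follows assesses your argument on its own.

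Your reversible-case sketch is the standard and correct proof: self-adjointness of $\matrix{P}$ in $\langle\cdot,\cdot\rangle_\pi$, contraction of the $\pi$-orthogonal complement of $\mathbf{1}$ at rate $g^t$ in the $\ell^2(1/\pi)$ norm, conversion to the unweighted norm by Cauchy--Schwarz contributing the $1/\sqrt{\pi_{\min}}$ factor, and $\log(1/g) \ge 1-g$. The gap you flag in your last paragraph, however, is a genuine one and not a technicality to be smoothed over. For a non-reversible transition matrix, the Jordan-basis conditioning constant $C$ is \emph{not} controlled by $g$ and $\pi_{\min}$: it can be arbitrarily large for irreducible aperiodic chains with $g$ bounded away from $1$. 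Already a single Jordan block of size $k$ at eigenvalue $g$ makes $\|(\matrix{P}^\top)^t\|$ on the non-stationary subspace behave like $\binom{t}{k-1}g^{t-k+1}$, which rises to a peak of order $(1-g)^{-(k-1)}$ around $t\approx k/(1-g)$ before decaying; ill-conditioned diagonalizable chains are worse still. Consequently the bound $t_{\min}(\epsilon_{\sf dist}) = O\bigl(\tfrac{1}{1-g}\log\tfrac{1}{\pi_{\min}\epsilon_{\sf dist}}\bigr)$ with $g$ the second eigenvalue modulus is a theorem about \emph{reversible} (or normal, or uniformly well-conditioned diagonalizable) chains; for general directed chains one must replace $g$ by a singular-value quantity of the $\pi$-symmetrized operator (equivalently the spectral gap of the multiplicative reversibilization $\matrix{P}\matrix{P}^*$), or accept an additional condition-number factor. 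Your proposal therefore establishes the theorem under an extra hypothesis it does not state, and does not establish it as written for arbitrary weighted directed graphs --- and indeed the statement as written is not literally true without such a hypothesis. This is a defect of the paper's background lemma as much as of your attempt, and it does not threaten the paper's main results (which only need \emph{some} upper bound on mixing time and in fact build slack into the definition of $t_{x,q}$), but since you were asked to prove what was stated, the obstacle you identify at the end is precisely where the proof breaks.
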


\section{Extended Impossibility Theorem}\label{app:extended-impossibility-result}

In this section, we include an extended version of the impossibility result which explicates the concrete relation between the adversarial advantage $\epsilon$, the number of queries, and the $\epsilon_{\sf pert}$-preservation of $\perturbationOracle$. 
In addition, to make the result as generic as possible, we define $\perturbationOracle$'s properties (i.e., irreducibility and aperiodicity) w.r.t. any arbitrary $v$-th quality percentile (for $v \in [0,100]$) of the quality values of all possible responses (of prompt $x$) that can be obtained by watermarking $\model \in \cM$ (recall that in~\Cref{thm:impossibility-simple} we used the median instead of the $v$-th quality percentile).

Formally, for $v \in [0,100]$, let $\cQ_{\model,x} = \{q_1,q_2,\ldots\}$ and $q_{\sf min}$ be defined as follows:
\begin{align}
    \cQ_{\model,x} & \text{ is defined as in~\Cref{eq:quality-set}},\label{eq:quality-set-pecentile} \\
    q_{\sf min} &= \underset{\model\in\cM,x\in\cX}{\min}\{q_{\model,x}\} \text{ where } q_{\model,x} \text{ is the $v$-th quality percentile of } \cQ_{\model,x}.\label{eq:minimum-median-percentile}
\end{align}
Below, we report the extended version of the impossibility result whose perturbation oracle is defined w.r.t. $q_{\sf min}$ of~\Cref{eq:minimum-median-percentile}. The proof is given in~\Cref{sec:proof}.

\begin{theorem}\label{thm:impossibility-perturbation}
    Let $\Pi=(\watermark,\detect)$ be a watermarking scheme for a class of generative models $\cM = \classModel$ with an associated quality function $\quality:\cX\times\cY \rightarrow [0,1]$.
    Let $\perturbationOracle:\cX\times\cY \rightarrow \cY$ be a perturbation oracle (defined over the same prompt space $\cX$ and output space $\cY$ of the class $\cM$) with the same associated quality function $\quality:\cX\times\cY \rightarrow [0,1]$ as $\Pi$.

    Under the following conditions
    \begin{enumerate}
        \item The watermarking scheme $\Pi$ has a false positive $\epsilon_{\sf pos}$-rate (\Cref{def:false-positive-negative});\label{itm:watermarking}
        \item The perturbation oracle $\perturbationOracle$ is $\epsilon_{\sf pert}$-preserving (\Cref{def:perturbation-oracle});\label{itm:oracle}

        \item For every non-watermarked model $\model \in \cM$, for every prompt $x\in\cX$, for every quality $q\in[q_{\sf min},1]$, the $q$-quality $x$-prompt graph representation $\graph^{\geq q}_x$ of $\perturbationOracle$ is irreducible and aperiodic where $q_{\sf min}$ is the minimum median defined in~\Cref{eq:minimum-median-percentile} (for some arbitrary $v\in[0,100]$).
        Also, let $\vector{\pi}_{x,q}$ be the unique stationary distribution\footnote{Recall that a random walk converges to its unique stationary distribution when the corresponding weighted directed graph is irreducible and aperiodic (\Cref{def:uniqueness-stationary-distribution}).} of the transition matrix $\matrix{P}_{x,q}$ of $\graph^{\geq q}_{x}$ (\Cref{def:stationary-distr}) and, for $\epsilon_{\sf dist} \in [0,1]$, let $t_{x,q}$ be the $\epsilon_{\sf dist}$-mixing time of $\matrix{P}_{x,q}$ (\Cref{def:mixing-time,thm:mixing-time}) defined as follows:
        \[
            t_{x,q} = \omega\left(\frac{1}{1- \max\{|\eigenvalue_{2}^{(x,q)}|,\ldots,|\eigenvalue_{n}^{(x,q)}|\}}\cdot \log\left(\frac{1}{\pi^{(x,q)}_{\sf \min}\cdot \epsilon_{\sf dist}}\right)\right)
        \] 
        where $\pi^{(x,q)}_{\sf \min} = \min\{\vector{\pi}_{x,q}(1),\ldots, \vector{\pi}_{x,q}(n_{x,q})\}$, $n_{x,q} = |\cV^{\geq q}_x|$, and $\eigenvalue_{1}^{(x,q)} \geq \eigenvalue_{2}^{(x,q)} \geq \ldots \geq \eigenvalue_{n}^{(x,q)}$ are the eigenvalues of the transition matrix $\matrix{P}_{x,q}$;\footnote{Observe that $t_{x,q}$ is asymptotically larger than the $\epsilon_{\sf dist}$-mixing time of $\graph^{\geq q}_x$ as defined in~\Cref{thm:mixing-time}.}\label{itm:graph-assumption}
    \end{enumerate}
    there exists an \emph{oracle-aided} universal adversary $\adversary^{\perturbationOracle(\cdot,\cdot),\quality(\cdot,\cdot)}$ that $\epsilon$-breaks $\Pi$ (\Cref{def:erasure-attack}) by submitting at most $t$ queries to $\perturbationOracle$ where 
    \begin{align*}
        \epsilon & = \left(1-\frac{v}{100}\right)\left(1-\epsilon_{\sf pos}\right)\left(1-\epsilon_{\sf dist}\right)\left(1 - \sum^{t - t_{\sf err}-1}_{k = 0} {t\choose k} \left(\epsilon_{\sf pert}\right)^{k}\left(1-\epsilon_{\sf pert}\right)^{t-k}\right), \text{ and}\\
        t & = \max_{x\in\cX, q \in [q_{\sf min},1]}\{t_{x,q}\} + t_{\sf err}.
    \end{align*}
\end{theorem}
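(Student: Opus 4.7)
The attacker will run Algorithm~\ref{alg:attack} with walk length $t = t^\star + t_{\sf err}$, where $t^\star := \max_{x \in \cX,\, q \in [q_{\sf min}, 1]} t_{x, q}$, and the analysis decomposes into four events whose conjunction implies a successful erasure attack in the sense of~\Cref{def:erasure-attack}. Writing $q_0 := \quality(x, y)$ for the quality of the initial watermarked response, these events are: $E_1$: $q_0 \geq q_{\sf min}$; $E_2$: at least $t^\star$ of the $t$ perturbation queries are accepted by the quality check; $E_3$: conditional on $E_1 \wedge E_2$, the law of the returned $y'$ is within total-variation distance $\epsilon_{\sf dist}$ of the stationary distribution $\vector{\pi}_{x, q_0}$ of $\matrix{P}_{x, q_0}$; and $E_4$: $\detect_\k(x, y') = 0$. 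The quality preservation condition $\quality(x, y') \geq q_0$ is automatic from the algorithm's acceptance rule, so success reduces to $E_1 \wedge E_2 \wedge E_3 \wedge E_4$.

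Bounding $E_1$ and $E_2$ is direct. For $E_1$, the definition of $q_{\sf min}$ as the minimum over $(\model, x)$ of the $v$-th quality percentile of $\cQ_{\model, x}$ immediately yields $\Pr[E_1] \geq 1 - v/100$. For $E_2$, the $\epsilon_{\sf pert}$-preserving property of $\perturbationOracle$ guarantees that at any current state $y'$ with $\quality(x, y') \geq q_0$, a fresh call to $\perturbationOracle(x, y')$ is accepted with probability at least $\epsilon_{\sf pert}$; since queries are independent conditional on the current state, the number of acceptances stochastically dominates $\mathrm{Bin}(t, \epsilon_{\sf pert})$, and a binomial tail bound produces the factor $1 - \sum_{k=0}^{t - t_{\sf err} - 1} \binom{t}{k} \epsilon_{\sf pert}^k (1-\epsilon_{\sf pert})^{t-k}$ appearing in the theorem. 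Note that $t^\star \geq t_{x, q_0}$ for every valid $q_0$, so $t^\star$ acceptances is sufficient for a full mixing time's worth of walk steps on $\graph^{\geq q_0}_x$.

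The core of the argument is $E_3$ and $E_4$. The key identification is that the sequence of accepted perturbations is a random walk on $\graph^{\geq q_0}_x$ with transition matrix $\matrix{P}_{x, q_0}$: by~\Cref{def:graph-perturbation-oracle}, row-normalizing the weights $\weight(y_0, y_1) = \Pr[y_1 = \perturbationOracle(x, y_0)]$ restricted to $\cV^{\geq q_0}_x \times \cV^{\geq q_0}_x$ yields precisely the conditional law of $\perturbationOracle(x, y_0)$ given acceptance, which is what the rejection loop samples. Assumption~\ref{itm:graph-assumption} makes this chain irreducible and aperiodic for every $q_0 \in [q_{\sf min}, 1]$, and~\Cref{thm:mixing-time} then gives $\Pr[E_3 \mid E_1 \wedge E_2] \geq 1 - \epsilon_{\sf dist}$ after $t_{x, q_0}$ accepted steps. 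Crucially, $\matrix{P}_{x, q_0}$ depends only on $\perturbationOracle$ and $\quality$, both independent of the key $\k$ drawn by $\watermark(\model)$, so $\vector{\pi}_{x, q_0}$ is itself $\k$-independent; averaging the pointwise false positive $\epsilon_{\sf pos}$-rate of~\Cref{def:false-positive-negative} against $\vector{\pi}_{x, q_0}$ therefore gives $\Pr[E_4 \mid E_3] \geq 1 - \epsilon_{\sf pos}$, and multiplying the four bounds reproduces the stated $\epsilon$.

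The step I expect to be the most delicate is the identification in $E_3$: one must verify that rejection sampling from $\perturbationOracle(x, y')$ against the threshold $q_0$ produces exactly the row of $\matrix{P}_{x, q_0}$ corresponding to $y'$, and that repeated rejections do not induce correlation between successive accepted states (they do not, because each perturbation call is an independent draw given the current $y'$). A secondary subtlety is the $\k$-independence of $\vector{\pi}_{x, q_0}$, on which the application of the false positive bound relies --- this hinges on the attacker's oracles being defined independently of the watermarking key, as required by the theorem's setup. Once these two points are made rigorous, the rest of the proof is a routine composition of the percentile bound, a binomial tail, a mixing-time estimate, and the false positive rate.
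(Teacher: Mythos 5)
Your proposal matches the paper's proof essentially step for step. The paper's adversary is the same rejection-sampling walk, explicitly outputting $\bot$ when the counter of accepted perturbations falls below $t - t_{\sf err}$; your event $E_2$ is exactly the complement of that $\bot$ event, so there is no substantive difference. The paper then proves three lemmas corresponding to your $E_1$ (percentile bound on the initial quality, via the definition of $q_{\sf min}$ as the minimum $v$-th percentile), your $E_2$ (binomial tail on the acceptance counter), and your $E_3$ (the accepted subsequence is a random walk on $\graph^{\geq q_0}_x$ whose law is $\edist$-close to $\vector{\pi}_{x,q_0}$ after $t_{x,q_0} \leq t^\star$ accepted steps), and finishes with your $E_4$ (the stationary distribution is $\k$-independent since $\perturbationOracle$ and $\quality$ are fixed before $\watermark$ samples the key, so the pointwise false-positive bound averages over $\vector{\pi}_{x,q_0}$). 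Your ``stochastic dominance over $\mathrm{Bin}(t,\epsilon_{\sf pert})$'' phrasing is marginally more careful than the paper's, which writes the binomial bound as an equality even though $\epsilon_{\sf pert}$ is only a lower bound on the per-step acceptance probability; this is a cosmetic improvement, not a different route. The quality-preservation half of the erasure condition is automatic from the acceptance rule in both treatments. In short: same decomposition, same key observation that the mixing distribution is key-independent, same final product of four factors.
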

\noindent Observe that the above theorem implies~\Cref{thm:impossibility-simple} since $q_{\sf min}$ is the minimum median quality (as in~\Cref{thm:impossibility-simple}) when $v=50$.

For sufficiently small values of $v$, $\epsilon_{\sf dist}$, and sufficiently high values of $\epsilon_{\sf pert}$, the adversarial advantage $\epsilon$ of~\Cref{thm:impossibility-perturbation} approaches $1-\epos$ (i.e., the best possible advantage).\footnote{If $\epsilon_{\sf pert}$ is not high enough, the same result can be achieved by increasing $t_{\sf err}$ of~\Cref{thm:impossibility-perturbation}. Note that by increasing $t_{\sf err}$, we increase the overall number $t$ of perturbation queries.}
In turn, if $\epos$ is also small (e.g., $\epos \leq \frac{1}{10}$), $\epsilon$ is close to $1$.

\subsection{Proof of~\Cref{thm:impossibility-perturbation}}\label{sec:proof}
Fix $\perturbationOracle$ and $\quality$ as defined in~\Cref{thm:impossibility-perturbation}.
Let $\model \in \cM$ and let $(\k,\model_\k)$ (output by $\watermark(\model)$) be the original generative model and its watermarked version (with the associated secret-key $\k$) that the adversary is trying to break (recall that $\model$ and $(\k,\model_\k)$ are unknown to $\adversary$).
Consider the following adversary $\adversary^{\perturbationOracle(\cdot,\cdot),\quality(\cdot,\cdot)}$:
\begin{enumerate}
    \item Receive a prompt $x\in \cX$ and a watermarked output $y \in \cY$ (recall that $y$ is computed as $y \getsr \model_\k(x)$ as defined in the erasure attack experiment of~\Cref{def:erasure-attack}).
    \item Initialize $y_0 = y$ and $\ctr = 0$.
    \item Send $(x,y_0)$ to $\quality$ and receive the answer $q_0$.
    \item For each $i \in [t]$, the adversary proceeds as follows (where $t$ is as defined in~\Cref{thm:impossibility-perturbation}):
    \begin{enumerate}
        \item Send $(x,y_{i-1})$ to $\perturbationOracle$ and receive the answer $\tilde y$.
        \item Send $(x,\tilde y)$ to $\quality$ and receive the answer $\tilde q$.
        \item If $\tilde q \geq q_0$, set $y_i = \tilde y$ and increment the counter $\ctr$ (i.e., $\ctr = \ctr + 1$). Otherwise, if $\tilde q < q_0$, set $y_i = y_{i-1}$.
    \end{enumerate}
    \item Finally, output $y_t$ if $\ctr \geq t-t_{\sf err}$ (where $t$ and $t_{\sf err}$ are as defined in~\Cref{thm:impossibility-perturbation}). Otherwise, if $\ctr < t-t_{\sf err}$, output $\bot$ (i.e., an error message).
\end{enumerate}
\noindent To show that $\adversary^{\perturbationOracle(\cdot,\cdot),\quality(\cdot,\cdot)}$ $\epsilon$-breaks the watermarking scheme $\Pi$, we prove the following three lemmas.
\begin{lemma}\label{lmm:high-quality-watermark-output}
    For every $\model\in\cM$, for every prompt $x\in\cX$, we have 
    \[
        \prob{\quality(x,y_0) \geq q_{\sf min}} \geq 1-\frac{v}{100},
    \]
    where $y_0 = y$ is the watermarked output given as input to the adversary $\adversary^{\perturbationOracle(\cdot,\cdot),\quality(\cdot,\cdot)}$.
\end{lemma}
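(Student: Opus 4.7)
The plan is to unwind the definitions of $q_{\sf min}$ and the $v$-th quality percentile. Recall that $y_0 = y$ is drawn as $y \getsr \model_\k(x)$ where $(\k,\model_\k) \getsr \watermark(\model)$, so the random variable $\quality(x,y_0)$ is exactly the random variable $\quality(x,\model_\k(x))$ whose support is $\cQ_{\model,x}$ (see~\Cref{eq:quality-set-pecentile}). The first step is therefore to rewrite the event $\{\quality(x,y_0) \geq q_{\sf min}\}$ in terms of this distribution.

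Next, I would use the defining property of the $v$-th percentile $q_{\model,x}$ of $\cQ_{\model,x}$: by definition,
\[
    \prob{\quality(x,\model_\k(x)) \geq q_{\model,x}} \;\geq\; 1-\tfrac{v}{100}.
\]
Finally, I would invoke~\Cref{eq:minimum-median-percentile}, which states $q_{\sf min} = \min_{\model\in\cM,x\in\cX} q_{\model,x}$, and conclude that in particular $q_{\sf min} \leq q_{\model,x}$ for the specific $\model$ and $x$ under consideration. Hence the event $\{\quality(x,y_0) \geq q_{\model,x}\}$ is contained in the event $\{\quality(x,y_0) \geq q_{\sf min}\}$, so the latter has probability at least $1 - v/100$ as well.

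There is essentially no obstacle here: the lemma is really just a bookkeeping step unpacking the definitions introduced in~\Cref{eq:quality-set-pecentile} and~\Cref{eq:minimum-median-percentile}, and its role in the overall argument is to guarantee that, with good probability, the adversary starts the random walk at a vertex of the high-quality subgraph $\graph^{\geq q_{\sf min}}_x$ (so that the mixing-time assumption of~\Cref{itm:graph-assumption} applies to the subsequent trajectory). The only subtle point to be careful about is that the probability is taken jointly over $\watermark$'s coins and $\model_\k$'s coins, which is exactly the probability space under which $\cQ_{\model,x}$ is defined, so no additional conditioning is needed.
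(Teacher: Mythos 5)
Your proof is correct and is essentially the same argument the paper gives: unwind the definitions to see that $\quality(x,y_0)$ is distributed over $\cQ_{\model,x}$, apply the defining property of the $v$-th percentile $q_{\model,x}$, and then use $q_{\sf min}\leq q_{\model,x}$ to lower-bound the probability by monotonicity of events. The only difference is that you make the monotonicity step (inclusion of events) explicit, whereas the paper compresses it into a single ``by definition'' remark.
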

\begin{proof}
    The lemma follows by observing that $q_{\sf min} = \underset{\model\in\cM,x\in\cX}{\min}\{q_{\model,x}\}$ where $q_{\model,x}$ is the $v$-th quality percentile of the list $\cQ_{\model,x}$ as defined in~\Cref{eq:quality-set-pecentile,eq:minimum-median-percentile}.
    In other words, $q_{\sf min}$ is the minimum among the $v$-th percentiles $\{q_{\model,x}\}$ each calculated over all possible random coins of both $\watermark$ and the watermarked version of $\model$.
    By definition, this implies that 
        \begin{equation*}\label{eq:quality-median}
            \prob{\quality(x,\model(x)) \geq q_{\sf min}} \geq 1-\frac{v}{100}.
        \end{equation*}
    This concludes the proof of~\Cref{lmm:high-quality-watermark-output}.
\end{proof}
\begin{lemma}\label{lmm:binomial}
    For every $\model\in\cM$, for every prompt $x\in\cX$, we have that 
    \[
        \prob{ \adversary^{\perturbationOracle(\cdot,\cdot),\quality(\cdot,\cdot)}(x,y) \neq \bot} = 1 - \sum^{t - t_{\sf err}-1}_{k = 0} {t\choose k} \left(\epsilon_{\sf pert}\right)^{k}\left(1-\epsilon_{\sf pert}\right)^{t-k},
    \]
    where $y_0 = y$ is the watermarked output given as input to the adversary $\adversary^{\perturbationOracle(\cdot,\cdot),\quality(\cdot,\cdot)}$.
\end{lemma}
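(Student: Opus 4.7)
The plan is to derive the claimed identity by showing that, under the attack, the counter $\ctr$ behaves as a Binomial$(t, \epsilon_{\sf pert})$ random variable, and then to read off the non-abort probability from the corresponding tail. First, I would establish the invariant that at the start of every iteration $i \in [t]$, the current state $y_{i-1}$ satisfies $\quality(x, y_{i-1}) \geq q_0$, where $q_0 := \quality(x, y_0)$. This holds for $i=1$ by definition and is preserved by the update rule: $y_i$ is either a freshly accepted $\tilde y$ which has already passed $\quality(x, \tilde y) \geq q_0$, or it remains equal to $y_{i-1}$, which satisfies the invariant by the inductive hypothesis.

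Second, I would introduce the per-iteration indicator $Z_i := \mathbf{1}[\quality(x, \tilde y_i) \geq q_0]$, where $\tilde y_i \getsr \perturbationOracle(x, y_{i-1})$, so that $\ctr = \sum_{i=1}^{t} Z_i$. Conditioning on the entire history up to iteration $i$, the invariant gives $\quality(x, y_{i-1}) \geq q_0$, and the $\epsilon_{\sf pert}$-preservation property of~\Cref{def:perturbation-oracle} applied at its tight (worst-case) value yields $\prob{Z_i = 1 \mid \text{history}} = \epsilon_{\sf pert}$. Because $\perturbationOracle$ uses fresh independent randomness at each invocation, the $Z_i$ are then mutually independent, so $Z_1, \ldots, Z_t$ are i.i.d.\ Bernoulli$(\epsilon_{\sf pert})$ and $\ctr \sim \mathrm{Bin}(t, \epsilon_{\sf pert})$.

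Third, the adversary outputs a response (rather than $\bot$) precisely when $\ctr \geq t - t_{\sf err}$, so computing the binomial tail yields
\[
\prob{\adv^{\perturbationOracle(\cdot,\cdot),\quality(\cdot,\cdot)}(x,y) \neq \bot} = \prob{\ctr \geq t - t_{\sf err}} = 1 - \sum_{k=0}^{t-t_{\sf err}-1} \binom{t}{k} (\epsilon_{\sf pert})^k (1-\epsilon_{\sf pert})^{t-k},
\]
which is exactly the identity claimed by the lemma.

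The main obstacle, and the step I expect to require the most care, is the move from the one-sided guarantee $\prob{\quality(x,\perturbationOracle(x,y_{i-1})) \geq \quality(x,y_{i-1})} \geq \epsilon_{\sf pert}$ supplied by~\Cref{def:perturbation-oracle} to the exact equality needed to make $\ctr$ a bona fide $\mathrm{Bin}(t, \epsilon_{\sf pert})$ variable rather than merely one that stochastically dominates it. I would address this by reading $\epsilon_{\sf pert}$ as the tight worst-case value of the per-step success probability over all admissible states $y_{i-1}$ with $\quality(x, y_{i-1}) \geq q_0$, so that the binomial tail in the lemma is precisely the non-abort probability under the worst-case perturbation oracle consistent with the $\epsilon_{\sf pert}$-preservation property; this matches the way the quantity is consumed downstream when it is multiplied into the adversarial advantage $\epsilon$ of~\Cref{thm:impossibility-perturbation}.
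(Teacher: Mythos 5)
Your proposal follows essentially the same route as the paper: establish the invariant $\quality(x,y_{i-1}) \geq q_0$, treat $\ctr$ as a sum of per-step acceptance indicators each succeeding with probability $\epsilon_{\sf pert}$, and read off the non-abort probability as the complementary binomial tail $\prob{\ctr \geq t - t_{\sf err}}$. The one subtlety you correctly flag (and which the paper glosses over) is that $\epsilon_{\sf pert}$-preservation in Definition~\ref{def:perturbation-oracle} gives only a \emph{lower} bound on the per-step acceptance probability, and the probability may vary with the visited state, so $\ctr$ merely stochastically dominates $\mathrm{Bin}(t,\epsilon_{\sf pert})$ and the lemma's displayed equality should strictly be $\geq$; this is harmless since the resulting $\epsilon$ is consumed as a lower bound on the adversarial advantage in Theorem~\ref{thm:impossibility-perturbation}.
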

\begin{proof}
    Note that $\adversary^{\perturbationOracle(\cdot,\cdot),\quality(\cdot,\cdot)}(x,y)$ outputs $\bot$ only if $\ctr < t-t_{\sf err}$.
    Moreover, {\em the counter $\ctr$ is not incremented} only when the perturbation oracle, on input $(x,y_i)$ (for some $i \in [t]$), returns $\tilde y$ such that $\quality(x,\tilde y) < \quality(x,y_i)$.
    The latter occurs with probability at most $1-\epsilon_{\sf pert}$ since the perturbation oracle $\perturbationOracle$ is $\epsilon_{\sf pert}$-preserving (\Cref{def:perturbation-oracle}).
    
    Let $\rv{X}$ be the random variable describing the value of $\ctr$ at the end of the adversary's computation.
    Then, we have that
    \begin{equation}\label{eq:binomial-1}
        \prob{\adversary^{\perturbationOracle(\cdot,\cdot),\quality(\cdot,\cdot)}(x,y) \neq \bot} = \prob{\rv{X}\geq t-t_{\sf err} } = 1 - \prob{\rv{X} < t-t_{\sf err}} = 1 -  \prob{\rv{X} \leq t-t_{\sf err}-1}.
    \end{equation}
    The probability $\prob{\rv{X} \leq t - t_{\sf err} -1}$ is characterized by a binomial distribution where the probability of incrementing $\rv{X}$ (resp. not incrementing $\rv{X}$) is $\epsilon_{\sf pert}$ (resp. $1-\epsilon_{\sf pert}$). Formally,
    \begin{equation}\label{eq:binomial-2}
        \prob{\rv{X} \leq t-t_{\sf err}-1} = \sum^{t - t_{\sf err}-1}_{k = 0} {t\choose k} \left(\epsilon_{\sf pert}\right)^{k}\left(1-\epsilon_{\sf pert}\right)^{t-k}.
    \end{equation}
    \Cref{lmm:binomial} follows by combining~\Cref{eq:binomial-1,eq:binomial-2}.
\end{proof}
\begin{lemma}\label{lmm:stationary-distribution}
    For every $\model\in\cM$, for every prompt $x\in\cX$, conditioned on $\adversary^{\perturbationOracle(\cdot,\cdot),\quality(\cdot,\cdot)}(x,y) \neq \bot$ and $\quality(x,y) = q_0 \geq q_{\sf min}$, we have that
    \[
        \left\vert \adversary^{\perturbationOracle(\cdot,\cdot),\quality(\cdot,\cdot)}(x,y) - \vector{\pi}_{x,q_0} \right\vert \leq \epsilon_{\sf dist}
    \]
    where $\vector{\pi}_{x,q_0}$ is the unique stationary distribution of the transition matrix $\matrix{P}_{x,q_0}$ of $\graph^{\geq q_0}_x$.
\end{lemma}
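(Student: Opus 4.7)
The plan is to interpret each \emph{accepted} perturbation step as one step of a random walk on $\graph^{\geq q_0}_x$ with transition matrix $\matrix{P}_{x, q_0}$, and then invoke the $\epsilon_{\sf dist}$-mixing assumption from item~\ref{itm:graph-assumption} of~\Cref{thm:impossibility-perturbation}.

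First, I would verify by induction on $i \in \{0,\ldots,t\}$ the invariant $y_i \in \cV^{\geq q_0}_x$: the base case is $q_0 = \quality(x,y) \geq q_{\sf min}$, and the inductive step is built into the algorithm, which sets $y_i = \tilde y$ only when $\quality(x,\tilde y) \geq q_0$ and otherwise keeps $y_i = y_{i-1}$. Next, letting $\tau_k$ denote the iteration of the $k$-th counter increment and $\rv{X}_k := y_{\tau_k}$ (with $\rv{X}_0 := y_0$), I would argue that $(\rv{X}_k)_{k \geq 0}$ is a Markov chain on $\cV^{\geq q_0}_x$ with one-step kernel $\matrix{P}_{x, q_0}$. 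This is a textbook rejection-sampling computation: conditioned on the event $\{\tilde y \in \cV^{\geq q_0}_x\}$ for $\tilde y \sim \perturbationOracle(x,u)$, the value $\tilde y$ lands on each $v \in \cV^{\geq q_0}_x$ with probability $\weight(u,v)/\weight(u,\star) = \matrix{P}_{x,q_0}(u,v)$ by~\Cref{def:graph-perturbation-oracle}, and the intervening failed iterations do not alter the distribution of the next accepted state.

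On the non-$\bot$ event $A = \{\ctr \geq t - t_{\sf err}\}$, the output satisfies $y_t = \rv{X}_\ctr$ with $\ctr \geq t - t_{\sf err} \geq t_{x,q_0}$ (by the choice of $t$ in the theorem). Since $t_{x,q_0}$ is at least the $\epsilon_{\sf dist}$-mixing time of $\matrix{P}_{x,q_0}$, the unconditional law satisfies $\bigl|\mathcal{L}(\rv{X}_k) - \vector{\pi}_{x,q_0}\bigr| \leq \epsilon_{\sf dist}$ for every $k \geq t_{x,q_0}$ by~\Cref{thm:mixing-time}; moreover, since $\matrix{P}_{x,q_0}^\top$ contracts $\ell_1$ distance toward $\vector{\pi}_{x,q_0}$, this bound persists for every subsequent $k$. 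I would then decompose $\mathcal{L}(y_t \mid A) = \sum_{k \geq t - t_{\sf err}} \Pr[\ctr = k \mid A]\,\mathcal{L}(\rv{X}_k \mid \ctr = k)$ and apply the triangle inequality to conclude.

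\textbf{Main obstacle.} The delicate step is the last one: because the per-iteration success probability at state $u$ equals $\weight(u,\star)$ and is state-dependent, the event $\{\ctr = k\}$ is correlated with the trajectory, so $\mathcal{L}(\rv{X}_k \mid \ctr = k)$ need not literally agree with the unconditional law to which the mixing bound applies directly. The cleanest remedy I see is to invoke the strong Markov property at the random iteration $\tau_{t - t_{\sf err}}$ of the $(t - t_{\sf err})$-th success: on $A$ this stopping time is at most $t$ and $\rv{X}_{t - t_{\sf err}}$ has taken exactly $t - t_{\sf err} \geq t_{x,q_0}$ walk steps, so its unconditional law is $\epsilon_{\sf dist}$-close to $\vector{\pi}_{x,q_0}$, and any additional successful steps before iteration $t$ preserve this by $\ell_1$ contraction; the residual bias from conditioning on $A$ rather than on a fixed number of successes can be absorbed via a coupling between the conditioned and unconditioned chains.
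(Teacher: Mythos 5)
Your approach mirrors the paper's: interpret the accepted steps as a random walk on $\graph^{\geq q_0}_x$ with kernel $\matrix{P}_{x,q_0}$, observe that conditioned on not aborting there are at least $t - t_{\sf err} \geq t_{x,q_0}$ such steps, and invoke the $\epsilon_{\sf dist}$-mixing bound. The important difference is one of rigor: the paper's proof treats ``$\ctr \geq t_{x,q_0}$ accepted steps $\Rightarrow$ output is $\epsilon_{\sf dist}$-close to $\vector\pi_{x,q_0}$'' as immediate, whereas you correctly flag this as the delicate step. You are right to do so. Because the per-iteration acceptance probability $\weight(u,\star)$ is only lower-bounded by $\epsilon_{\sf pert}$ and can vary across vertices, the event $A = \{\ctr \geq t-t_{\sf err}\}$ reweights trajectories toward paths that spend more time at high-acceptance states. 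Consequently $\mathcal{L}(\rv{X}_k \mid \ctr = k)$, and even $\mathcal{L}(\rv{X}_{t-t_{\sf err}} \mid A)$, need not equal the unconditional laws to which \Cref{thm:mixing-time} applies; the paper's proof implicitly treats $\ctr$ as if it were independent of the walk, which is exactly what your ``obstacle'' paragraph questions. This gap is real, and it applies to the paper's write-up as much as to yours.

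That said, your proposed remedy is not yet a proof, for two reasons. First, the ``$\ell_1$ contraction'' for the additional accepted steps between $\tau_{t-t_{\sf err}}$ and iteration $t$ is not automatic: the number of extra steps is itself correlated with the state $\rv{X}_{t-t_{\sf err}}$ through the event $A$ and the fixed budget of $t$ iterations, so this step re-encounters the same conditioning bias you are trying to avoid. Second, ``absorbed via a coupling'' must be made quantitative. A clean way to make both your argument and the paper's watertight is to have $\adversary$ output $\rv{X}_{t-t_{\sf err}}$ (the state after exactly $t - t_{\sf err}$ accepted steps) instead of $y_t$, and then to bound $d_{\mathrm{TV}}\bigl(\mathcal{L}(\rv{X}_{t-t_{\sf err}} \mid A),\ \mathcal{L}(\rv{X}_{t-t_{\sf err}})\bigr) \leq \Pr[\bar A]/\Pr[A]$; since $\Pr[\bar A]$ already appears in the advantage formula via \Cref{lmm:binomial}, folding this residual bias explicitly into $\epsilon$ would not materially change the final statement of \Cref{thm:impossibility-perturbation}.
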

\begin{proof}
    Assume that $\adversary^{\perturbationOracle(\cdot,\cdot),\quality(\cdot,\cdot)}(x,y) \neq \bot$ and $\quality(x,y) = q_0 \geq q_{\sf min}$.
    It is easy to see that the computation of the adversary $\adversary^{\perturbationOracle(\cdot,\cdot),\quality(\cdot,\cdot)}(x,y)$ is exactly a random walk over $\graph^{\geq q_0}_x = (\cV^{\geq q_0}_x,\cE^{\geq q_0}_x)$, where $q_0$ is the quality of the watermarked output $y = y_0$ given as input to the adversary. 
    This is because at each iteration $i \in [t]$, the adversary sets $y_i = \tilde y$ (i.e., it moves from $y_{i-1}$ to $y_i = \tilde y$ according to $\graph^{\geq q_0}_x$) only if $\tilde y$ has a quality of at least $q_0$.
    This corresponds exactly to a random walk over the vertices with quality at least $q_0$, which is the definition of the $q_0$-quality $x$-prompt graph representation $\graph^{\geq q_0}_x$ of $\perturbationOracle$.

  In addition, the following conditions hold:
    \begin{enumerate}
        \item By leveraging~\Cref{itm:graph-assumption} of~\Cref{thm:impossibility-perturbation}, for every $q \in [q_{\sf min},1]$, the $q$-quality $x$-prompt graph representation $\graph^{\geq q}_x$ is irreducible and aperiodic. Thus, a random walk over the weighted directed graph $\graph^{\geq q_0}_x$ will {\em eventually} converge to its unique stationary distribution $\vector{\pi}_{x,q_0}$ 
        (recall that $q_0 \geq q_{\sf min}$ by assumption).
        
        \item By assumption $\adversary^{\perturbationOracle(\cdot,\cdot),\quality(\cdot,\cdot)}(x,y) \neq \bot$. Thus, $\ctr \geq t-t_{\sf err} = \max_{x\in\cX, q \in [q_{\sf min},1]}\{t_{x,q}\}$ (as defined in~\Cref{itm:graph-assumption} of~\Cref{thm:impossibility-perturbation}) which, in turn, implies $\ctr \geq t_{x,q_0}$ since $q_0 \geq q_{\sf min}$.
        Note that $\ctr$ corresponds to the number of steps performed by the adversary during the random walk over the $q_0$-quality $x$-prompt graph $\graph^{\geq q_0}_x = (\cV^{\geq q_0}_x,\cE^{\geq q_0}_x)$.
    \end{enumerate}
    By leveraging the above two conditions, we conclude that $(i)$ a random walk over $\graph^{\geq q_0}_x = (\cV^{\geq q_0}_x,\cE^{\geq q_0}_x)$ converges to its unique stationary distribution $\vector{\pi}_{x,q_0}$, and $(ii)$ $\adversary^{\perturbationOracle(\cdot,\cdot),\quality(\cdot,\cdot)}(x,y)$'s random walk is composed of at least $\ctr\geq t_{x,q_0}$ steps where $t_{x,q_0}$ is asymptotically larger than the $\epsilon_{\sf dist}$-mixing time of the transition matrix $\matrix{P}_{x,q_0}$ of $\graph^{\geq q_0}_x$ (see~\Cref{itm:graph-assumption} of~\Cref{thm:impossibility-perturbation}).
    Thus, we conclude that the output distribution of $\adversary^{\perturbationOracle(\cdot,\cdot),\quality(\cdot,\cdot)}(x,y)$ is $\epsilon_{\sf dist}$-close to $\vector{\pi}_{x,q_0}$, i.e., 
    \[
        \left\vert \adversary^{\perturbationOracle(\cdot,\cdot),\quality(\cdot,\cdot)}(x,y) - \vector{\pi}_{x,q_0} \right\vert \leq \epsilon_{\sf dist}.
    \]
    This concludes the proof.
\end{proof}

By leveraging~\Cref{lmm:high-quality-watermark-output,lmm:binomial,lmm:stationary-distribution}, we have that for every $\model \in \cM$, for every prompt $x\in\cX$, the following conditions hold:
\begin{enumerate}
    \item Let $\rv{E}$ be the event that $\quality(x,y) = q_0 \geq q_{\sf min}$ and $\adversary^{\perturbationOracle(\cdot,\cdot),\quality(\cdot,\cdot)}(x,y) \neq \bot$.
    Then, by leveraging~\Cref{lmm:high-quality-watermark-output,lmm:binomial} we conclude that $\rv{E}$ occurs with probability at least 
    \[
        \left(1-\frac{v}{100}\right)\left(1 - \sum^{t - t_{\sf err}-1}_{k = 0} {t\choose k} \left(\epsilon_{\sf pert}\right)^{k}\left(1-\epsilon_{\sf pert}\right)^{t-k}\right),
    \]
    where the probability is taken over $(\k,\model)$ output by $\watermark(\model)$, the random coins of the watermarked model $\model_\k$, and the perturbation oracle $\perturbationOracle$.

    \item Conditioned on $\rv{E}$, the quality $\quality(x,y_t)$ of $y_t \neq \bot$ (output by the adversary) is at least $q_0 \geq q_{\sf min}$. This is because $y_t$ is the result of a random walk over the graph $\graph^{\geq q_0}_x = (\cV^{\geq q_0}_x,\cE^{\geq q_0}_x)$ (of the perturbation oracle $\perturbationOracle$) composed of all vertices of quality at least $q_0$ (see also the proof of~\Cref{lmm:stationary-distribution}).

    \item Conditioned on $\rv{E}$, the output $y_t \neq \bot$ (produced by the adversary) is such that $\detect_\k(x,y_t) = 0$ (i.e., $y_t$ is not watermarked) with probability at least $(1-\epsilon_{\sf pos})(1-\epsilon_{\sf dist})$.
    This follows by observing that, conditioned on $\rv{E}$ (i.e., $\quality(x,y) = q_0 \geq q_{\sf min}$ and $\adversary^{\perturbationOracle(\cdot,\cdot),\quality(\cdot,\cdot)}(x,y) \neq \bot$), the output distribution of $\adversary^{\perturbationOracle(\cdot,\cdot),\quality(\cdot,\cdot)}(x,y)$ is $\epsilon_{\sf dist}$-close to the unique stationary distribution $\vector{\pi}_{x,q_0}$ (as defined in~\Cref{lmm:stationary-distribution}).
    In turn, since $\vector{\pi}_{x,q_0}$ is independent of $(\k,\model_\k) \getsr \watermark(\model)$,\footnote{This is because the output distribution of the perturbation oracle $\perturbationOracle$ (which, in turn, defines its corresponding graph) is defined before the sampling of $(\k,\model_\k)$ according to $\watermark(\model)$.} we have that $\detect_\k(x,y_t)=0$ with probability at least $(1-\epsilon_{\sf pos})$ due to the false positive $\epsilon_{\sf pos}$-rate (\Cref{def:false-positive-negative}) of the watermarking scheme $\Pi$.
\end{enumerate}
By combining the above arguments, we conclude that, for every model $\model \in \cM$, for every prompt $x\in\cX$, the following probability holds:
\[
    \prob{
        \begin{array}{c}
            \detect_\k(x,y_t) = 0 \text{ and } \quality(x,y_t) \geq \quality(x,y)
        \end{array}
        :
        \begin{array}{c}
            y \getsr \model_\k(x),\ y_t \getsr \adversary^{\perturbationOracle(\cdot,\cdot),\quality(\cdot,\cdot)}(x,y)
        \end{array}
    } \geq \epsilon 
\]
where $(\k,\model_\k) \getsr \watermark(\model)$ and $\epsilon$ as defined in~\Cref{thm:impossibility-perturbation}.
This concludes the proof of~\Cref{thm:impossibility-perturbation}.

\section{Additional Experimental Results}

\subsection{Qualitative Results through Steps}
\label{app:intermediate}
Given a C4 news article prefix ``\textit{``Whoever gets him, they'll be getting a good one,'' David Montgomery said. INDIANAPOLIS — Hakeem Butler has been surrounded by some of the best wide receivers on the planet this week at the}'', we showcase in~\Cref{fig:intermediate} the results of removing the watermark in the following response:
{\itshape``Whoever gets him, they'll be getting a good one,'' David Montgomery said. INDIANAPOLIS — Hakeem Butler has been surrounded by some of the best wide receivers on the planet this week at the NFL Scouting Combine. They talk about technique, about routes, about film breakdowns. They discuss ideas for how to improve their games. Butler is the outsider. He is the kid who always felt he belonged, but never quite fit in. He is the kid who refused to sit on the bench, even if he didn't know if he'd ever get in a game. He is the kid who traveled from the middle of Iowa on a bus, all the way to Indianapolis this week, so he could show NFL scouts he belonged. He is the kid, who when asked to sit down, he instead stood up. ``I'm not a benchwarmer,'' Butler told The Athletic, ``and it was like that all throughout college.'' Butler is the guy who's not afraid to speak his mind.''
}

Note that for the initial watermarked response, the detection algorithm of KGW gives a z-score of 7.340 and a p-value of 0.000. 
We report the results every $5$ valid random walks.  %
At each generation, we highlighted the different parts of the texts that are modified with the corresponding detection (z-score, p-value) and GPT-4 quality judgment results.

We observe that the detection performance generally keeps decreasing while the new texts are of high quality according to the quality oracle implemented as GPT-4. 
Especially, the oracle score stays at 0 for the final several examples, showing that they are of similar quality to the watermarked response.

\section{Experimental Details}

\subsection{Implementation Details of Attack }\label{app:attack_details}
We discuss our key design choices and implementations. In general, we found our attack effective for all the settings considered and is not susceptible to the hyper-parameters and choices below. 

\textbf{Perturbation oracle.} Recall that we generate watermarked texts with maximum generation tokens of 200 or 512 and attack by replacing one span of the text at a time, thus we set the span length to be 6 for all the attacks in the main table. 
For each infill, we do top-p sampling and p, the minimum. and maximum infilled text length according to~\Cref{tab:hyperparam}. 
Note that we generally use the default hyper-parameters and don't tune them too much.
We incorporate backtracking into the random walk as another error-reduction mechanism. If the perturbation oracle suggests many candidates (above some ``patience'' threshold) without any of them passing the quality checks, then we undo the most recent step of the walk.

\textbf{Quality oracle.}
\label{app:reason}
Note that in general all kinds of watermarks are removable by omitting contents but would degrade the quality. 
Therefore, it is important to ensure the quality at each step of our iterative attack process. 
To realize this goal, we implement three alternative quality oracles, trading off quality, cost, and efficiency (\Cref{tab:qual_tradeoff}). 
In principle, attackers can tailor quality oracles according to their needs, considering the trade-offs among quality, efficiency, and cost.
For example, i) Efficiency and Cost: malicious users can efficiently generate many high-quality texts using GPT-4 for automated phishing and strip the watermark using reward models as the quality oracle with slight degradation in quality. ii) Quality: a student can wait a week to generate a solution with GPT-4 to one assignment problem whose deadline is one week from now with hundreds of dollars for paying for GPT-4 as a quality oracle.

Though our impossibility results are generic and the assumptions can be made stronger when models become more capable, we instantiate our quality oracle reward models + GPT-3.5 for quality checking for most experiments.
The reasons are twofold: firstly, it's much more efficient and less expensive to use reward models for comparing responses to filter obviously bad texts; moreover, we find that GPT-3.5 and GPT-4 have significant position biases \citep{zheng2023judging} that grade the first response with higher quality when evaluating two responses to a given query (prompting details are in~\Cref{app:prompt}). 
We find that such position bias limitation is substantial in practice when the response length is greater than $200$ so it'd be hard to get non-watermarked examples in a reasonable amount of queries even if powerful models can evaluate multiple nuance aspects of individual responses. 
We detail the trade-offs in~\Cref{tab:qual_tradeoff}.

\textbf{Design choices to ensure text quality.}
When masking a span, we split and mask the words rather than tokens to avoid generating nonsensical words that degrade text quality.
To alleviate the impact of position bias on the quality oracle, we query it twice and categorize the results as win, tie, or lose. 
Our goal is to get a new text we non-degrading quality, so we reject the new text if it loses in both rounds at each step.

We filtered out low-quality watermarked examples (e.g. those with a great number of repetitions \citep{zhao2023provable}) since our perturbation oracle may persist in that repetition and the original text would not pass quality oracle in the first place. This can be a reasonable intervention as repetition violates our usable preassumption -  we expect capable models like GPT-4 and future LMs not to produce repetitiveness after watermarking their outputs.

\textbf{Stopping conditions.} 
Users are allowed to design the stopping condition according to their needs and understanding of the detection: 
For the three watermarks we considered, we record and score each intermediate example and early stop on the one with z-score less than 1.645, which is practical whenever users roughly know the detection z-score threshold to make the stopping iteration a tunable hyper-parameter; 
For high-stake settings where users don't want to be detected as using AI-generated texts when users don't have any knowledge about the watermark scheme and detector or when the watermark is not robust enough (for long texts etc such as EXP \citep{kuditipudi2023robust}, Unigram \citep{zhao2023provable}), we stop when at least $\alpha$ of the words are replaced, where $\alpha$ is set to be $70\%$ for texts of length 500 or 512, and proportionally for other lengths, e.g. $80\%$ for length 600.

\newcommand{\heavyminus}{\rule[0.5ex]{1em}{1.8pt}}
\begin{table*}[ht]
\centering
\caption{The trade-offs of different quality oracle instantiation.}
\label{tab:qual_tradeoff}
\vspace{0.1in}
\resizebox{0.57\textwidth}{!}{
\begin{tabular}{cccc}
\toprule 
\toprule
Model & Quality & (API) Costs & Efficiency \\ 
\hline
 Reward Model &  \heavyminus & \coloright & \coloright \\ \hline
 Reward Model + API Error Checking & \coloright & \heavyminus & \coloright \\ \hline
 GPT-3.5/4 API & \coloright & \wrong & \wrong \\ 
\bottomrule
\bottomrule  
\end{tabular}
}
\end{table*}

\subsection{Watermark Details}
\label{app:baselines}
Denote $|x|_G$ as the number of green list tokens for a generated text with length $T$. We experiment with three popular watermark schemes with their default hyper-parameters in general (\Cref{tab:hyperparam}). 
\begin{itemize}
    \item KGW~\citep{kirchenbauer2023watermark} is about selecting a randomized set of ``green'' tokens before a word is generated, and then softly promoting the use of green tokens during sampling, which can be detected efficiently. 
    We adopt a \textit{one proportion z-test}, where $z=2\left(|x|_G-T / 2\right) / \sqrt{T}$. to evaluate the null hypothesis $H_0$:\textit{The text sequence is generated with no knowledge of the red list rule} and choose to reject the null hypothesis if $z > 4$.
    \item  EXP~\cite{kuditipudi2023robust} is a distortion-free watermark framework that preserves the original LM’s text distribution, at least up to some maximum number of generated tokens. 
    For detection, we compute a p-value with respect to a test statistic that measures the minimum cost alignment between length $k$ subsequences of the text and key, via a permutation test with $5000$ resamples. 
    If $\phi$ returns a small p-value then the text is likely watermarked. %
    \item Unigram \citep{zhao2023provable} is proposed as a watermark robust to edit property. 
    We calculate the number of green list tokens $|x|_G$ as well as the z-statistic $z=\left(|x|_G-\gamma T\right) / \sqrt{T \gamma(1-\gamma)}$ where $\gamma=0.5$ means the fraction of the vocabulary included in the green list. The text is predicted as AI-generated if $z > 6$. 
    We set the strength parameter $\delta=2$, the larger $\delta$ is, the lower the quality of the watermarked LM, but the easier it is to detect.
    \item Stable Signature \citep{fernandez2023stable} refers to a method of embedding invisible watermarks into images generated by Latent Diffusion Models (LDMs).
    This approach involves fine-tuning the latent decoder part of the image generator, conditioning it on a binary signature. 
    The modified decoder then generates images that inherently contain this watermark. A pre-trained watermark extractor can later retrieve the hidden signature from any image produced by this model, allowing for the identification of the image's origins even after substantial modifications.
    We utilize the existing VAE checkpoint from \href{https://huggingface.co/imatag/stable-signature-bzh-sdxl-vae-strong}{IMATAG} and use its default hyper-parameters.
    \item Invisible watermark \citep{shield2023ivw} is a default (classic) watermark to the Stable Diffusion model series, which utilizes frequency space transformations to embed watermarks invisibly into images, using Discrete Wavelet Transform and Discrete Cosine Transform. These methods embed watermark bits into the significant frequency components of an image, ensuring robustness against alterations like noise and compression while being sensitive to size and aspect ratio changes. The process involves converting the image from BGR to YUV color space, applying DWT to isolate frequency components, and then using DCT to embed the watermark, making it imperceptible but extractable with appropriate algorithms.
\end{itemize}

\begin{table*}[ht]
\centering
\caption{Default hyperparameters of our attack for LM watermarks. } \label{tab:hyperparam}
\adjustbox{max width=\textwidth}{
\begin{tabular}{c c c c}
    \toprule
    \toprule
     & KGW & EXP & Unigram \\
    \hline
    Attack steps & 200 & 300 & 300 \\ 
    Secret key & 15485863 & 42 & 0 \\
    z stopping threshold & \multicolumn{3}{c}{1.645} \\
    Max watermarked length & \multicolumn{3}{c}{\{200, 512\}} \\
    top-p of $P$ & \multicolumn{3}{c}{0.95} \\
    Span length & \multicolumn{3}{c}{\{4, 6, 8\}} \\
    Num of spans $l$ & \multicolumn{3}{c}{1} \\
    Min infill length & \multicolumn{3}{c}{\{$l$, $1.5l$\}} \\
    Max infill length & \multicolumn{3}{c}{\{$1.5l$, $2l$\}} \\  
    \bottomrule
    \bottomrule 
\end{tabular}
}
\end{table*}

\begin{table*}[ht]
\centering
\caption{Default hyperparameters of our attack for VLM watermarks. } \label{tab:hyperparam}
\adjustbox{max width=\textwidth}{
\begin{tabular}{c c c c}
    \toprule
    \toprule
     & Stable Signature & Invisible Watermark \\
    \hline
    & \multicolumn{2}{c}{Watermarked Model} \\ 
    \cline{2-3}
    Guidance scale & 0 & 7.5 \\
    Num of inference steps & 4 & 50 \\
    Secret key & \multicolumn{2}{c}{0}  \\
    Prompts & \multicolumn{2}{c}{\href{https://huggingface.co/datasets/Gustavosta/Stable-Diffusion-Prompts}{Gustavosta/Stable-Diffusion-Prompts}} \\
    Watermark strength & Strong & - \\
    Scheduler & \multicolumn{2}{c}{KarrasDiffusionSchedulers} \\
    \cline{2-3}
    & \multicolumn{2}{c}{Attack} \\ 
    \cline{2-3}
    Attack steps & \multicolumn{2}{c}{100} \\
    Square Mask ratio & \multicolumn{2}{c}{0.02} \\
    Guidance scale & \multicolumn{2}{c}{5} \\
    Num of inference steps & \multicolumn{2}{c}{100} \\
    Scheduler & \multicolumn{2}{c}{LMSDiscreteScheduler} \\ 
    \bottomrule
    \bottomrule 
\end{tabular}
}
\end{table*}

\subsection{Prompt Examples}
\label{app:prompt}
Recall that it is a design choice to implement the quality oracle as a combination of a reward model and GPT-3.5/GPT-4 to make sure the newly generated sample satisfies certain rubrics by rejecting samples with errors. In our case, we use the prompt in~\Cref{fig:error_prompt}.

\begin{figure*}[h]
    \centering %
    \begin{tcolorbox}[
    title={\textbf{Prompt for avoiding basic errors that can degrade text quality}},
    colframe=cornflowerblue,
    colback=white]
    Below are two candidate responses to the query \{query Q\}:
    
    Original response:
    \{response A\}
    
    New response:
    \{response B\}
    
    Text quality is affected by factors such as unnecessary repetitions, grammar, coherence, relevance, and accuracy of the responses.
    Especially, having grammatical errors, repetitiveness, capitalization errors or punctuation mistakes would greatly degrade the quality of a response.
    
    Therefore, is the new modified response of equal or higher quality compared to the original response? If so, answer Yes, otherwise answer No.

    \end{tcolorbox}
    \caption{GPT-3.5 prompt for avoiding basic errors that can degrade text quality. \label{fig:error_prompt}}
\end{figure*}

We showcase our hand-crafted prompts for our GPT-3.5 or GPT-4 quality oracle for final evaluation (\Cref{fig:oracle_5choice_prompt}). 
We also experiment with the system prompt in \citep{zheng2023judging}, hoping to reduce biases regarding position, length etc but find that it has very minor effects on the results.

\begin{figure*}[h]
    \centering %
    \begin{tcolorbox}[
    title={\textbf{Prompt for evaluating and comparing text quality with five choices.}},
    colframe=cornflowerblue,
    colback=white]

    Below are two candidate responses to the query \{query Q\}:
    
    Response A: \{response A\}
    
    Response B: \{response B\}
    
    Compare which of the two responses above is a better response to the given query. Explain your reasoning step by step.

      (1) Response A is much better than response B
      
      (2) Response A is slightly better than response B
      
      (3) Responses A and B have similar quality
      
      (4) Response B is slightly better than response A
      
      (5) Response B is much better than response A
    \end{tcolorbox}
    \caption{GPT-4 prompt for evaluating and comparing text quality with five choices \label{fig:oracle_5choice_prompt}}
\end{figure*}

\begin{figure}
    \centering
    \caption{Intermediate text after attack (left, \textcolor{candypink}{red}) and its original watermarked text (right, \textcolor{caribbeangreen}{green})}
    \label{fig:intermediate}
    \begin{subfigure}[b]{\textwidth}
        \centering
        \includegraphics[width=\textwidth]{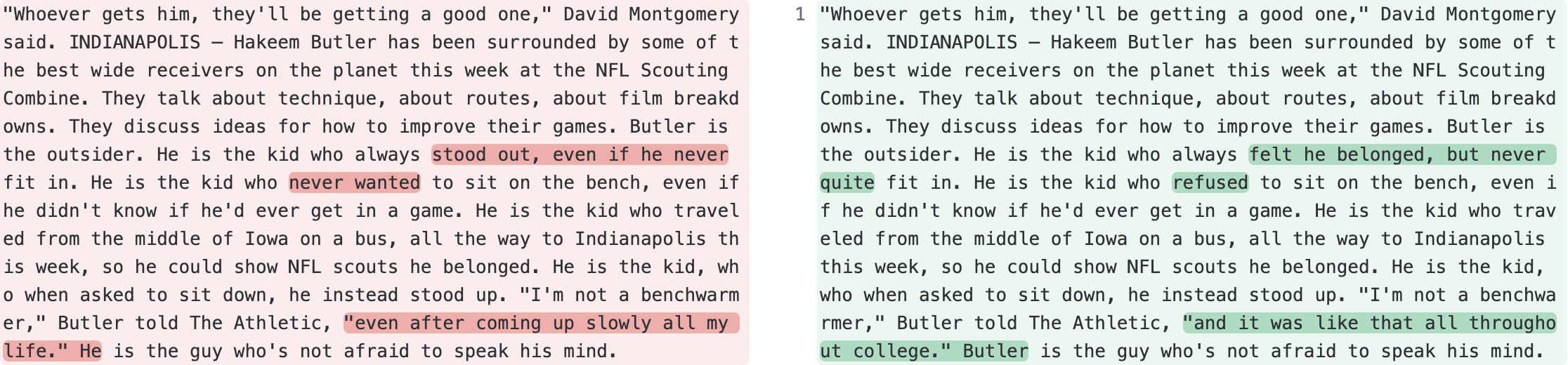}\\
        \caption{The 5-th step, z-score = 5.35, p-value = 0.00, GPT-4 quality oracle score = 0.00. }
        \label{fig:qual1}
    \end{subfigure}
\end{figure}

\begin{figure}
    \centering
    \ContinuedFloat
    \begin{subfigure}[b]{\textwidth}
        \centering
        \includegraphics[width=\textwidth]{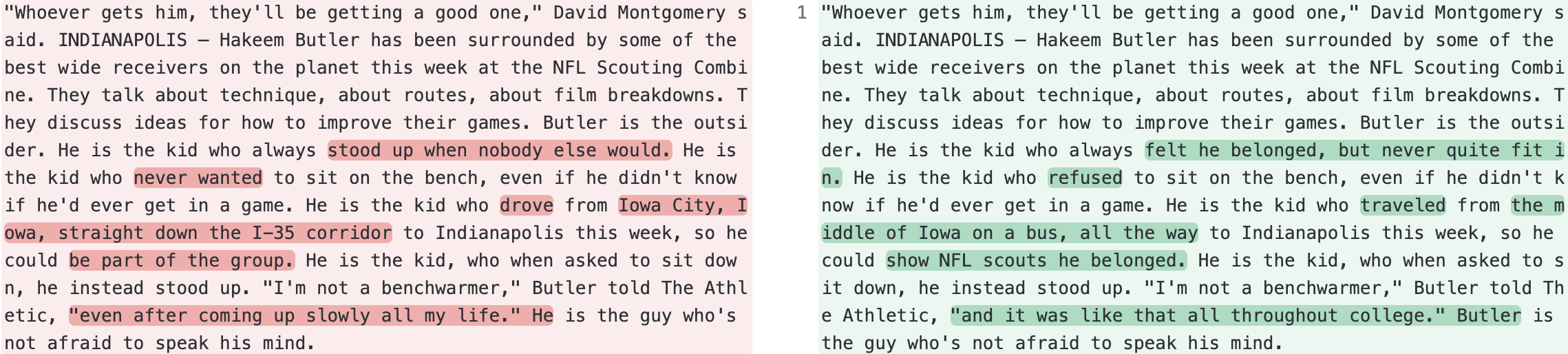}\\
        \caption{The 10-th step, z-score = 4.59, p-value = 0.00, GPT-4 quality oracle score = 0.00. }
        \label{fig:qual2} 
    \end{subfigure}
    \label{fig:intermediate}
\end{figure}

\begin{figure}
    \centering
    \ContinuedFloat
    \begin{subfigure}[b]{\textwidth}
        \centering
        \includegraphics[width=\textwidth]{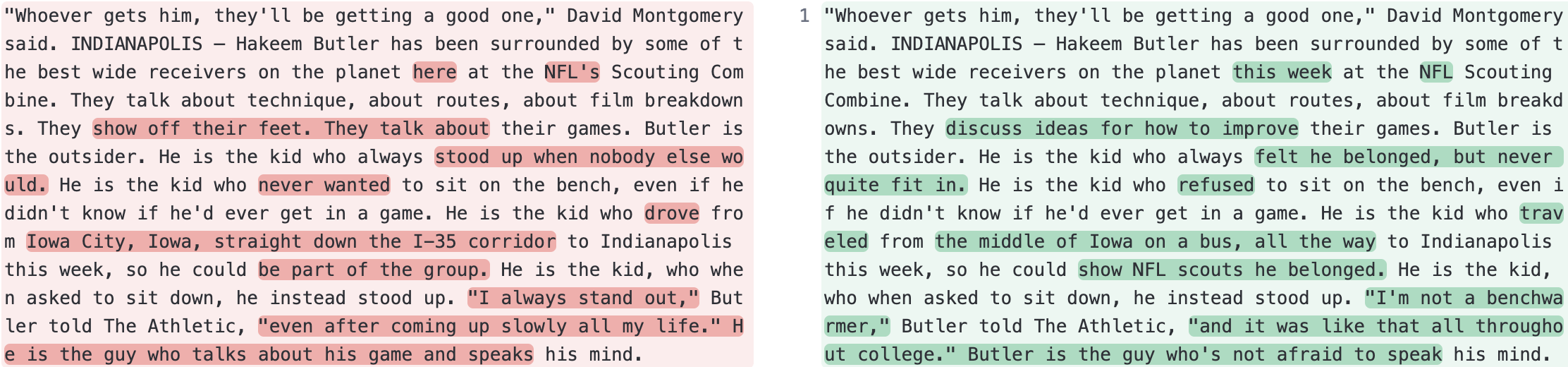}\\
        \caption{The 15-th step, z-score = 2.78, p-value = 0.0027, GPT-4 quality oracle score = 0.00. }
        \label{fig:qual3}
    \end{subfigure}
    \label{fig:intermediate}
\end{figure}

\begin{figure}
    \centering
    \ContinuedFloat    
    \begin{subfigure}[b]{\textwidth}
        \centering
        \includegraphics[width=\textwidth]{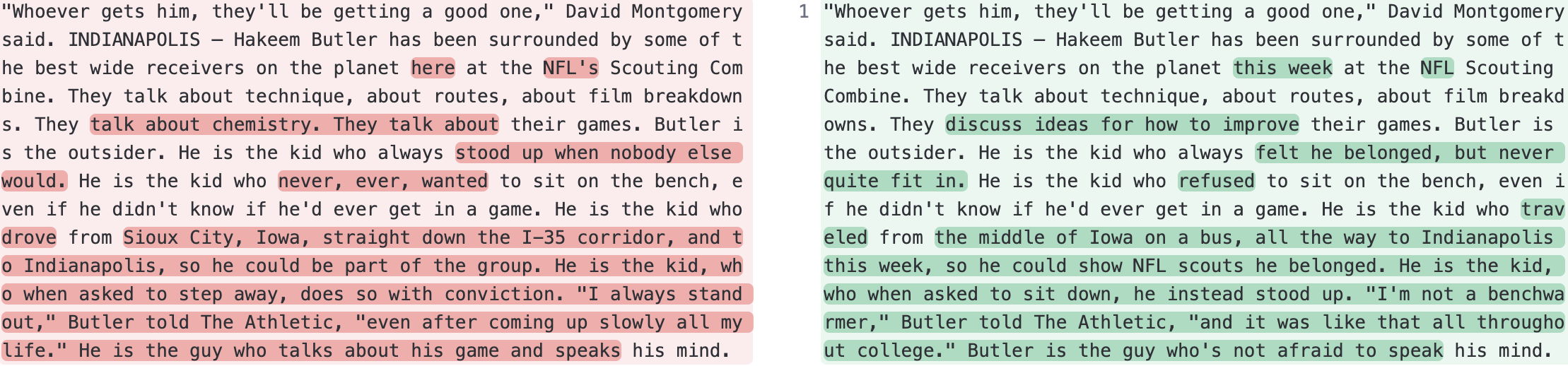}\\
        \caption{The 20-th step, z-score = 1.66, p-value = 0.048, GPT-4 quality oracle score = 0.00.}
        \label{fig:qual4} 
    \end{subfigure}
    \label{fig:intermediate}
\end{figure}

\begin{figure}
    \centering
    \ContinuedFloat 
    \begin{subfigure}[b]{\textwidth}
        \centering
        \includegraphics[width=\textwidth]{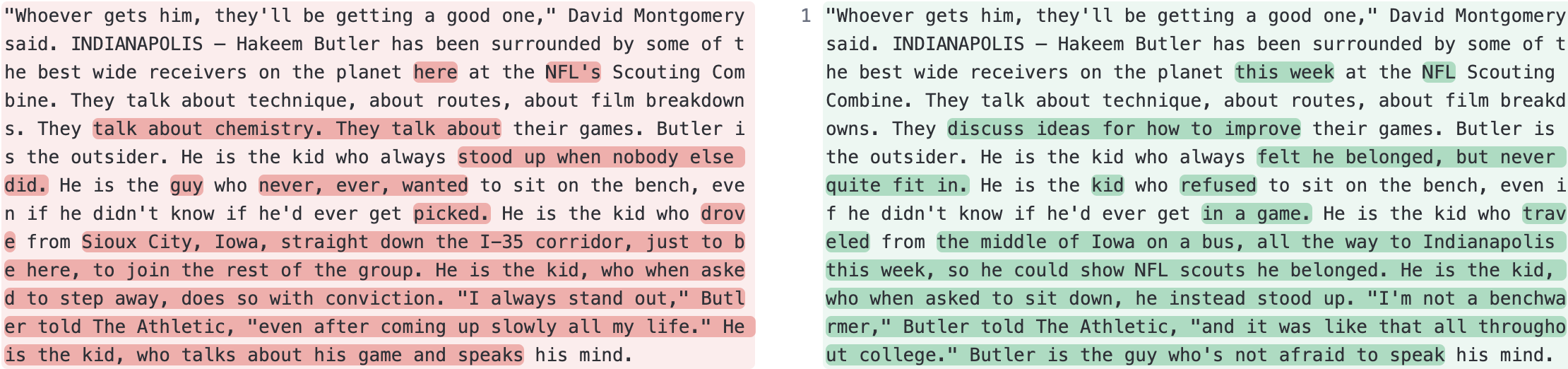}\\
        \caption{The 25-th step, z-score = 1.29, p-value = 0.0993, GPT-4 quality oracle score = 0.0}
        \label{fig:qual5}
    \end{subfigure}
    \label{fig:intermediate}
\end{figure}

\begin{figure}
    \centering
    \ContinuedFloat 
    \begin{subfigure}[b]{\textwidth}
        \centering
        \includegraphics[width=\textwidth]{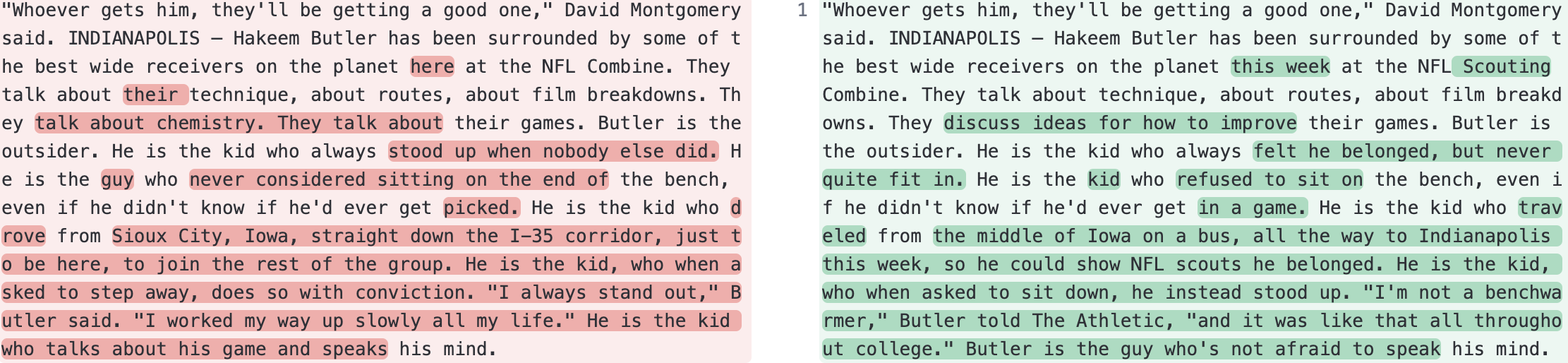}\\
        \caption{The 30-th step, z-score = 0.654, p-value = 0.257, GPT-4 quality oracle score = 0.00.}
        \label{fig:qual6} 
    \end{subfigure}
    \label{fig:intermediate}
\end{figure}

\begin{figure}
    \centering
    \ContinuedFloat 
    \begin{subfigure}[b]{\textwidth}
        \centering
        \includegraphics[width=\textwidth]{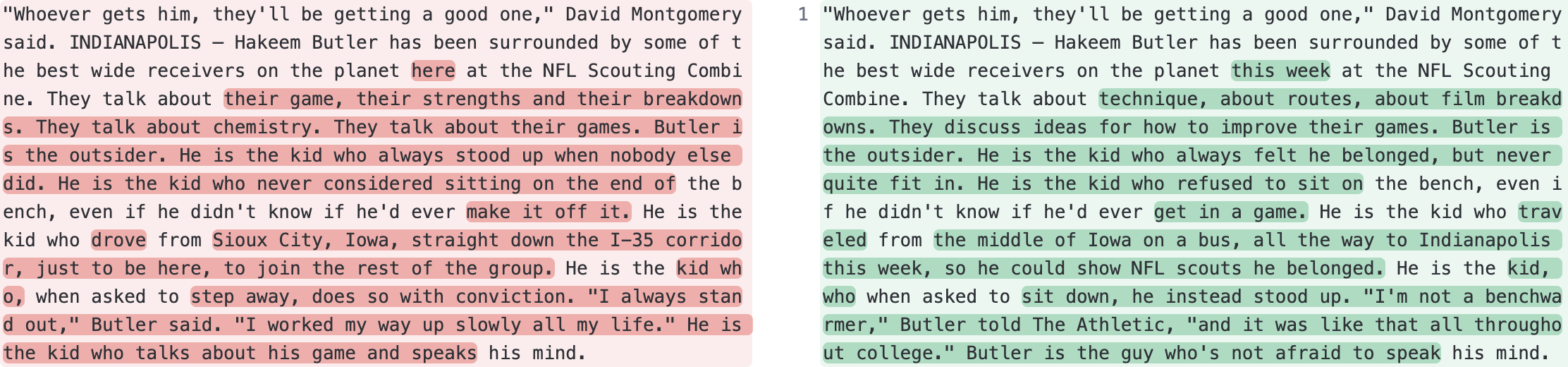}\\
        \caption{The 35-th step, z-score = 0.13, p-value = 0.448, GPT-4 quality oracle score = 0.00.}
        \label{fig:qual7}
    \end{subfigure}
    \label{fig:intermediate}
\end{figure}

\begin{figure}
    \centering
    \ContinuedFloat 
    \begin{subfigure}[b]{\textwidth}
        \centering
        \includegraphics[width=\textwidth]{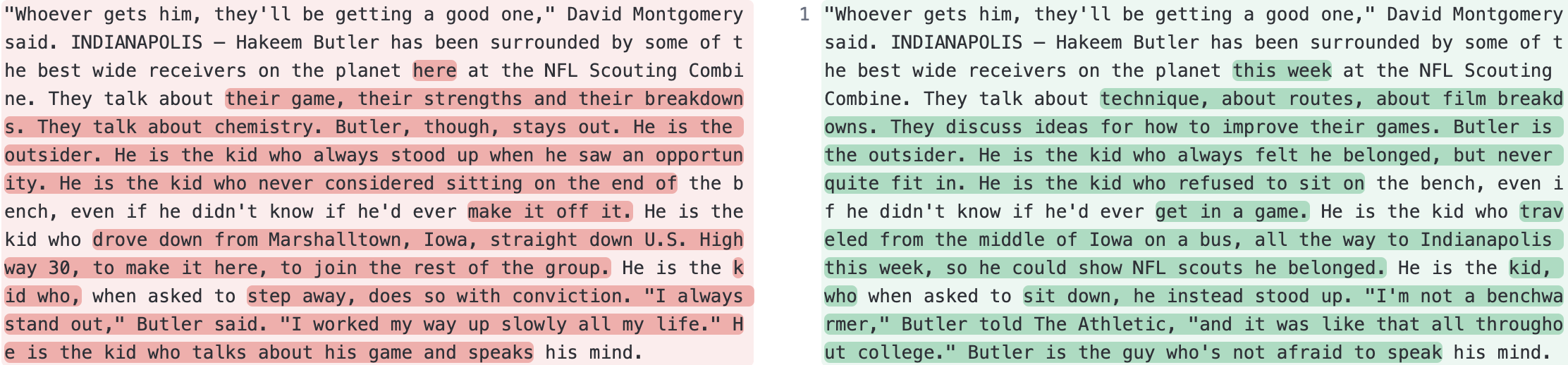}\\
        \caption{The 40-th step, z-score = 0.52, p-value = 0.303, GPT-4 quality oracle score = 0.00.}
        \label{fig:qual8}
    \end{subfigure}
    \label{fig:intermediate}
\end{figure}

\begin{figure}
    \centering
    \ContinuedFloat 
    \begin{subfigure}[b]{\textwidth}
        \centering
        \includegraphics[width=\textwidth]{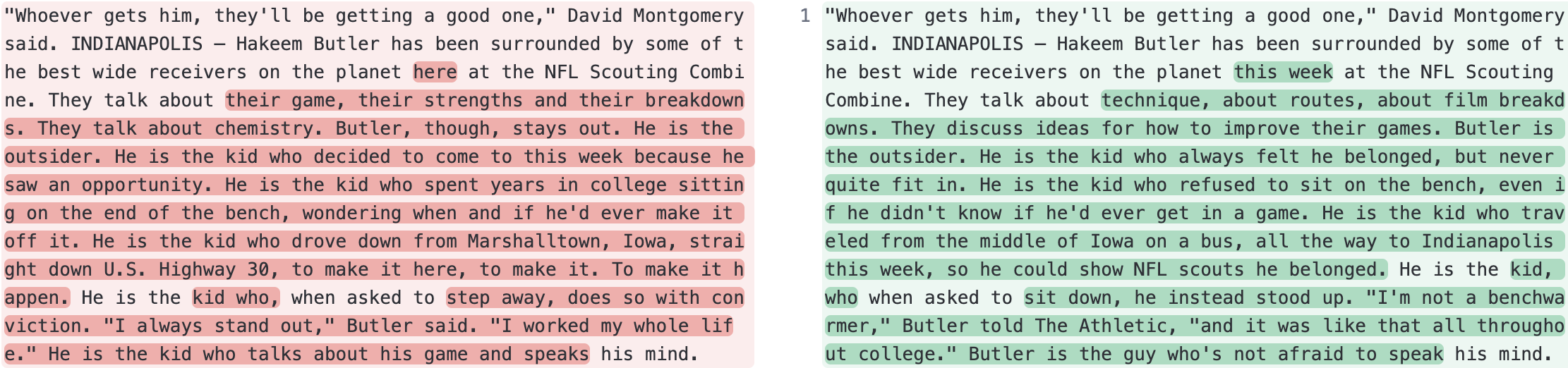}\\
        \caption{The 45-th step, z-score = -0.26, p-value = 0.602, GPT-4 quality oracle score = 0.00.}
        \label{fig:qual9}
    \end{subfigure}
    \label{fig:intermediate}
\end{figure}

\begin{figure}
    \centering
    \ContinuedFloat 
    \begin{subfigure}[b]{\textwidth}
        \centering
        \includegraphics[width=\textwidth]{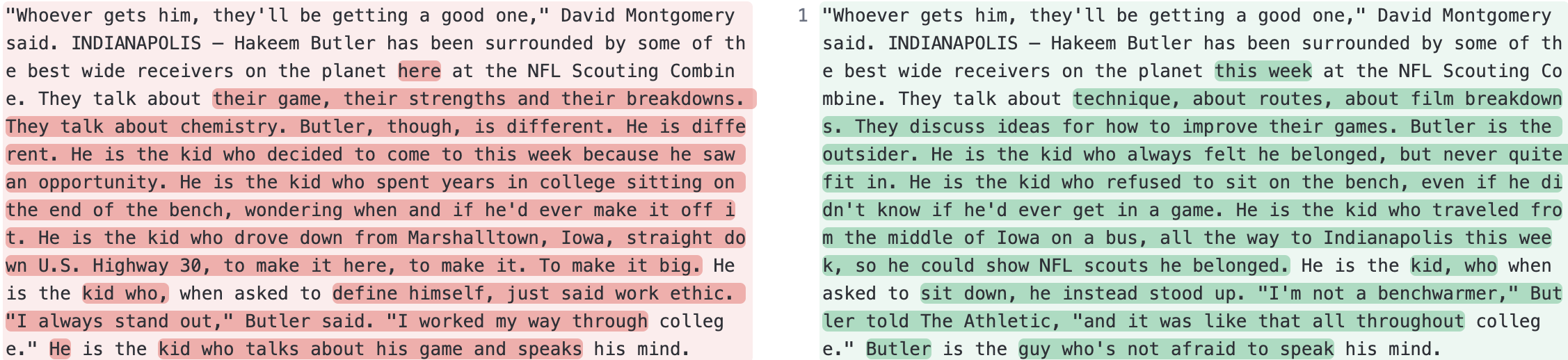}\\
        \caption{The 50-th step, z-score = 0.52, p-value = 0.30, GPT-4 quality oracle score = 0.00.}
        \label{fig:qual10}
    \end{subfigure}
    \label{fig:intermediate}
\end{figure}

\end{document}